\newcommand{\datalogarrow}{\leftarrow}
\newcommand{\prploss}{\mathsf{Libra}\text{-}\mathrm{loss}}
\newcommand{\prplosssubscript}{\mathrm{Lib}}
\newcommand{\biprploss}{\mathsf{Sag}\text{-}\mathrm{loss}}
\newcommand{\biprplosssubscript}{\mathrm{Sag}}
\newcommand{\biprp}{\mathrm{bi}\text{-}\mathrm{PRP}}
\newcommand{\biprps}{\mathrm{bi}\text{-}\mathrm{PRP_s}}
\newcommand{\nllloss}{\mathsf{NLL}\text{-}\mathrm{loss}}
\newcommand{\uloss}{\mathrm{uniform}\text{-}\mathrm{loss}}
\newcommand{\lwsloss}{\mathrm{LWS}\text{-}\mathrm{loss}}
\newcommand{\rcloss}{\mathsf{RC}\text{-}\mathrm{loss}}
\newcommand{\bmeritloss}{\mathrm{\beta}\text{-}\mathrm{merit}\text{-}\mathrm{loss}}
\newcommand{\bmeritlossB}{\mathrm{0.5}\text{-}\mathrm{merit}\text{-}\mathrm{loss}}
\newcommand{\bmerit}{\mathrm{\beta}\mathrm{M}}
\newcommand{\sigmoid}{\sigma}
\newtheorem{postulate}[theorem]{Postulate}
\newtheorem{fact}{Fact}[section]
\newtheorem{claim}[theorem]{Claim}
\newcommand{\pd}[2]{\frac{\partial{#1}}{\partial{#2}}}
\newcommand{\softmax}{\mathop{\mathrm{softmax}}}
\newcommand{\RR}[0]{\mathbb{R}}
\newcommand{\cfunct}{\kappa}
\newcommand{\mt}{\mathbb{MT}}
\newcommand{\m}{\mathbb{M}}
\newcommand{\source}{\mathbb{S}}
\newcommand{\possup}{\mathbb{P}}
\newcommand{\negsup}{\mathbb{N}}
\renewcommand{\vec}{\mathbf}	%
\newcommand{\introparagraph}[1]{\textbf{#1.}} %
\newcommand{\R}{{\mathbb{R}}} %
\newcommand{\smallsection}[1]{\vspace{2mm}\noindent\textbf{#1.}} %
\newcommand{\prp}{\mathrm{PRP}}
\newcommand{\prps}{\mathrm{PRP_s}}
\newcommand{\nll}{\mathrm{NLL}}
\newcommand{\nsamples}{n}
\newcommand{\indim}{d}
\newcommand{\outdim}{m}
\newcommand{\numallowed}{k}
\newcommand{\myinput}{\boldsymbol x}
\newcommand{\mylabel}{\boldsymbol y}
\newcommand{\mylabelscalar}{y}
\newcommand{\truetarget}{\boldsymbol y_{\textrm{true}}}
\newcommand{\trueprob}{p_{\textrm{true}}}
\newcommand{\truelogit}{z_{\textrm{true}}}
\newcommand{\dataset}{D}
\newcommand{\prefix}{{\boldsymbol s}_{\textrm{prefix}}}
\newcommand{\seq}{\vec s}
\newcommand{\seqlen}{\ell}
\newcommand{\model}{\mathcal{M}}
\newcommand{\myinputvar}{\myinput}
\newcommand{\mylabelvar}{\mylabel}
\newcommand{\truetargetvar}{\truetarget}
\newcommand{\outputspace}{\mathcal{Y}}
\newcommand{\distribution}{\mathcal{P}}
\newcommand{\Loss}{\mathcal{L}}
\newcommand{\prate}{r_{Dpool}}
\newcommand{\srate}{r_{Docc}}
\newcommand{\logits}{{\boldsymbol z}}
\newcommand{\logitsscalar}{z}
\newcommand{\probs}{{\boldsymbol p}}
\newcommand{\probsscalar}{{p}}					%
\newcommand{\classifier}{\boldsymbol f}			%
\newcommand{\g}{\boldsymbol g}
\newcommand{\params}{\boldsymbol{\theta}}
\newcommand{\paramsscalar}{\theta}
\Crefname{postulate}{Postulate}{Postulates}
\Crefname{definition}{Definition}{Definitions}
\Crefname{proposition}{Proposition}{Propositions}
\Crefname{claim}{Claim}{Claims}
\Crefname{equation}{Equation}{Equations}
\Crefname{definition}{Definition}{Definitions}
\title{Towards Unbiased Exploration in Partial Label Learning}
\author{\name Zsolt Zombori \email zombori@renyi.hu \\
        \addr Alfr\'{e}d R\'{e}nyi Institute of Mathematics \\
        Budapest, Hungary
        \AND
        \name Agapi Rissaki \email rissaki.agapi@gmail.com \\
        \addr Northeastern University \\
        Boston, USA    
        \AND
        \name Krist\'{o}f Szab\'{o} \email krist.sz13@gmail.com \\
        \addr Alfr\'{e}d R\'{e}nyi Institute of Mathematics \\
        Budapest, Hungary
        \AND
        \name Wolfgang Gatterbauer \email wgatterbauer@northeastern.edu \\
        \addr Northeastern University \\
        Boston, USA    
        \AND
        \name Michael Benedikt \email michael.benedikt@cs.ox.ac.uk \\
        \addr Department of Computer Science \\
        University of Oxford\\
        Oxford, UK
}
\begin{document}

\maketitle

\begin{abstract}%
%auto-ignore
We consider learning a probabilistic classifier from
partially-labelled supervision (inputs denoted with multiple
possibilities) using standard neural architectures with a softmax as
the final layer.  We identify a bias phenomenon that can arise from
the softmax layer in even simple architectures that prevents proper
exploration of alternative options, making the dynamics of gradient
descent overly sensitive to initialization.  We introduce a novel loss
function that allows for unbiased exploration within the space of
alternative outputs.  We give a theoretical justification for our loss
function, and provide an extensive evaluation of its impact on
synthetic data, on standard partially labelled benchmarks and on a
contributed novel benchmark related to an existing rule learning
challenge.

\end{abstract}

\begin{keywords}
    partial label learning, disjunctive supervision, rule learning
\end{keywords}

%auto-ignore
\section{Introduction}

\emph{Partial Label Learning} (PLL) \citep{pllavgbased,pllearly,pllzoubin,pllidentificationbased,provablyconsistentpll,pllleveraging,plltemporal,pllsurvey} deals with learning in the presence of imperfect supervision, 
where training data has a set of labels, 
one of which is the true label.
The framework of PLL  is very general, and a number of well-studied problems, including 
learning in the presence of partially-observable variables,
can be seen as particular instances 
with certain specialized assumptions (e.g.\ that one has a probabilistic model that
constrains the generation of disjunctive outputs). 
Over the last decade a multitude of proposals for PLL have emerged: 
for example, 
methods that treat the set of labels as an ensemble and average over them \citep{pllavgbased}, or 
methods that try
to learn patterns that distinguish noisy labels from true labels \citep{pllzoubin,pllearly,pllanotheridentificationbased}.
We are motivated by the setting where no assumptions are made about how the partial supervision is generated,
but only on the class of functions being learned.

A scenario related to but different from PLL that we refer to as
\emph{Disjunctive Supervision} (DS) is when supervision gives multiple
possible outputs and \emph{any one} of these outputs is acceptable.
One motivating application for this scenario of disjunctive
supervision is in applying modern machine learning techniques to
\emph{rule learning}, where the goal is to learn rules that can be
used to derive target facts from some source facts.  When formulated
as a supervised machine learning problem, an important feature is that
there may be multiple rules that can be used to derive any given
target fact.

\begin{example}[Rule Learning with Disjunctive Supervision]
\label{ex:running}
Assume that we have two database tables, \texttt{Person} and \texttt{Author}. 
A tuple \texttt{Person(x,y,z)} implies there is a person called $x$, who is $y$ years old and belongs to group $z$, and a tuple \texttt{Author(x)} implies that $x$ is an author.
As a simple example of DS, suppose the source facts include
\texttt{Person(alice,45,1)} and
\texttt{Person(bob,34,1)}
and we would like to find mapping rules that derive  target facts
\texttt{Author(alice)}
and \texttt{Author(bob)}.
Two  candidate rules may be
\begin{align*}
  & \texttt{Author}(x) \leftarrow \exists a, t. \texttt{Person}(x, a, t) \\
  & \texttt{Author}(x) \leftarrow \exists a. \texttt{Person}(x, a, 1)
\end{align*}
Above we use Prolog-style syntax, where $x$ is implicitly universally quantified. 
We are interested in neural models that generate
rules of the form above
from such source and target facts.
Either rule above is equally acceptable as an output for deriving the target facts. Thus the target facts can be associated
with the disjunctive label consisting of the set of output rules that can derive them.
\end{example}

%auto-ignore
\begin{example}[Semantic parsing with Disjunctive Supervision] 
\label{ex:semparse} 
As another example of DS, consider a variant of the \emph{semantic parsing} task, inspired
by \cite{guu-etal,clarkcurran}:  A user issues a sequence of commands
in natural language, where each command describes a transformation of
a fixed state (e.g.\ repositioning objects within a scene).  
The goal is to
translate the natural language utterances into 
commands in some fixed programming language. 
A human annotator
provides supervision on training examples, but only at the level of
the observed state sequence.  %
Since several commands can have
the same impact along the entire training and test dataset, there may
be no unique correct answer.
As a simple example, suppose that the parser is trying to learn the
state transition associated with utterance \texttt{Alice moves to the
  left of Bob}. The available supervision only reveals that Alice ends
up at position 1 (which is left of Bob), making it impossible to
distinguish the intended state transition from the one associated with
utterance \texttt{Alice moves to position 1}.

In solving this problem it is natural to learn a sequential model, where a network outputs the
probabilities of a command for a given utterance, conditioned on the
prior sequence of utterances.  Notice that in this task we can
efficiently check whether a command sequence matches the supervision,
by executing it.  
But usually we cannot hope to compute an explicit
list of the acceptable outputs that match the supervision: the number
of possible sequences can be enormous.  
Since we cannot enumerate all acceptable sequences when we want to
compute the aggregate loss over examples, the best we can do is  to
sequentially sample according to our current learned distribution.
\end{example}

Note that there is no difference in the training data between PLL and
DS.  The formal difference concerns the assumptions about the
underlying process, the corresponding task loss, and the evaluation
methodology.  For PLL we assume an unknown joint distribution on the
true function and on the noise model that generates the additional
outputs. Its goal is, as in classical multi-class classification
tasks, to learn the true value. In evaluating a solution for PLL, one
needs a gold standard of correct values for the test data.  In DS,
however, we assume only an unknown process generating sets of
labels. Our optimization problem is to maximize the expected value
that the chosen output is one of the correct ones (see the definition
of $\nllloss$ later in \cref{eq:NLL1} for a more precise task loss).
In evaluating performance for DS, we do not need to have any gold
standard true values, evaluation is based on the partial labelling
(see \Cref{Fig_OverallSetup} for key differences).

\begin{figure}[t]
  \centering
  \begin{subfigure}[b]{0.46\textwidth}  
	  \centering
	  \includegraphics[scale=0.55]{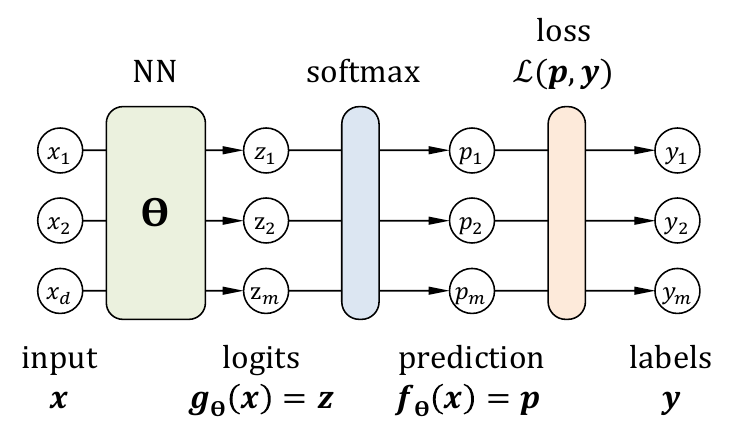}
	  \caption{Neural architectures with softmax}
	  \label{Fig_OverallSetup_a}
  \end{subfigure}
  \begin{subfigure}[b]{0.16\textwidth}  
	  \centering	  
	  \includegraphics[scale=0.55]{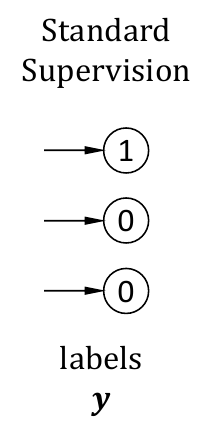}
	  \caption{Multi-class}	  
	  \label{Fig_OverallSetup_b}
  \end{subfigure}
  \begin{subfigure}[b]{0.22\textwidth}  
	  \centering	  
	  \includegraphics[scale=0.55]{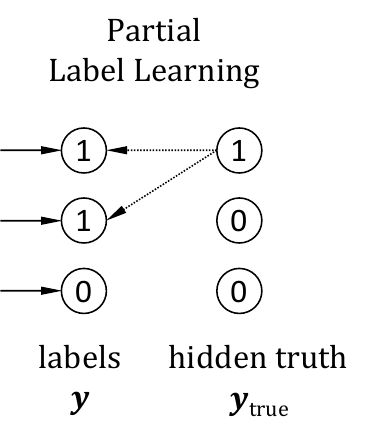}
	  \caption{PLL}	  
	  \label{Fig_OverallSetup_d}
  \end{subfigure}
  \begin{subfigure}[b]{0.13\textwidth}  
	  \centering	  
	  \includegraphics[scale=0.55]{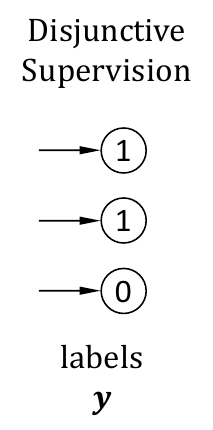}
	  \caption{DS}	  
	  \label{Fig_OverallSetup_c}
  \end{subfigure}

  \caption{We consider standard neural architectures with a softmax layer (i.e.\ the goal is to \emph{predict one output}) whose parameters $\params$
  are learned via supervision from labeled examples 
  $(\myinput, \mylabel)$ via a loss function $\Loss\big(\probs, \mylabel \big)$
  comparing the predictions $\probs$ against the labels $\mylabel$ (a).
  In the standard \emph{multi-class learning} scenario, exactly one correct label is supplied (b).
  In \emph{Partial Label Learning} (PLL), several alternative labels are supplied but only one among them is correct (c).
  In \emph{Disjunctive Supervision} (DS), 
  any one of the 
  different alternative labels is accepted (d).
  }
\label{Fig_OverallSetup}
\end{figure}

With multiple outputs labeled for a given input in the training set,
supervision for PLL/DS also resembles supervision for  multi-label
classification.  The key difference is that PLL/DS seek a function that produce
a single output
output as the answer. %
\Cref{tab:learning_tasks} shows a comparison between the tasks,
while \Cref{ex:learning_tasks} gives an example of each.

\begin{table}[t]
\centering
\small
  \caption{Comparing multi-class classification, 
  multi-label classification, PLL and DS.}
  \label{tab:learning_tasks}
  \begin{tabular}{l l l l}
    {\bf Learning task } & {\bf Supervision} & {\bf Prediction} & {\bf Interpretation} \\
    \toprule
    Multi-class & 1 & 1 & Single true label \\
    Multi-label & multiple & multiple & Several true labels \\
    PLL & multiple & 1 & Single (unknown) true label \\
    DS & multiple & 1 & Any one of the allowed labels is true \\
    \bottomrule  
  \end{tabular}
\end{table}

\begin{example}[Path Learning Scenarios]
\label{ex:learning_tasks}  
Consider a path finding problem in some dangerous environment: given
endpoints $A$ and $B$, we aim to find paths that take us safely from
$A$ to $B$. Standard multi-class learning is when there is a single
safe path between $A$ and $B$ and it is provided for each training
sample.  In PLL too, there is a single safe path for each pair of
endpoints, but it is not known for the training samples, only a set of
paths that contains the single safe one. In multi-label learning,
there are numerous safe paths and we aim to identify all of them. In
DS, there are several safe paths, and a valid model should identify 
one of them.
\end{example}

Given the same supervision, our work will be applicable to both scenarios.
We focus on
classifiers that output a probability distribution over
output space $\outputspace$, by application of a final \emph{softmax}
layer.  Consider partially/disjunctively labelled training samples of the form
$(\myinput, \mylabel)$ where $\mylabel \subseteq \outputspace$ is the
set of acceptable labels for input $\myinput$. The output of a
classifier $\probs = \classifier(\myinput)$ represents a probability
distribution over $\outputspace$. Much  of the literature on
PLL ( e.g. \cite{provablyconsistentpll,guu-etal}), 
uses a variation of the loss:
\begin{align}
\label{eq:NLL1}
	\Loss_{\nll}(\probs, \mylabel) &= - \log \Big(\sum_{i \in \mylabel} \probsscalar_i \Big)
\end{align}

This is simply
the negative log likelihood %
of obtaining a target in $\mylabel$ when
sampling from $\probs$, denoted $\nllloss$. However, for the softmax
architecture, we show that simply training with this loss -- whether in the PLL or DS scenario -- leads to an
undesirable property that some of the \emph{acceptable} labels would
be favoured over others, when trained using gradient descent. In fact,
in the absence of other supervision,
this leads to a \emph{winner-take-all} 
scenario, where
all the probability concentrates on only one of the acceptable labels.

As an alternative, we propose a novel loss function, the $\prploss$ (\cref{def:libraloss}),
whose updates preserve the ratios of the probabilities for the
acceptable labels in the absence of other supervision.
We show that such a loss function is unique up to
composition by a differentiable function 
under some natural technical
conditions. This more balanced loss leads to more stable training and increased success rate in finding a better optimum irrespective of the starting conditions.

\begin{example} 
\label{ex:figure}
  Let us examine a toy problem with $\indim=10$ inputs and
  $\outdim=100$ outputs. We assume a single training sample
  $(\myinput, \{A, B, C\})$, i.e., having $k=3$ allowed outputs.  We
  train a neural network that consists of a single dense layer with
  $100$ neurons and softmax nonlinearity, having $1100$ parameters
  altogether.  \Cref{fig:winner_takes_all_a} shows the behavior of the
  standard $\nllloss$, and \Cref{fig:winner_takes_all_b} our
  $\prploss$, both starting from the same initial condition.
  $\nllloss$ results in a distribution where the allowed output $A$
  with the highest initial probability accumulates all the probability
  mass. In contrast, $\prploss$ yields a balanced update and the ratio
  of the allowed outputs does not change.
\end{example}

\begin{figure}[thb]
  \centering
  \begin{subfigure}[b]{0.47\textwidth}  
	  \centering
	  \includegraphics[width=0.95\linewidth]{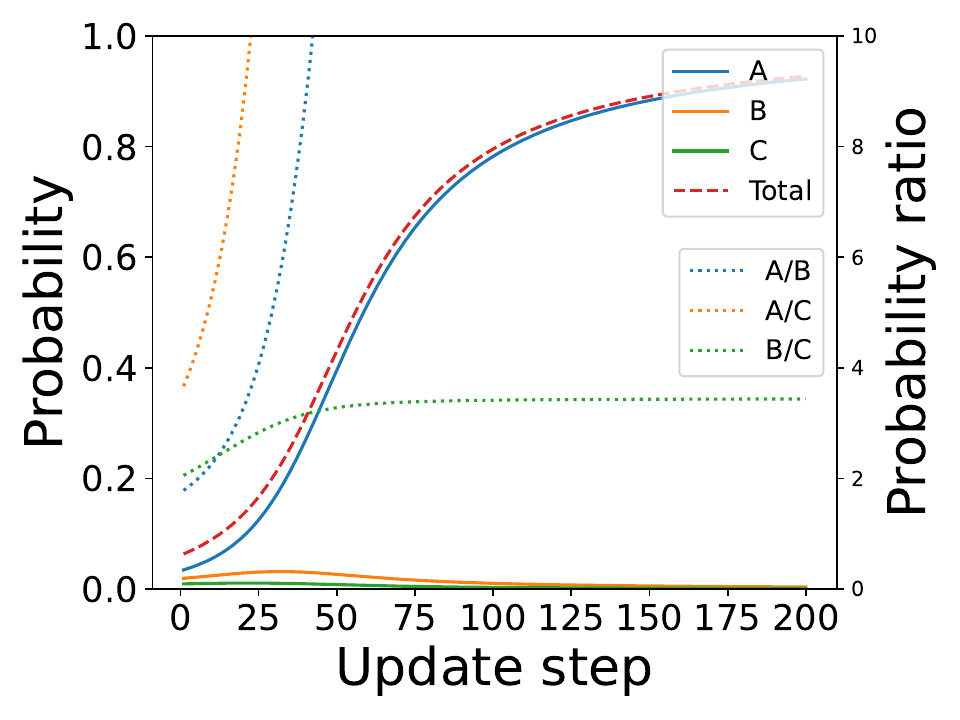}
	  \caption{$\nllloss$}
	  \label{fig:winner_takes_all_a}
  \end{subfigure}
  \begin{subfigure}[b]{0.47\textwidth}  
	  \centering	  
	  \includegraphics[width=0.95\linewidth]{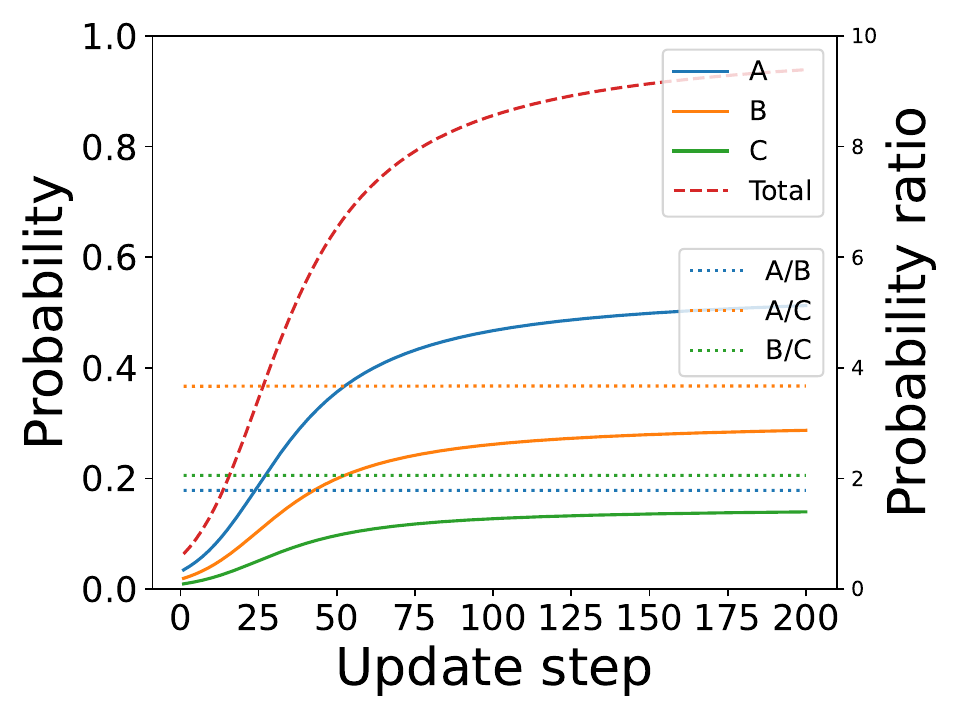}
	  \caption{$\prploss$}	  
	  \label{fig:winner_takes_all_b}
  \end{subfigure}
  \caption{\Cref{ex:figure}:
  Learning curves using a small classifier
    neural network  for
    $\nllloss$ (a) and $\prploss$ (b). Solid
    lines represent probabilities and dotted lines represent
    probability ratios of allowed outputs. 
    The dashed Total line is
	the sum of allowed probabilities, i.e., $A+B+C$.
	Notice how the relative ratio between allowed labels remain constant in (b), while the `winner-takes-all' in (a).
    }
\label{fig:winner_takes_all}
\end{figure}

\smallsection{Contributions}
The paper's contributions are as follows:

\begin{compactitem}

\item We describe a bias phenomenon for architectures ending in a
  softmax layer when learning from partially/disjunctively labelled
  datasets and using $\nllloss$.  We show (\Cref{thm:winner_take_all})
  that it prevents proper exploration of alternatives when optimising
  $\nllloss$.

\item We formulate a property to avoid the observed bias and derive
  from it the $\prploss$ function, whose updates maintain the ratios
  of probabilities for acceptable labels produced by the softmax.  
  We
  show that when loss functions are restricted to depend only on the
  predicted probabilities 
  of \emph{acceptable outputs}, $\prploss$ is uniquely defined (up to
  composition by differentiable functions).
  
\item We consider a stronger property that aims to avoid bias not
  only among acceptable labels, but \emph{also among unacceptable ones} and
  derive from it the $\biprploss$ function. We show that among all
  loss functions that can depend on both acceptable and unacceptable
  probabilities, $\biprploss$ is uniquely defined (again, up to composition by
  differentiable functions).
  
\item We compare several methods from the PLL literature
  experimentially both on synthetic and real-world datasets. These
  experiments demonstrate the performance and accuracy benefits of
  $\prploss$, while results related to $\biprploss$ are not
  conclusive.  In particular, we find that $\prploss$ is more robust
  than other variants when the learning task becomes harder, either
  because there are more labels in the label sets or because some
  distractor labels co-occur very often with the true label.

\item We provide novel DS datasets appropriate for \emph{rule learning} in a supervised context.

\item The entire codebase is available from the project webpage~\citep{webpage}.
\end{compactitem}

\smallsection{Organization}
We overview related work in \Cref{sec:related} and define our problem in \Cref{sec:prelims}.
\Cref{sec:biasprp} provides our key technical contributions: the formalization of the bias problem, and our solution using
probability-preserving loss functions.
\Cref{sec:experiments} is concerned with testing our approach experimentally.
We close with conclusion in \Cref{sec:conc}.
All proofs, as well as some details of the experimental set up  are in 
the Appendix.

%auto-ignore
\section{Related work} \label{sec:related}

\introparagraph{Partial Label Learning (PLL)}
Partial Label Learning has by now an extensive literature. See, for
example \cite{pllsurvey} for a recent survey.
One common approach is to dampen the loss
proportionally to an average of the overlap between the output
probability distribution and each acceptable label, possibly also
including a component that enhances the loss proportionally to an
average overlap with the unacceptable outputs. This approach has many
variations and goes under the heading of ``average-based methods''
\citep{pllleveraging,pllavgbased}. 
Another family of approaches
attempts to learn the noise model in combination with learning the
prediction.  These are sometimes referred to as
``identification-based'' methods
\citep{pllidentificationbased,pllanotheridentificationbased}. They
might use a strategy similar to \emph{expectation maximization} to alternate
between refining the model of the most likely true labels and
exploiting the model to make predictions.

We mentioned that there are other scenarios with the same weak
supervision as in PLL, but different assumptions: the multiple outputs
do not represent one true output corrupted with noise; rather they
represent multiple possible outputs and we are indifferent to which
one is selected (\cref{Fig_OverallSetup}). We call this scenario \emph{Disjunctive Supervision}
(DS). It has appeared in prior work,
for example in the literature on semantic parsing
\citep{guu-etal}. However, its connection to and distinction from PLL
have not yet been investigated. We treat the two setups in a unified
framework and show that the same optimization methods are applicable for
both, while requiring different evaluation protocols.

Many problems in the machine learning literature can be recast as
special constrained cases of PLL or DS.  For example, if one has a model
with latent variables, such as a \emph{Hidden Markov Model}~\citep{hmm}, any
value of the output can be generated by multiple valuations of the
hidden variables, thus the output can be considered partial
supervision over the possible latent variable values.
The underlying probabilistic
model constrains how partial supervision can be generated.  In contrast,  here we will have a
model on the underlying function class being learned, but no assumption on how the partial supervision is 
generated: thus prior techniques from the latent variable literature will not be applicable.

\introparagraph{Optimization}
In terms of optimization, our $\prploss$ function can be viewed as a
form of \emph{entropic regularization}~\citep{entropic}, 
with the notable difference
that we apply regularization to a truncated distribution of the output
that is different for each datapoint.

\introparagraph{Rule learning}
One of the applications of our loss function in the DS setup is in the
setting of a neural approach to rule learning.
Rule learning has been studied from both
a theoretical and practical perspective for many decades. The theory
includes complexity bounds within a number of learning models.
An
example is the complexity of finding a Horn sentence that entails 
a give set of statements, while contradicting (or merely failing to entail) another set of
sentences \citep{learninterp}. This problem has also been considered in the presence of a background theory
$\Sigma$: thus entailment is with respect to $\Sigma$.
Our setting is of this form, where the background theory consists of ground facts.
Like most variations of the problem,
this is known to be intractable even when the size of the rule bodies is
fixed.  Intuitively, one has to guess a rule or rules that fit the
data, and then verify via evaluating the body of the guessed clause.
For a formalization of this intuition, see the $\Sigma^p_2$-completeness
results in \cite{gottlobilphard}.

\introparagraph{ML for Rule Learning}
One response to the combinatorial hardness of rule learning is to consider a smooth
semantics for logical rules, aiming to make the loss amenable to neural methods.
An  example of this approach is Neural Theorem Proving~\citep{uclrules,ctp}, which looks for candidate
rules of a shape constrained by a template.
Atoms are scored using a smooth variant of unification, based on
a parameterized embedding of facts in Euclidean space.
 Scores are aggregated using the MIN function within a rule  and MAX across rules.
The score of a rule and the parameters of the embedding are then optimized
via
gradient descent. The MIN/MAX aggregation results in extremely
sparse gradients, leading to computational difficulties.
In addition, the sharpness of MIN/MAX boundaries makes it difficult to
move between alternatives, resulting in a ``closest-take-all''
behaviour, 
not unlike the ``winner-take-all'' behaviour of the
$\nllloss$, presented in our paper.  In \cite{imperialrules} each
possible rule is associated with a weight, and k-step forward
reasoning is performed to compute a score for supervised facts. When
aggregating scores of alternative proofs, the authors note that MAX
aggregation adversely affects gradient flow and use the probabilistic
sum $f_{\mathrm{agg}}(x,y) = x+y - x y$ instead. 
This makes the gradients
denser, but does not guarantee balanced gradients among alternatives.
The $\prploss$ function presented in this paper is designed
specifically to make the transition between alternative derivations as
smooth as possible, allowing better exploration.

\introparagraph{Symbolic supervision in ML}
Learning logical rules represents one application of our framework,
but there is a broader connection between DS and logic, in that
disjunctive supervision can be thought of as a special case of
symbolic supervision, where the supervision is given wholly or in part
by constraints.  The set up contrasts with much prior work on neuro-symbolic methods
\cite{semanticloss,neuroreg,harnessing,embeddingsymbolic}, which
focus on enforcing semantic information given by logical
constraints that are known to hold globally across all inputs,
including those outside the training set.
This prior work deals with logical constraints that are more complex than disjunctions, and
the loss functions that are introduced  (e.g.\ in \cite{semanticloss,neuroreg})
are themselves hard to compute in the worst case.
\cite{neuroreg} deals with a regularization term which is constraint-aware,
analogous to our loss function.
But entropy is being minimized 
to achieve sharper decision boundaries,
while in our case it is being maximized
to enhance exploration.

\introparagraph{Rule Learning for aligning heterogeneous data sources}
Our rule learning experiments are based on the RODI benchmark~\citep{rodi},
 aimed at comparing systems for aligning
relational sources with a target schema. Several such
systems are evaluated in \cite{rodi}.   However, the systems do not make use of
supervision, looking only at textual and structural similarities
between source and target. In contrast, we focus on learning the
alignment from supervision.
Nevertheless, we note that the success percentage of all
examined systems on the RODI challenges
ranges between $3-50\%$, much
lower than ours (see \Cref{subsec:rulelearningexp}). This highlights the
benefit of approaching the alignment problem via supervised machine
learning.

%auto-ignore
\section{Preliminaries and Problem Statement}
\label{sec:prelims}

Supervised classification is the task of learning a function that
conforms to a given set of samples $\dataset
=\big\{\big(\myinput^{(j)},
\mylabel^{(j)}\big)\big\}_{j=1}^{\nsamples}$ where $\myinput \in \R^d$
is the input and $\mylabel \in \{0,1\}^\outdim$ the one-hot encoded
desired output (i.e.\ exactly one entry is 1).  In \emph{Partial Label
Learning} (PLL) and \emph{Disjunctive Supervision} (DS), however,
there can be more than one allowed output, represented as $\mylabel$
having multiple entries being 1.  The difference between PLL
and DS is that the former assumes one single correct output among the
given 1 entries that is unknown at training time (thus the labels are
uncertain), while the latter assumes that each of the  entries are equally correct:
thus the labels are not uncertain but ``disjunctive''.  We at times
overload the notation and define $\mylabel$ as the set of allowed
labels as indexed by the binary vector.\footnote{We use $1$-indexing: For example $\mylabel = (1,
0, 1)$ has 2 acceptable outputs and could equally be written as
$\mylabel = \{1, 3\}$.}  We use $\numallowed$ to denote the number of
acceptable outputs associated with label $\mylabel$ in supervision.  We use
$\truetarget$ 
to denote the one-hot encoded unknown correct
output, in the setting of  PLL.

\smallsection{Partial Label Learning (PLL) vs.\ Disjunctive Supervision (DS)} 
PLL assumes a joint data
generating distribution $\distribution(\myinputvar, \truetargetvar,
\mylabelvar)$ 
on inputs $\myinputvar \in \R^d$, true one-hot outputs $\truetargetvar \in \{0,1\}^\outdim$,
and partial supervision $\mylabelvar \in \{0,1\}^\outdim$. 
In other words, the observed labels $\mylabelvar$ are a distorted representation of 
the true labels $\truetargetvar$ and the former always includes the later.
The goal is to learn a
function $\classifier$ in a given target class 
that maximizes:
\[
\mathbb{E}_{\distribution(\myinputvar,\truetargetvar,\mylabelvar)}[P(\classifier(\myinput)=\truetarget)] .
\] 
DS makes no assumption
about a single true output $\truetarget$. It assumes only a joint data
generating distribution $\distribution(\myinputvar, \mylabelvar)$ 
on
inputs $\myinputvar$ and partial supervision $\mylabelvar$. Our target
is to learn a function $\classifier$ that maximizes:
\[
\mathbb{E}_{\distribution(\myinputvar,\mylabelvar)}[P(\classifier(\myinput) \in \mylabel)] .
\]
As in supervised learning, we do not know the PLL/DS distribution
$\distribution$: instead, we assume a finite $\dataset
=\{(\myinput^{(j)}, \mylabel^{(j)})\}_{j=1}^{\nsamples}$ sampled
uniformly from $\distribution$ and we focus on optimising performance
on this set.  Our learning target class will be a statistical model
$\probs = \classifier_{\params}(\myinput)=\softmax
(\g_{\params}(\myinput))$, with input $\myinput$ and parameters
$\params \in \R^t$, interpreting its output as a probability
distribution over the output space.  The function $\g$ gives the
unnormalized output, called \emph{logits}, which we denote as
$\logits$.

Since the supervision is indistinguishable for PLL and DS
(only its interpretation),
technically, the same optimization methods are
applicable, and most of our
theoretical claims are relevant in both scenarios.\footnote{This, however, does not necessarily mean that the
same method is optimal for both problem classes.}

We aim to find the \emph{Maximum Likelihood Estimate (MLE)}, which maximizes
the joint probability of the observed data. For DS, this means the
probability, given $\myinput$, of observing an element $o$ such that
$o \in \mylabel$. While for PLL, it is the conditional probability
given $\myinput$ of observing an $o$ with $o \in \truetarget$.  For
computational reasons, one usually minimizes the negative logarithm of
this value:
\begin{align}
 -\log \bigg( \prod_{j=1}^{\nsamples}\classifier_{\params}(\myinput^{(j)})
	\cdot \mylabel^{(j)} \bigg) 
	&= - \sum_{j=1}^{\nsamples} \log \left(\probs^{(j)} \cdot
	\mylabel^{(j)} \right) 
	& \textrm{(DS)}
	\label{eq:mle_ds} \\
 - \log \bigg( \prod_{j=1}^{\nsamples}\classifier_{\params}(\myinput^{(j)}) \cdot
	\truetarget^{(j)} \bigg) 
	& = - \sum_{j=1}^{\nsamples} \log \left(\probs^{(j)} \cdot
	\truetarget^{(j)} \right)
	& \textrm{(PLL)}
	\label{eq:mle_pll}
\end{align}

\Cref{eq:mle_pll} cannot be optimized directly as $\truetarget$ is not
known for training samples. However, in the absence of any prior
preference over acceptable labels (i.e., assuming each have the same
probability of being the true one) the expected value of
\Cref{eq:mle_pll} only differs from \Cref{eq:mle_ds} by a
multiplicative constant.\footnote{The multiplicative constant is the
number of allowed outputs $\numallowed$.} Thus, we also consider
\Cref{eq:mle_ds} as a natural measure in the setting of PLL as
well. This measure is known as the \emph{Negative Logarithm of the
Likelihood}~\citep{dlbook} function,
which yields the following samplewise \emph{$\nllloss$}:
\begin{align}
	\label{eq:NLL}
	\Loss_{\nll}\big(\probs, \mylabel \big) &= - \log \left(\probs \cdot
	\mylabel \right) = - \log \bigg( \sum_{i=1}^{\outdim} \probsscalar_i
	\mylabelscalar_i \bigg)
\end{align}

The above formulation of the $\nllloss$ is a direct generalization of
the classical case with a single allowed output.
We use the $\nllloss$ as a baseline for optimization and argue that it
is not an ideal choice 
for PLL/DS due to its sensitivity to
initial configuration. The same applies to most identification-based
methods, 
such as $\lwsloss$ and $\rcloss$ (defined later in
~\Cref{subsec:competitors}).

\smallsection{Extension to sequential outputs} 
PLL and DS have
important applications in which the output space cannot be effectively
modeled as a set of unstructured objects. For instance, in the path
finding problem of \Cref{ex:learning_tasks}, there can be a huge
number of paths (even unbounded) and we may want our model to
generalise to unseen paths, not just to unseen endpoints. In such
scenarios, it is not tractable to explicitly compute a distribution
over the entire output space, as a standard classifier model would
do. \emph{Autoregressive models} provide a solution for such problems:
instead of producing the output one-shot, they build it incrementally:
given an input and a partially constructed output, an autoregressive
model predicts the next component of the output. Consequently, one has
to repeatedly evaluate such models to obtain the final prediction.

\begin{example}[\cref{ex:learning_tasks} continued]
  Returning to the path finding problem, each output can naturally be
  modeled as a sequence of atomic choices coming from a small fixed
  set, e.g.\ \{``north'', ``west'', ``south'', ``east''\}.
\end{example}

We extend PLL/DS for problems where outputs are represented as
sequences over a finite alphabet $\Sigma$ of $\outdim$
elements. Following the terminology of language modeling, we refer to
the elements of the alphabet as \emph{tokens}.  Our learning target
class will be a statistical model $\probs =
\classifier_{\params}(\myinput,\prefix)=\softmax
(\g_{\params}(\myinput,\prefix))$.  Thus, besides input $\myinput$,
the model receives an extra argument $\prefix \in \Sigma^*$ which is a
sequence of tokens from $\Sigma$.  Notice that while the size of
alphabet $\outdim$ is finite, the length of the sequences is not
necessarily so.  The output is a probability distribution over
$\Sigma$, interpreted as the distribution of the next token of the
output following $\prefix$.  Given input $\myinput$ and sequence $\seq
= s_1 \dots s_{\seqlen} \in \Sigma^*$ the model $\classifier$ can be
used to compute the predicted probability of $\seq$
as
\begin{equation}
  \label{eq:seqprob}
  P_{\params}(\seq | \myinput) =
  \prod_{i=1}^{\seqlen} \classifier_{\params}(\myinput, (s_1 \dots s_{i-1}))_{I(s_i)}
\end{equation}
where $I(s_i)$ refers to the index in the output of $\classifier$
that corresponds to token $s_i$.
Let $\seq^{(1)}, \seq^{(2)} \dots$ be an arbitrary, fixed ordering of
all sequences over $\Sigma$. 
Given input $\myinput$, let $\probs$
represent the (possibly infinite) 
vector of model predicted
probabilities, i.e.,
$$
\probsscalar_i = P_{\params}(\seq^{(i)} | \myinput)
$$

In dataset $\dataset =\{(\myinput^{(j)},
\mylabel^{(j)})\}_{j=1}^{\nsamples}$ with sequential output space,
$\mylabel^{(j)}$ is an indicator vector over finite sequences.  Note
that even if $\Sigma$ is finite, the set of all sequences may be
infinite. 
We restrict $\mylabel$ to have only finitely many $1$'s, so
that it is finitely representable. This way, the predicted
probabilities of allowed sequences can be computed according to
\Cref{eq:seqprob}. Hence, any method that directly optimises only the
probabilities of allowed outputs generalizes directly to the
sequential case. 
In particular, minimizing the $\nllloss$ is also
applicable and yields the maximum likelihood estimate for DS and a
natural proxy loss for PLL. Note, however, that the
probabilities of all the disallowed outputs cannot be effectively
computed, ruling out some optimization methods.

\Cref{tab:notation} summarizes the notation used throughout the paper.

\begin{table}[t]
  \caption{Summary of the notation used in the paper}
  \label{tab:notation}
\centering
\small
\begin{tabularx}{\linewidth}{@{\hspace{0pt}} >{$}l<{$} @{\hspace{2mm}}X@{}} %
  \hline
  \textrm{Symbol}		& Definition 	\\
  \hline
  \hline
  (\myinput^{(j)}, \mylabel^{(j)})	
  & Example training point with $\myinput \in \R^{\indim}$ and $\mylabel \in \{0,1\}^{\outdim}$ \\
  \truetarget^{(i)}				& Unknown one-hot true label $\truetarget^{(i)} \subseteq \mylabel^{(i)}$ with $|\truetarget^{(i)}|=1$ \\	
  \params					& Parameters to learn \\
  \nsamples & Number of samples \\
  \indim & Input dimension \\
  \outdim & Number of outputs \\
  \numallowed & Number of 1's in label $\mylabel$ \\
  \seqlen		& Length of sequential output \\	
  \distribution & Data generating distribution \\
  \dataset & Finite dataset, sampled uniformly from $\distribution$ \\
  \Loss & Loss function \\
  \g: \R^{\indim} \rightarrow \R^{\outdim} & Logit function \\
  \classifier: \R^{\indim} \rightarrow [0,1]^{\outdim} & Probabilistic classifier function \\	
  \logits & Unnormalized model prediction (``logits''): output of $\g$ \\
  \probs & Normalized model prediction (``probabilities''): output of $\classifier$ \\
  o & Element from the output space \\
  \Sigma & finite alphabet of $\outdim$ elements for in the sequential setup \\
  \hline
\end{tabularx}

\end{table}

%auto-ignore
\section{Addressing bias in partial label learning} \label{sec:biasprp}

We prove that, even for simple architectures, standard optimization
based on a direct generalization of the MLE, 
i.e.\ the $\nllloss$ in \cref{eq:NLL},
leads to biased
``winner-take-all'' learning. We then introduce our main contribution,
a novel property of loss functions, the $\prp$ property, which
formalizes the absence of learning bias. 
We provide a loss function,
the $\prploss$, that possesses this property, and also show that it is the unique loss function
satisfying the $\prp$ property, up to composition by differentiable
functions.
Next, we relate the $\prploss$ to entropy regularization~\citep{confidence_penalty}
and the $\nllloss$. Then, we introduce the $\biprp$ 
property, an
extension of the $\prp$ property and provide an analogous
characterization theorem based on a loss function called $\biprploss$.
We end the section with practical considerations.

\subsection{Stability of Probability Ratios of Allowed Outputs During Training} \label{subsec:stability}

Given a set of samples $\dataset$ and a function $\classifier$, let
$\classifier_{|\dataset}$ denote the function with domain restricted
to $\{ \myinput \mid (\myinput, \mylabel) \in \dataset \}$.  When the
supervision is \emph{total}, i.e., each $\myinput$ corresponds to a
single output, and $\mylabel = \truetarget$ is one-hot, 
then there is a
single optimal function $\classifier_{|\dataset}^*$ that fits
perfectly to $\dataset$, namely when
$\classifier_{|\dataset}^*(\myinput) = \mylabel$
for each
$(\myinput, \mylabel) \in \dataset$.\footnote{Note, however, that the
same optimal function $\classifier_{|\dataset}^*$ can have multiple
realizations in terms of $\params$.}  This does not hold when
supervision is partial/disjunctive: given that we have no direct
information (PLL) or preference (DS) about the true label
$\truetarget$, any output distribution that places all the probability
mass over the acceptable outputs can be considered as perfect fitting
to the training signal.  Other constraints -- such as regularization,
interaction among training points, or task-specific requirements --
might restrict this set of optima. However, we argue that it is very
important to avoid any prior bias in the learning algorithm towards
any of these optimal distributions.  Let $\probsscalar_i =
\classifier_{\params}(\myinput)_i$ denote the probability of the
$i^{\textrm{th}}$ dimension of the output distribution.  
The unwanted bias that we target  in this paper 
is ``winner-take-all''. That is if $(\myinput, \mylabel) \in
\dataset$ with $\mylabelscalar_i=\mylabelscalar_j=1$ and
$\probsscalar_i > \probsscalar_j$ at initialization, then the
optimization converges to $\probsscalar_i = 1$ and $\probsscalar_j=0$.
To see why such behaviour is undesirable, consider \cref{ex:3outputs}.

\begin{example}
\label{ex:3outputs}
Consider a problem with
$\outdim=3$ outputs: $A$, $B$ and $C$.   
Assume two samples with the same input $\myinput$: $(\myinput, \{A, B\})$ and
$(\myinput, \{A, C\})$.\footnote{Notice, that for this example we will use the set notation for partial supervision.}
Next, assume that at initialization we have $\classifier(\myinput)_B > \classifier(\myinput)_A$ and $\classifier(\myinput)_C > \classifier(\myinput)_A$.
Then the signals from the two samples work against each other if using $\nllloss$
trying to
increase the probability of $B$ and $C$, respectively, instead of
finding the joint optimum in $A$. This example is analyzed
in greater depth in \Cref{ex:fig4}, as well as in \Cref{fig:vectorfield_1AB_1AC,fig:winner_takes_all_interaction}
within \cref{sec:increasingmodelcomp}.
\end{example}

In general, any randomized initialization in the parameters can lead
to an initial bias among the outputs,  which may prevent the expected
interaction among different points.  Ideally, we would like the model
update operation to preserve an \emph{invariance property}: as we
increase the aggregate probability of a set of values, the
distribution within the set should not change:

\begin{postulate}[Ratio preservation]
\label{post:prp} 
For one training point with multiple allowed outputs, a single
optimization step that updates model parameters $\params$ should
preserve the \emph{ratio of probabilities} of the allowed outputs.
\end{postulate}

We formalize this for a parameterized distribution
$\classifier_{\params}$ with parameters $\params$. We assume that
training is done via \emph{gradient descent}, referred to as
\emph{Gradient-update}:

\begin{definition}[Gradient-update]
  \label{def:gupdate}
Given a parameter vector $\params$, an update operation is called a \emph{Gradient-update} if there exists some loss function $\Loss$ 
and \emph{learning rate $\lambda > 0$} such that the update on the $i^{\textrm{th}}$
parameter is
\[
	\paramsscalar'_{i} := \paramsscalar_i - \lambda \pd{\Loss}{\paramsscalar_i}
\]
\end{definition}

\subsection{Negative Log Likelihood ($\nllloss$) and Bias}
We now show that gradient descent on the $\nllloss$ from \cref{eq:NLL}
leads to a
``winner-take-all'' effect in the presence of partial supervision.
The intuitive explanation for this is that the easiest way to decrease
the $\nllloss$ is to increase the greatest probability:
the gradient of the $\nllloss$ that the logits receive
(through the softmax layer) is proportional to the output
probabilities.

Our formal results apply exactly to a simple class of classifiers
called \emph{softmax regression}~\citep{Tsoumakas2007}.

\begin{definition}[Softmax Regression]
  We refer to \emph{softmax regression} as the parametric model $\probs = \classifier_{\params}(\myinput)=\softmax (\params \cdot
  \myinput)$.
\end{definition}

\begin{restatable}[Winner-take-all]{theorem}{thmwta}
  \label{thm:winner_take_all}
  Consider the softmax regression model
  $\classifier_{\params}(\myinput)$. Fix a datapoint $(\myinput, \mylabel)$, and let $J$ be
  the set of acceptable outputs such that for every $j\in J$,
  $\probsscalar_j=\classifier_{\params}(x)_j$ is maximal among the allowed output
  probabilities. Then the Gradient-update 
  operation with $\Loss=\nllloss$ from \cref{eq:NLL}
  yields
  a limit distribution
  $$
  \probsscalar_j = \begin{cases}
    \frac1{|J|} 	~&\text{if} ~ j\in J	\\ 
    0 			~&\text{otherwise}
  \end{cases}
  $$
\end{restatable}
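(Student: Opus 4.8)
The plan is to reduce the Gradient-update on the parameters $\params$ to a dynamics on the logit vector $\logits=\params\cdot\myinput$ and analyze the latter. Writing $S=\sum_{i\in\mylabel}\probsscalar_i$ for the total allowed mass, a direct computation through the softmax layer gives, for each output $l$,
\[
\pd{\Loss_{\nll}}{\logitsscalar_l} = \probsscalar_l\Big(1 - \tfrac{1}{S}\,\mathbf{1}[l\in\mylabel]\Big),
\]
so that $\pd{\Loss_{\nll}}{\paramsscalar_{lk}} = \pd{\Loss_{\nll}}{\logitsscalar_l}\,\myinput_k$. Hence a single Gradient-update of $\params$ (\Cref{def:gupdate}) induces the logit update $\logitsscalar'_l = \logitsscalar_l - \eta\,\pd{\Loss_{\nll}}{\logitsscalar_l}$ with a \emph{common} effective rate $\eta=\lambda\|\myinput\|^2>0$ (assuming $\myinput\neq 0$). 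First I would record the sign pattern: for allowed outputs the derivative is nonpositive (since $S\le 1$), so their logits increase, while unallowed logits decrease. Because only logit \emph{differences} matter for $\probs$, it is cleanest to carry out the analysis for the associated gradient flow $\dot\logitsscalar_l = -\pd{\Loss_{\nll}}{\logitsscalar_l}$, deferring the discrete step size to the end.

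The next step is an invariance observation. Since $\pd{\Loss_{\nll}}{\logitsscalar_l}$ depends on $l$ only through $\probsscalar_l$ and the membership $l\in\mylabel$, any two allowed outputs sharing the same logit receive identical updates; as all $j\in J$ start with the common maximal allowed logit, they remain equal for all time, so $\probsscalar_j$ is the same for every $j\in J$ throughout training. I would then establish monotone separation by tracking pairwise logit gaps: for $i\in J$ and $j\in\mylabel\setminus J$ one finds $\frac{d}{dt}(\logitsscalar_i-\logitsscalar_j)=\frac{1-S}{S}(\probsscalar_i-\probsscalar_j)>0$, and for $i\in J$, $l\notin\mylabel$ one finds $\frac{d}{dt}(\logitsscalar_i-\logitsscalar_l)=\frac{1-S}{S}\probsscalar_i+\probsscalar_l>0$. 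Thus the ratios $\probsscalar_j/\probsscalar_i$ and $\probsscalar_l/\probsscalar_i$ are strictly decreasing: the maximal allowed outputs dominate both the remaining allowed outputs and all unallowed ones.

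It remains to show these ratios actually vanish. I would organize this by passing to the conditional distribution $q_i=\probsscalar_i/S$ on $\mylabel$ and rescaling time by $d\tau=(1-S)\,dt$, under which the gaps obey the autonomous replicator equation $\frac{d}{d\tau}\log(q_i/q_j)=q_i-q_j$; a Lyapunov argument (e.g.\ monitoring $\sum_{j\notin J}q_j$) then drives $q$ to the uniform distribution on $J$ as $\tau\to\infty$. In parallel, $S\to 1$: the loss $-\log S$ is nonincreasing and bounded below by $0$, and since the gradient cannot vanish at finite logits (for $l\notin\mylabel$ the component $\probsscalar_l$ is strictly positive), the flow drives the loss to its infimum $0$, i.e.\ $S\to 1$. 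Combining $S\to 1$ with $q\to\mathrm{Unif}(J)$ yields $\probsscalar_j=Sq_j\to\frac{1}{|J|}$ for $j\in J$ and $\to 0$ otherwise, which is the claimed limit.

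The main obstacle is making these two limits compatible. The within-$\mylabel$ separation runs on the clock $\tau=\int_0^t(1-S)\,ds$, so I must rule out that $S\to 1$ so quickly that $\tau$ stays bounded and the conditional distribution freezes before reaching $\mathrm{Unif}(J)$. The key estimate is that $S$ approaches $1$ only polynomially: using $\sum_{i\in\mylabel}\probsscalar_i^2\le S^2$ and $\sum_{l\notin\mylabel}\probsscalar_l^2\le(1-S)^2$ gives $\dot S = S\|\nabla\Loss_{\nll}\|^2\le 2(1-S)^2$, whence $1-S\gtrsim 1/t$ and therefore $\int_0^\infty(1-S)\,dt=\infty$, i.e.\ $\tau\to\infty$. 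Finally I would transfer the flow analysis to the discrete Gradient-update, either by taking the learning rate small or by verifying that the same monotonicity inequalities hold verbatim for the one-step differences; this bookkeeping, rather than any conceptual issue, is the most tedious part.
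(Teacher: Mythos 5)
Your continuous-time analysis is essentially sound: the gradient formula matches the paper's \Cref{lem:nll_gradients}, the invariance of $J$ and the monotone separation of ratios are exactly the discrete \Cref{lem:nll_increasing} of the paper, and the replicator/clock computation ($\dot S = S\|\nabla\Loss_{\nll}\|^2 \le 2(1-S)^2$, hence $1-S\gtrsim 1/t$) is correct \emph{for the gradient flow}. The genuine gap is the final ``transfer'' step, which you dismiss as bookkeeping but which is where the content of the theorem lies: \Cref{thm:winner_take_all} asserts convergence of the discrete Gradient-update (\Cref{def:gupdate}) at an \emph{arbitrary fixed} learning rate $\lambda>0$, so your first escape route (``taking the learning rate small'') is unavailable. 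Your second route (``the same monotonicity inequalities hold verbatim for one-step differences'') is true for the qualitative facts — allowed logits increase, disallowed logits decrease, within-$\mylabel$ probability ratios move monotonically — but false for precisely the quantitative estimates your argument leans on. The identities $\dot\Loss = -\|\nabla\Loss\|^2$ and $\dot S = S\|\nabla\Loss\|^2$ are properties of the flow; at a fixed step size the one-step change of $\Loss$ and of $S$ is not given by the gradient norm (there are higher-order terms in $\lambda$), so neither your argument that $S\to 1$ (via $\liminf\|\nabla\Loss\|=0$ along the flow) nor your $1-S\gtrsim 1/t$ bound carries over without new work. Since divergence of the clock $\tau=\sum_t\big(1-S(t)\big)$ is the crux of your within-$\mylabel$ separation, the proposal is incomplete exactly at its critical point.

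For contrast, the paper's proof stays in discrete time and sidesteps all rate estimates. It uses: (i) the gradient components sum to zero, so the sum of logits is conserved; since allowed logits increase, disallowed logits decrease, and no finite logit vector can be a limit point (the gradient is nonvanishing there), some allowed logit — necessarily the maximal one $\logitsscalar_j$, $j\in J$ — tends to $+\infty$; and (ii) the preserved ratio bound $\probsscalar_j(t)\ge c\,\probsscalar_\iota(t)$ with $c>1$ for $\iota\in I\setminus J$ turns into a comparison of cumulative logit increments, $\logitsscalar_j(T)\ge \logitsscalar_j(0)-c\,\logitsscalar_\iota(0)+c\,\logitsscalar_\iota(T)$, after which a two-case analysis ($\logitsscalar_\iota\to\infty$ or $\logitsscalar_\iota$ bounded) gives $e^{\logitsscalar_\iota-\logitsscalar_j}\to 0$ with no clock estimate at all. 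If you want to salvage your route discretely, you can replace the $1/t$ estimate by a compactness contradiction: if $\sum_t\big(1-S(t)\big)<\infty$, then every logit has bounded total movement (per-step allowed increments are at most $\lambda\|\myinput\|^2(1-S)/S(0)$ and disallowed decrements at most $\lambda\|\myinput\|^2(1-S)$, using that $S$ is monotone discretely), so the iterates stay in a compact set on which $1-S$ is bounded away from $0$ — contradiction. Either way, genuine discrete-time arguments are required; the step you deferred is not mere bookkeeping.
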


\Cref{thm:winner_take_all} states that the model converges to a
distribution in which all the probability mass is evenly distributed
among a subset $J$ of allowed outputs that initially had maximal
probability.  Under any realistic model and random
initialization, there is a single allowed output with maximal initial
probability, i.e., $J$ is a singleton and all the probability mass
converges to a single output. As we have illustrated
in \cref{ex:3outputs}, this ``winner-take-all'' behaviour is
harmful, as it can prevent the optimizer from fitting to other
points. The proof is provided in \Cref{app:winnertakeall}.

\begin{example}
\label{ex:3outputs_2}
Again, assuming arbitrary input dimension and $\outdim=3$ outputs: $A$, $B$ and $C$, 
we examine the optimization dynamics of $\nllloss$ with a single sample $(\myinput, \{ A, B \})$.
\Cref{fig:vectorfield_1AB_nll} visualizes the winner-take-all behaviour of $\nllloss$ in this case. 
We see that the model converges to $A$ or $B$ depending on which one has greater
initial probability.
\end{example}

\begin{figure}[thb]
  \centering

  \begin{subfigure}[b]{\textwidth}
    \centering
    \includegraphics[width=0.44\textwidth]{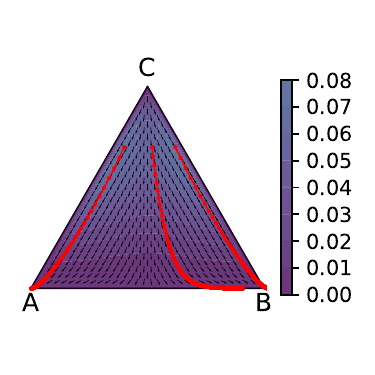}
    \includegraphics[width=0.44\textwidth]{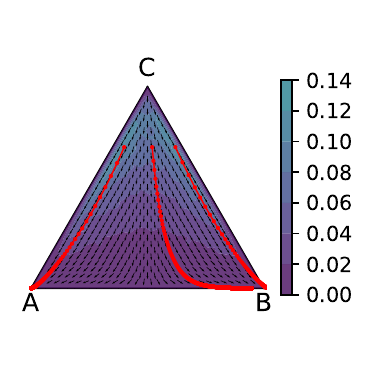}
    \vspace{-4mm}	
    \caption{ $\nllloss$ }
    \label{fig:vectorfield_1AB_nll}
  \end{subfigure}

  \begin{subfigure}[b]{\textwidth}
    \centering
    \includegraphics[width=0.44\textwidth]{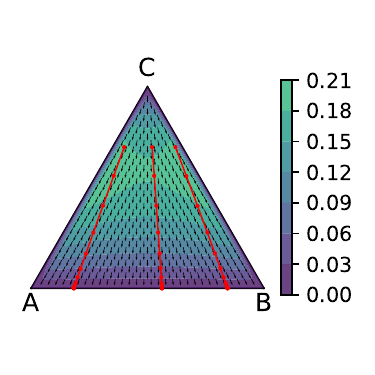}
    \includegraphics[width=0.44\textwidth]{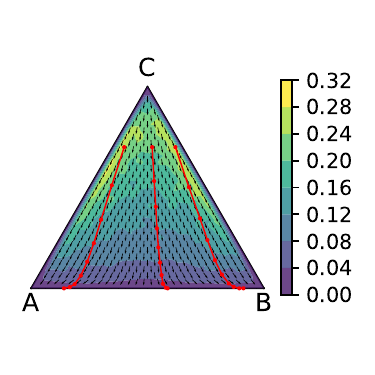}
    \vspace{-4mm}		
    \caption{ $\prploss$ }
    \label{fig:vectorfield_1AB_prp}
  \end{subfigure}

  \caption{\Cref{ex:3outputs_2}: Given $\outdim=3$ outputs $A$, $B$,
    $C$ and one sample $(\myinput, \{ A, B \})$, we show the direction (arrows)
    and magnitude (colors) of gradient updates with (a) the $\nllloss$, (b) the
    $\prploss$.
    We show a Softmax regression model (Left) and an MLP with 10 hidden layers (right).
    Red curves show real trajectories from
    three fixed starting points: $(\probsscalar_A, \probsscalar_B,
    \probsscalar_C) \in \{(0.25, 0.05, 0.7), (0.13, 0.17, 0.7), (0.05,
    0.25, 0.7)\}$, terminated when
    $\probsscalar_C < 0.0001$.  
    $\nllloss$ makes
    the model converge to either $A$ or $B$, while $\prploss$ only
    slightly distorts the initial probability ratios between $A$ and
    $B$.  Also notice the increased speed: while $\nllloss$ with softmax regression
    took $(8374, 8671, 9906)$ steps to converge with fixed learning rate, $\prploss$ took $(14, 14, 14)$ steps.
  }
\label{fig:vectorfield_1AB}
\end{figure}

\subsection{Loss Functions with the Probability Ratio Preserving (PRP) Property}
\label{subsec:prp}

Towards correcting this observed systematic bias of the
$\nllloss$, we present  the following formalization of \Cref{post:prp}:

\begin{definition}[$\prp$ property]
  \label{def:prp-property}
  Given a parametric model $\classifier_{\params}$, 
  a continuously differentiable function
  $\Loss(\probs, \mylabel)$ is said to satisfy the
  \emph{Probability Ratio Preserving } ($\prp$) property 
  for $\classifier_{\params}$ if any
  Gradient-update on $\classifier_{\params}$ with loss function $\Loss$ preserves the ratio of probabilities of all
  outputs $i$ with $\mylabelscalar_i=1$.
\end{definition}

Given any loss function $\Loss$, whether it satisfies the $\prp$
property depends on the model architecture.
\Cref{thm:winner_take_all}
demonstrates how $\nllloss$ introduces a strong preferential bias even
for a basic softmax regression model. Therefore, we focus on this
model class in our formal results:

\begin{definition}[$\prps$ property]
  \label{def:prps-property}
  A continuously differentiable function $\Loss(\probs, \mylabel)$ is
  said to satisfy the \emph{probability-preserving property for
  softmax regression} ($\prps$ property) if it satisfies the $\prp$
  property for the softmax regression model.
\end{definition}

There is a simple loss function that satisfies the $\prps$ property. Since the
loss function ``balances'' the probabilities of the different outputs,
we refer to it as the $\prploss$.

\begin{definition}[$\prploss$]
\label{def:libraloss}
  Let \emph{$\prploss$} denote the following function:
  $$
  \Loss_{\prplosssubscript}(\probs, \mylabel) = 
  \underbrace{\log \bigg(1- \sum_{i=1}^m \mylabelscalar_i \probsscalar_i \bigg)}_{\textup{Disallowed term}} -
  \underbrace{\frac{1}{k} 
  \sum_{i=1}^m \mylabelscalar_i \log(\probsscalar_i)}_{\textup{Allowed term}}
  $$ where $\numallowed = \sum_i \mylabelscalar_i$ is the number of
  allowed outputs. The first term is the positive log likelihood of
  selecting a disallowed label, while the second term is the average
  of the individual negative log likelihood losses for each allowed
  output.
\end{definition}

In \Cref{app:prploss}, we show that the $\prploss$ has the desired property:
\begin{restatable}{theorem}{thmprploss}
  \label{thm:prp_loss}
  The $\prploss$ function has the
  $\prps$ property.
\end{restatable}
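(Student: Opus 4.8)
The plan is to translate the statement about probability ratios into a statement about the gradients of the loss with respect to the logits. For softmax regression we have $\probsscalar_i/\probsscalar_j = e^{\logitsscalar_i - \logitsscalar_j}$, so the ratio of two allowed probabilities is unchanged by a single step exactly when the logit difference $\logitsscalar_i - \logitsscalar_j$ is unchanged. First I would compute how a Gradient-update moves the logits. Writing $\logitsscalar_i = \sum_k \paramsscalar_{ik} x_k$ and applying the chain rule gives $\pd{\Loss}{\paramsscalar_{ik}} = \pd{\Loss}{\logitsscalar_i}\, x_k$, whence after the update $\logitsscalar_i' = \logitsscalar_i - \lambda \|\myinput\|^2 \pd{\Loss}{\logitsscalar_i}$. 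The essential observation is that the scalar factor $\lambda \|\myinput\|^2$ is identical for every output index $i$. Consequently each allowed logit difference changes by $-\lambda\|\myinput\|^2\big(\pd{\Loss}{\logitsscalar_i} - \pd{\Loss}{\logitsscalar_j}\big)$, and the ratio is preserved exactly if and only if $\pd{\Loss}{\logitsscalar_i}$ takes the same value for all allowed indices $i$.

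It then suffices to evaluate $\pd{\Loss}{\logitsscalar_i}$ for the $\prploss$ and check that it is constant across allowed outputs. Using the softmax Jacobian $\pd{\probsscalar_l}{\logitsscalar_i} = \probsscalar_l(\delta_{li} - \probsscalar_i)$, I would differentiate the two terms separately. Abbreviating the total allowed mass as $S = \sum_l \mylabelscalar_l \probsscalar_l$, the disallowed term $\log(1-S)$ has logit-derivative $\tfrac{\probsscalar_i(S - \mylabelscalar_i)}{1-S}$, which for an allowed index ($\mylabelscalar_i = 1$) collapses to exactly $-\probsscalar_i$. The allowed term $-\tfrac{1}{\numallowed}\sum_l \mylabelscalar_l \log\probsscalar_l$ contributes $\probsscalar_i - \tfrac{1}{\numallowed}$ at an allowed index, since $\sum_l \mylabelscalar_l = \numallowed$. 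Adding the two, the $\probsscalar_i$ terms cancel and I obtain $\pd{\Loss}{\logitsscalar_i} = -\tfrac{1}{\numallowed}$, independent of $i$.

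With $\pd{\Loss}{\logitsscalar_i} = -1/\numallowed$ the same for all allowed outputs, every allowed logit is shifted by the identical amount $\lambda\|\myinput\|^2/\numallowed$, so all pairwise differences among allowed logits --- and hence all their probability ratios --- are preserved, and not merely to first order in $\lambda$. This establishes the $\prps$ property. I expect the only real obstacle to be the bookkeeping in the gradient computation: the cancellation that leaves a constant is precisely the design purpose of the two terms of the $\prploss$, and it hinges on the disallowed term's $-\probsscalar_i$ exactly offsetting the allowed term's $+\probsscalar_i$. The delicate step is the simplification of the disallowed term, where substituting $\mylabelscalar_i = 1$ turns $\tfrac{\probsscalar_i(S-\mylabelscalar_i)}{1-S}$ into $-\probsscalar_i$; an error in the softmax Jacobian or in the handling of $S$ would surface here. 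The degenerate case $\myinput = \boldsymbol 0$ is trivial, as no update occurs.
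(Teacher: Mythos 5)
Your proof is correct and follows essentially the same route as the paper's: reduce ratio preservation to equality of the loss gradients with respect to the allowed logits, then compute via the softmax Jacobian that the two terms of the $\prploss$ cancel to give the constant gradient $-\tfrac{1}{\numallowed}$ on every allowed output. The only minor difference is that you carry the full parameterization $\logits = \params\cdot\myinput$, producing the common factor $\lambda\|\myinput\|^2$, whereas the paper treats the logit vector directly as the parameter vector; this changes nothing, since that factor is the same for every output index.
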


$\prploss$ only depends on the probabilities of allowed outputs and is invariant under permutation of the output vector. We formalise this property as:

\begin{definition}[acceptable-dependent]
  \label{def:acceptable_dependent}
  A loss function $\Loss$ is said to be \emph{acceptable-dependent} if
  its value only depends on the $\probsscalar_i$ for which
  $\mylabelscalar_i=1$ and it is invariant under any permutation $\pi\in S_{\outdim}$ of the
  coordinates of the arguments of $\Loss$.  (i.e., $\forall \pi\in
  S_{\outdim}, \Loss(\pi\circ \probs, \pi\circ \mylabel) =
  \Loss(\probs, \mylabel)$).
\end{definition}

In fact, when we restrict attention to acceptable-dependent functions,
we do not have that much choice about how to satisfy the $\prps$
property.  We show that any acceptable-dependent loss function
satisfying the property can be obtained from the $\prploss$.

\begin{restatable}{theorem}{thmprpchar}
  \label{thm:prp_char}
  Let $\Loss$ be an acceptable-dependent function that has the $\prps$ property.
  Then there exists a function $h:\RR \times [\outdim] \to \RR$ that is continuously differentiable in its first argument such that 
  $\Loss(\probs, \mylabel) = h(\Loss_{\prplosssubscript}(\probs, \mylabel), \numallowed)$ where $\numallowed = \sum_i \mylabelscalar_i$.  
\end{restatable}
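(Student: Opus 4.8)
The plan is to reduce the $\prps$ property to a pointwise identity on logit-gradients, translate that identity through the softmax Jacobian into a first-order PDE on the allowed-probability derivatives of $\Loss$, solve the PDE to discover that the gradient of any $\prps$-loss is everywhere parallel to that of $\Loss_{\prplosssubscript}$, and finally upgrade ``parallel gradients'' to ``functional dependence.'' First I would analyze one gradient step in softmax regression. Since $\logitsscalar_i = (\params\myinput)_i$, a step sends $\logitsscalar_i \mapsto \logitsscalar_i - \lambda\|\myinput\|^2\,\pd{\Loss}{\logitsscalar_i}$, so every logit moves by the \emph{same} positive factor $\lambda\|\myinput\|^2$ times $-\pd{\Loss}{\logitsscalar_i}$. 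Because the probability ratio of two allowed outputs equals $e^{\logitsscalar_i-\logitsscalar_j}$, it is preserved \emph{exactly} (for every $\lambda$) iff $\logitsscalar_i-\logitsscalar_j$ is unchanged, i.e.\ iff $\pd{\Loss}{\logitsscalar_i}=\pd{\Loss}{\logitsscalar_j}$. As $(\params,\myinput)$ with $\myinput\neq 0$ range over all configurations, $\probs$ sweeps the full open simplex, so the $\prps$ property is equivalent to the pointwise identity $\pd{\Loss}{\logitsscalar_i}=\pd{\Loss}{\logitsscalar_j}$ for all allowed $i,j$ at every interior $\probs$.

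Next I would differentiate through the softmax using $\pd{\probsscalar_\ell}{\logitsscalar_i}=\probsscalar_\ell(\delta_{\ell i}-\probsscalar_i)$, which gives $\pd{\Loss}{\logitsscalar_i}=\probsscalar_i\big(\pd{\Loss}{\probsscalar_i}-S\big)$ with $S=\sum_\ell \probsscalar_\ell\,\pd{\Loss}{\probsscalar_\ell}$. Acceptable-dependence forces $\pd{\Loss}{\probsscalar_\ell}=0$ whenever $\mylabelscalar_\ell=0$, so $S$ sums over allowed coordinates only. Writing $P=\probs\cdot\mylabel$ for the total allowed mass and treating $\Loss$ as a symmetric function of the allowed probabilities, the identity becomes $\probsscalar_i(\pd{\Loss}{\probsscalar_i}-S)=\probsscalar_j(\pd{\Loss}{\probsscalar_j}-S)$ for allowed $i,j$. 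Hence $\probsscalar_i\,\pd{\Loss}{\probsscalar_i}-S\probsscalar_i$ is constant over allowed $i$; summing this over the $\numallowed$ allowed outputs and using $\sum_i\probsscalar_i\pd{\Loss}{\probsscalar_i}=S$ pins the constant and yields $\pd{\Loss}{\probsscalar_i}=S\big(1+\tfrac{1-P}{\numallowed\,\probsscalar_i}\big)$. Since a direct computation gives $\pd{\Loss_{\prplosssubscript}}{\probsscalar_i}=-\tfrac{1}{1-P}\big(1+\tfrac{1-P}{\numallowed\,\probsscalar_i}\big)$, I conclude $\nabla\Loss=-S(1-P)\,\nabla\Loss_{\prplosssubscript}$: the gradients are everywhere parallel, and $\nabla\Loss_{\prplosssubscript}$ never vanishes (all its components are strictly negative).

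Parallel gradients immediately imply $\Loss$ is constant along any path contained in a level set of $\Loss_{\prplosssubscript}$, but to package this as $\Loss=h(\Loss_{\prplosssubscript},\numallowed)$ I need the level sets to be \emph{connected} — this is the main obstacle. I would resolve it with log-ratio coordinates: taking WLOG the allowed set to be $\{1,\dots,\numallowed\}$, the map $(\probsscalar_1,\dots,\probsscalar_{\numallowed})\mapsto(\rho_1,\dots,\rho_{\numallowed-1},\Loss_{\prplosssubscript})$, where $\rho_i=\log(\probsscalar_{i+1}/\probsscalar_1)$, is a diffeomorphism of the domain $\{\probsscalar_i>0,\ \sum_i\probsscalar_i<1\}$ onto $\RR^{\numallowed-1}\times\RR$ — for fixed ratios, $\Loss_{\prplosssubscript}$ is strictly monotone along the scaling ray and exhausts $\RR$. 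Thus each level set is a single connected fiber, so $\Loss$ descends to a genuine function $h$ of the value of $\Loss_{\prplosssubscript}$. Its continuous differentiability in the first argument follows by composing $\Loss$ with a smooth section of this coordinate change (equivalently, $h'=\pd{\Loss}{\probsscalar_1}/\pd{\Loss_{\prplosssubscript}}{\probsscalar_1}$, a ratio of continuous functions with nonvanishing denominator). Finally, since the entire argument is carried out at a fixed number $\numallowed$ of allowed labels and permutation-symmetry makes the resulting $h$ depend only on $\numallowed$ (not on which coordinates are allowed), bundling the per-$\numallowed$ functions gives $h:\RR\times[\outdim]\to\RR$ with $\Loss(\probs,\mylabel)=h(\Loss_{\prplosssubscript}(\probs,\mylabel),\numallowed)$, as claimed.
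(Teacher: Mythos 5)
Your proposal is correct, and its skeleton coincides with the paper's: reduce the $\prps$ property to pointwise equality of logit-gradients on allowed coordinates (the paper's \Cref{thm:prp_grad_form}), deduce that $\nabla\Loss$ is everywhere a scalar multiple of $\nabla\Loss_{\prplosssubscript}$, prove path-connectedness of the level sets of $\Loss_{\prplosssubscript}$, and finish with a line integral plus a smoothness argument. However, the two hardest sub-steps are executed by genuinely different, more elementary means. First, where the paper writes the gradient condition as $A\Loss'=\cfunct\,\underline{1}$ with $A_{ij}=\probsscalar_j(\delta_{ij}-\probsscalar_i)$, computes $\det A$ (\Cref{lem:A_inv}) in order to invert $A$, and compares $A^{-1}\underline{1}$ across losses, you solve the same linear system in closed form by a summation trick, arriving at $\pd{\Loss}{\probsscalar_i}=S\bigl(1+\tfrac{1-P}{\numallowed\probsscalar_i}\bigr)$ and hence the explicit parallelism $\nabla\Loss=-S(1-P)\,\nabla\Loss_{\prplosssubscript}$; this avoids the determinant computation entirely. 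Second, for connectivity the paper proves $H_z$ is a manifold via the regular value theorem and then runs a contradiction argument using openness of the radial projection $x\mapsto x/\lVert x\rVert$, whereas you exhibit a global diffeomorphism $\Phi:(\probsscalar_1,\dots,\probsscalar_{\numallowed})\mapsto(\rho_1,\dots,\rho_{\numallowed-1},\Loss_{\prplosssubscript})$ --- legitimate because $\Loss_{\prplosssubscript}$ is strictly decreasing and surjective onto $\RR$ along every fixed-ratio ray, which is the same computation as the paper's \Cref{clm:everyvalue} --- so that $H_z$ is straightened into $\RR^{\numallowed-1}\times\{z\}$; this gives path-connectedness outright and, as a bonus, yields continuous differentiability of $h$ by composing $\Loss$ with the smooth section $z\mapsto\Phi^{-1}(\rho_0,z)$, replacing the paper's directional-derivative limit computation. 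One small caution: your parenthetical $h'=\pd{\Loss}{\probsscalar_1}\big/\pd{\Loss_{\prplosssubscript}}{\probsscalar_1}$ cannot serve as the \emph{definition} of $h'$, since it presupposes via the chain rule that $h$ is differentiable; the smooth-section argument should be the official one, and with that reading your proof is complete.
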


\Cref{thm:prp_char} is a central result of our work. It gives a
characterization of all acceptable-dependent functions that have the $\prps$ property,
which are the functions that avoid any systematic bias towards some of
the allowed outputs. Because this theorem is one core of our work, we add a quick intuitive proof sketch.

\begin{proof}[Proof sketch]
	The core of the argument considers an arbitrary loss function $\Loss$ with the $\prps$ property and a real value $z$, 
	and shows that on the
	set $H_z$ 
	of values where $\Loss_{\prplosssubscript}=z$, $\Loss$ is constant: thus $\Loss$ is a function of $\Loss_{\prplosssubscript}$, and once this is proven it is easy to show
	that the function is smooth.
	To prove smoothness, we first argue that $H_z$ is path connected. We then fix two points $a$ and $b$ in $H_z$  let $\gamma$
	be a path in $H_z$ connecting them, and  write $\Loss(b)-\Loss(a)$
	as a line integral of the gradient of $\Loss$ over that path.  We show that the gradient of $\Loss$ is a constant multiple of
	the gradient of $\Loss_{\prplosssubscript}$. But since $\Loss_{\prplosssubscript}$ is constant on $H_z$, hence constant on $\gamma$, its gradient must be $0$.
	We have thus shown that $\Loss(b)-\Loss(a)$ is $0$ as required.
	Details are in \Cref{app:prploss}.
\end{proof}

\begin{example}[\cref{ex:3outputs_2} continued]
  We see in \cref{fig:vectorfield_1AB_prp} that in our simple example
  of 3 possible outputs and a single sample $(\myinput, \{ A, B \})$
  the model does not necessarily converge to either $A$ or $B$, as it
  did for the $\nllloss$.  In fact, in the case of softmax regression
  (leftmost plot), the update operations strictly preserve the initial
  probability ratios between $A$ and $B$.  In the more general case,
  the output with the greater initial probability increases only
  moderately faster and only at the very end of training.
\end{example}

\subsection{Increasing Model Complexity}
\label{sec:increasingmodelcomp}

Our results about the winner-take-all property of $\nllloss$ and the
$\prp$ property of $\prploss$ assume a softmax regression
model. This can be seen in \cref{fig:winner_takes_all},
where we show learning curves during training of a single layer:
$\nllloss$ makes the model prediction collapse into a single
output, while $\prploss$ guarantees to keep to the initial
probability ratios of allowed outputs.
For more complicated networks with hidden layers, our results become approximations. 
In \cref{fig:vectorfield_1AB} we experimentally observe how
probability ratios change in a larger network. We find that the update
dynamics remain mostly unchanged as we increase the model
complexity and the training trajectories indeed do not converge towards one or the other side.

\begin{example}[\cref{ex:3outputs} continued]
\label{ex:fig4}	
  Let us return to the slightly more complex example where the same input is
  associated with two consistent label sets:  $(\myinput, \{A, B\})$ and
  $(\myinput, \{A, C\})$. The update dynamics of this setting is depicted in \cref{fig:vectorfield_1AB_1AC}.
  For both $\nllloss$ and $\prploss$, label $A$ constitutes the single
  attractor. However, in the case of $\nllloss$, reaching $A$ can take a long time when we start
  with very low probability assigned to $A$ and it can even lead to
  oscillation between $B$ and $C$ if the learning rate is not
  sufficiently small. On the other hand, $\prploss$ yields a smooth
  trajectory to $A$ from any starting configuration.
\end{example}

\begin{example}[\cref{ex:fig4} continued]
\label{ex:fig5}
We run simulations of the entire training process 
with random adversarial starting configurations
on this toy problem from \cref{ex:fig4}: 
we train the model
for 20 steps to approach the $B$-$C$ line (using sample $(\myinput, \{ B,
C \})$ and then train it for another 200 steps with the two samples 
$(\myinput, \{ A, B \})$ and $(\myinput, \{ A, C \})$. When there are only three outputs ($\outdim=3$), we
find that both losses make the model converge to $A$, although the
$\nllloss$ usually takes longer (\cref{fig:vectorfield_1AB_1AC}). 
However, as we
increase the output size $\outdim$
while keeping the samples -- i.e., we add
unrelated disallowed outputs -- bad local optima emerge and it becomes harder for
$\nllloss$ to find the optimum. When there are $\outdim=10$ possible outputs,
$\nllloss$ misses the optimum $20\%$ of the time (based on 30 trials)
and when there are 100 outputs, it misses the output $100\%$ of the
time (based on 10 trials). In the meantime, the $\prploss$ robustly
learns to select output $A$ that satisfies both
points. \Cref{fig:winner_takes_all_interaction} shows typical
learning curves for 100 outputs.
\end{example}

\begin{figure}[thb]
  \centering
  \begin{subfigure}[b]{0.32\textwidth}  
	  \centering
	  \includegraphics[width=0.95\textwidth]{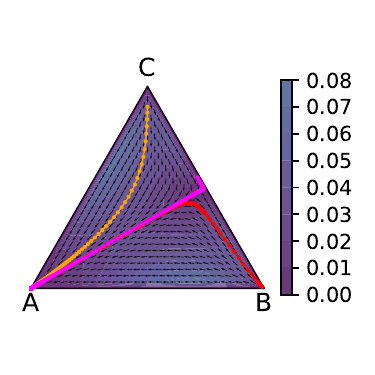} 
	  \vspace{-4mm}
	  \caption{$\nllloss$, $LR=1$}
	  \label{fig:vectorfield_1AB_1AC_a}
  \end{subfigure}
  \begin{subfigure}[b]{0.32\textwidth}
	  \centering
	  \includegraphics[width=0.95\textwidth]{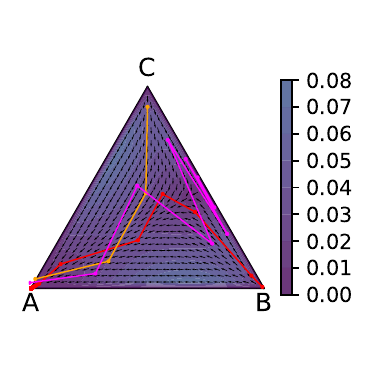} 
	  \vspace{-4mm}
	  \caption{$\nllloss$, $LR=10$}
	  \label{fig:vectorfield_1AB_1AC_b}
  \end{subfigure}
  \begin{subfigure}[b]{0.32\textwidth}  
	  \centering	  
	  \includegraphics[width=0.95\textwidth]{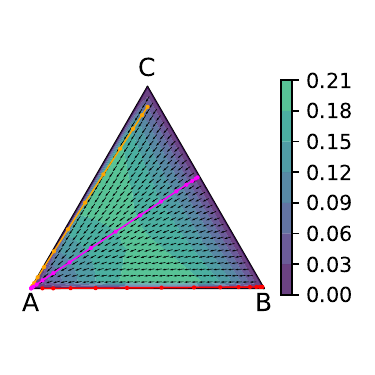} 
	  \vspace{-4mm}	  
	  \caption{$\prploss$, $LR=1$}	
	  \label{fig:vectorfield_1AB_1AC_c}
  \end{subfigure}

  \caption{\Cref{ex:fig4}: Given $\outdim=3$ outputs $A$, $B$, $C$ and two samples $(\myinput,
    \{ A, B \})$, $(\myinput, \{ A, C \})$, we show how softmax regression
    updates the probabilities. {\bf (a):} $\nllloss$ with small learning rate, {\bf (b):} $\nllloss$ with large learning rate, {\bf (c):} $\prploss$.
    $\nllloss$ can lead to oscillation, or
    may take a long time to reach the attractor.
    In contrast, $\prploss$ heads directly towards the attractor.
    We show three real trajectories from fixed starting points:
    $(\probsscalar_A, \probsscalar_B, \probsscalar_C) \in \{(0.003,
    0.99, 0.007), (0.05, 0.05, 0.9), (0.01, 0.44, 0.55)\}$. The
    trajectories are terminated when $\probsscalar_A > 0.9999$.
  }
  \label{fig:vectorfield_1AB_1AC}
\end{figure}

\begin{figure}[thb]
  \centering
  \begin{subfigure}[b]{0.47\textwidth}  
	  \centering
	  \includegraphics[width=0.95\textwidth]{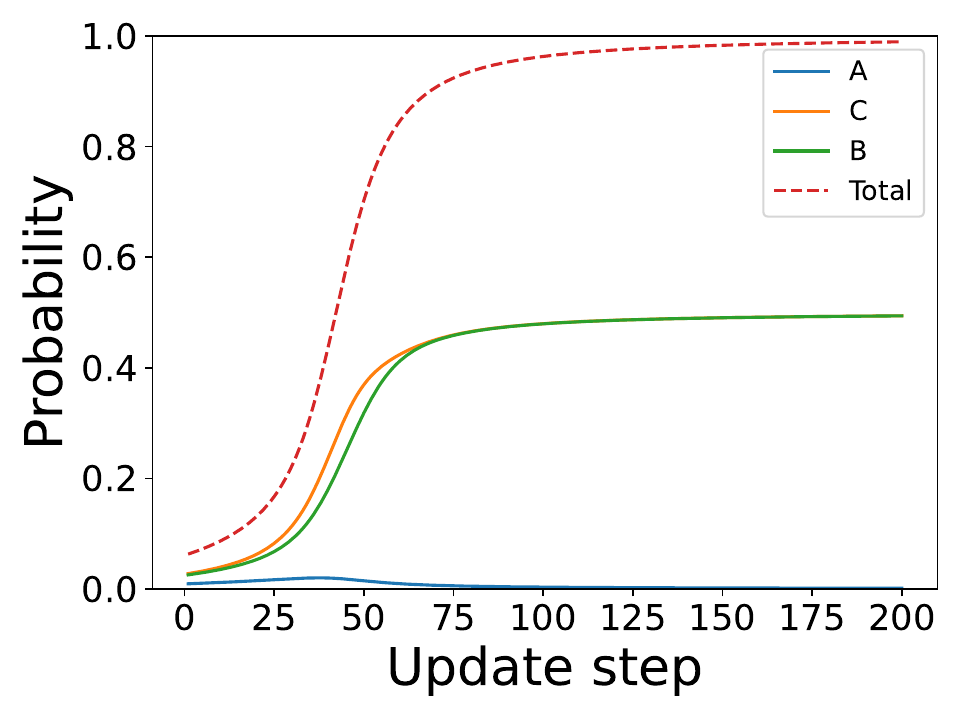} 
	  \caption{$\nllloss$}
	  \label{fig:winner_takes_all_interaction_a}
  \end{subfigure}    
  \begin{subfigure}[b]{0.47\textwidth}  
	  \centering
	  \includegraphics[width=0.95\textwidth]{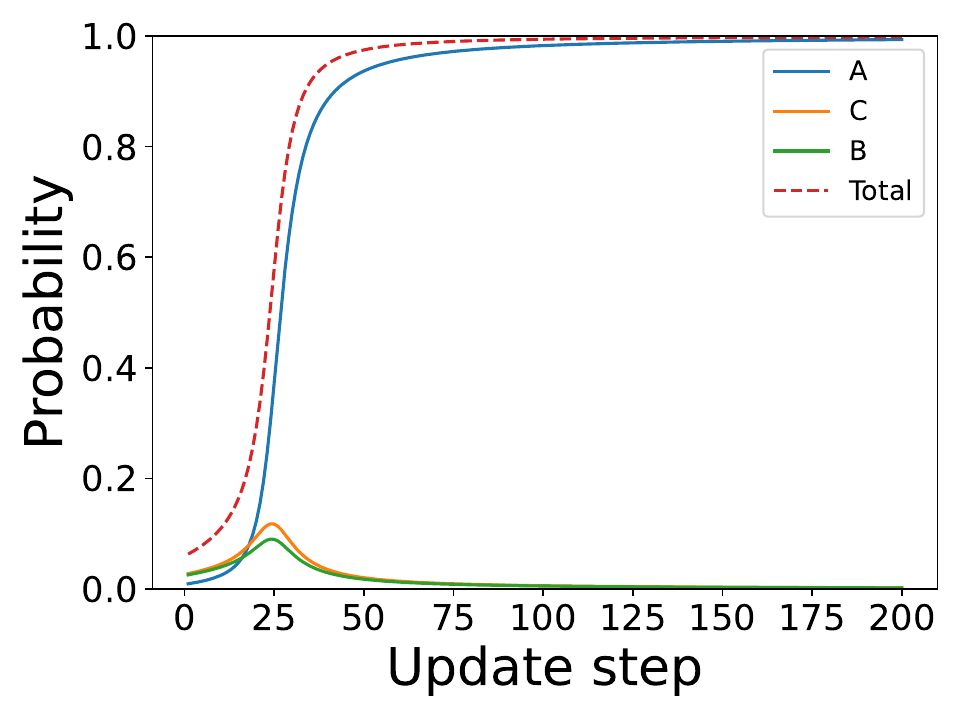} 
	  \caption{$\prploss$}	
	  \label{fig:winner_takes_all_interaction_b}
  \end{subfigure}
  \caption{
    \Cref{ex:fig5}:
    Learning curves using a classifier MLP with two layers and $\outdim=100$ outputs on a
    training set that consists of two samples: $(\myinput,\{ A, B
    \})$, $(\myinput, \{ A, C \})$, as described in
    \cref{ex:3outputs}. While $A$ is the optimal output, initially,
    $B$ and $C$ have higher probabilities.
    \emph{Total} shows $A+B+C$, i.e., the sum of probabilities of outputs in occurring in some label set.
	Starting from identical initialization, $\nllloss$ (a) gets stuck in a bad local optimum, while $\prploss$ (b) quickly recovers the global optimum.
    }
  \label{fig:winner_takes_all_interaction}
\end{figure}

\subsection{Connection with negative log likelihood loss}

A closer inspection of the $\prploss$ \cref{def:libraloss} 
reveals that it is a combination of
several different log likelihood losses.  The disallowed term
$\log(1 - \sum_i \mylabelscalar_i \probsscalar_i)$ is the \emph{positive} log
likelihood of selecting a disallowed output.
It has the same
monotonicity and optimum as the $\nllloss$ \cref{eq:NLL}. 
However, what is different
is its convexity: the more the model fits to a sample
(i.e.\ the
higher the sum of allowed probabilities $\sum_i \mylabelscalar_i \probsscalar_i$),
the flatter the $\nllloss$ becomes. On the other hand, $\log(1 - \sum_i
\mylabelscalar_i \probsscalar_i)$ becomes steeper as we start fitting  the
sample. This curvature, however, is compensated by the allowed term
$-\frac{1}{k} \sum_i \mylabelscalar_i \log(\probsscalar_i)$, which is the average
of the individual negative log likelihood losses for each allowed
output.

In general, loss components that reward the log probability of allowed
outputs (such as $\nllloss$ or the allowed term of $\prploss$) will
have vanishing gradients when we are close to fitting the allowed
labels. Analogously, loss components that penalize the log probability
of disallowed outputs (such as the disallowed term in $\prploss$) will
have vanishing gradients when we are far from fitting the allowed
labels.  $\prploss$ provides a ``perfect'' balance between these two
kinds of components.  The gradients are stable throughout the
optimization -- we show in \Cref{app:prploss} that they are always
$-\frac{1}{k}$ for the allowed outputs.

We visualize the balancing effect of the $\prploss$ in \Cref{fig:prp_vanilla} for the classical
supervised case. That is, when there is a single allowed
output, thus $k=1$. In this case $\prploss$ reduces to the \emph{log odds ratio}. 
Let $\trueprob$ 
denote the probability of the single allowed output, and thus
$\trueprob = \sum_{i} \mylabelscalar_i \probsscalar_i$.
Then the
$\prploss$ becomes
$$
	\Loss_{\prplosssubscript} =
	\log \left( \frac{1-\trueprob}{\trueprob} \right) =
	\underbrace{\log(1-\trueprob)}_{\textrm{positive log likelihood of disallowed}} + 
	\underbrace{-\log(\trueprob)}_{\textrm{negative log likelihood of allowed}}
$$

The derivative of the loss with respect to the single allowed logit
$\truelogit$ is:
\begin{align*}
  \pd{\Loss_{\prplosssubscript}}{\truelogit} & =
  \pd{\log(1-\trueprob)}{\truelogit} + \pd{-\log(\trueprob)}{\truelogit}
  = \left( \frac{-1}{1-\trueprob} + \frac{-1}{\trueprob} \right) \pd{\trueprob}{\truelogit}
  \\ & = \frac{-\trueprob - 1 + \trueprob}{\trueprob (1-\trueprob)} \trueprob (1-\trueprob)
  = -\trueprob -1 + \trueprob = -1
\end{align*}

We obtain that the $\prploss$ is linear in $\truelogit$,
with constant derivative $-1$.

\begin{figure}[htb]
  \centering
  \includegraphics[width=0.45\textwidth]{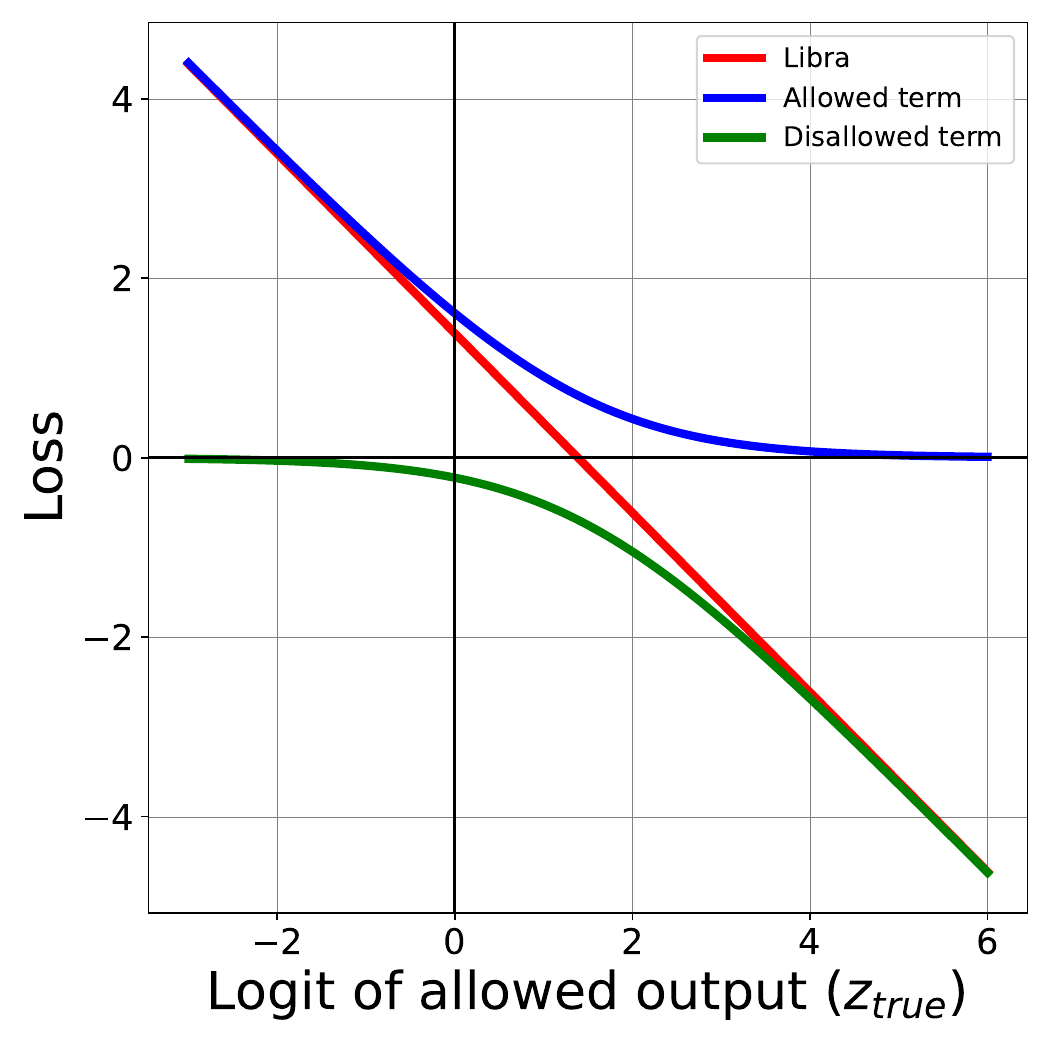}
  \caption{$\prploss$ and its two terms when there is a single allowed
    output, plotted against the single allowed logit $\truelogit$. 
	The
    derivatives of the two terms add up to $-1$ (i.e., the gradient of $\prploss$ is
    independent from the model prediction). This implies the $\prp$
    property. {\bf Blue line (Allowed term)}: negative log likelihood
    of the allowed output ($-\log(\trueprob)$, $\nllloss$), {\bf Green line
      (Disallowed term)}: positive log likelihood of the disallowed
    outputs ($\log(1-\probsscalar_0)$), {\bf Red line ($\prploss$)}: sum of the
    allowed and disallowed terms.
    }
  \label{fig:prp_vanilla}
\end{figure}

\subsection{Connection with entropy regularization}
The $\prploss$ implements a special, input-dependent form of entropy
regularization~\citep{confidence_penalty}, whose intuitive goal is to penalize distributions with low entropy.
As we have already seen, the
first loss term $\log(1-\sum_i \mylabelscalar_i \probsscalar_i)$ aims to minimise the likelihood of disallowed labels.
The allowed loss term can be rewritten as
\begin{align*}
  - \frac{1}{k} \sum_i \mylabelscalar_i \log(\probsscalar_i)
  & = - \frac{1}{k} \sum_i \mylabelscalar_i \left(\log(\probsscalar_i) 
  - \log \left(\frac{1}{k}\right) + \log \left(\frac{1}{k}\right) \right) = \\
  & = \sum_{\{i| \mylabelscalar_i=1\}} \frac{1}{k} \log \left(\frac{\frac{1}{k}}{\probsscalar_i} \right) 
  - \sum_{\{i| \mylabelscalar_i=1\}} \frac{1}{k} \log \left(\frac{1}{k} \right) = \\
  & = D_{\textup{KL}}(U_{\mylabel} \mid\mid \probs) + H(U_{\mylabel}))
  = H(U_{\mylabel}, \probs)
\end{align*}
where $D_{\textup{KL}}(\boldsymbol p \mid\mid \boldsymbol q) = \sum_i
p_i \log\left(\frac{p_i}{q_i}\right)$ is the Kullback-Leibler
divergence of distribution $\boldsymbol p$ from reference distribution
$\boldsymbol q$, $U_{\mylabel}$ is the uniform distribution over the
$\numallowed$ allowed outputs, $H(\boldsymbol p)$ is the entropy of
$\boldsymbol p$ and $H(\boldsymbol q, \boldsymbol p)$ is the cross
entropy of $\boldsymbol p$ relative $\boldsymbol q$. This rewriting
shows that the allowed term is a cross entropy loss, measuring the
distance between $U_{\mylabel}$ and the model output distribution
$\probs$. Minimising this term is equivalent to entropy regularization
(i.e.\ maximising entropy), restricted to the allowed outputs.  In
other words, it is minimal when $\probs$ is uniform on the allowed
outputs and zero elsewhere.

%auto-ignore
\subsection{Preserving both acceptable and unacceptable inputs}
\label{sec:biprploss}

The reader may have noticed that the $\prp$ property
requires that the loss is acceptable-dependent (\Cref{def:acceptable_dependent}).
It enforces constraints which concern preservation of  ratios between
outputs, but it does this only on the acceptable outputs.
It is  natural to drop the first requirement, 
allowing dependence on all outputs, but
replacing the constraints with a  stronger property that is symmetric in acceptable and unacceptable
outputs.
We give this analog of the $\prp$ property below:

\begin{definition}[$\biprp$ property]
  \label{def:biprp-property}
  Given a parametric model $\classifier_{\params}$,
  a continuously differentiable function
  $\Loss(\probs, \mylabel)$ is said to satisfy the
  \emph{$\biprp$ property} for $\model$ if any
  Gradient-update with loss function $\Loss$ preserves the ratio of probabilities
  of all outputs $i$ with $\mylabelscalar_i=1$, and also the ratio of probabilities of outputs with $\mylabelscalar_j=0$.
\end{definition}

As before, we focus on the $\biprp$ property for a softmax regression model:

\begin{definition}[$\biprps$ property]
  \label{def:biprps-property}

  A continuously differentiable function $\Loss(\probs, \mylabel)$ is
  said to satisfy the \emph{$\biprps$ property} if it satisfies the $\biprp$
  property for the softmax regression model.
\end{definition}

Again, we demonstrate that the property is not vacuous.
We define a loss function that performs ``balancing'' on both the acceptable and unacceptable loss. Contrasting
with the $\prploss$, we call this the \emph{Sagittarius loss}, abbreviated $\biprploss$.
\begin{definition}[$\biprploss$]
  \label{def:biprp-loss}
  $$
  \Loss_{\biprplosssubscript}(\probs, \mylabel) =
  \underbrace{\frac{1}{n - \numallowed} \sum_i (1-\mylabelscalar_i) \log(\probsscalar_i)}_{\textup{Disallowed term}}
  \underbrace{- \frac{1}{\numallowed}\sum_i \mylabelscalar_i \log(\probsscalar_i)}_{\textup{Allowed term}} + 
  $$
  The first term is the average of the individual positive log likelihood losses for each disallowed output. The second term -- which is identical to that of $\prploss$ -- is the average of the individual negative log likelihood losses for each allowed
  output. Also notice that both terms can be seen as cross entropies of $\probs$ relative to uniform distributions on the 1) allowed outputs (allowed term) and 2) disallowed outputs (disallowed term).
\end{definition}

We can show that $\Loss_{\biprplosssubscript}$ has the $\biprps$ property:

\begin{restatable}{theorem}{thmbiprploss}
  \label{thm:biprp_loss}
  The $\biprploss$ function has the $\biprps$ property and for any
  continuously differentiable family of $h_i:\RR\to \RR$ functions
  $\Loss(\probs, \mylabel)=h_k(\Loss_{\biprplosssubscript}(\probs,
  \mylabel))$ also satisfies the $\biprps$ property, where $k
  = \sum_i \mylabelscalar_i$.
\end{restatable}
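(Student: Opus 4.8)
The plan is to reduce everything to how a single Gradient-update moves the \emph{logits}, since for softmax regression the ratio $\probsscalar_i/\probsscalar_j$ depends only on the logit difference $\logitsscalar_i - \logitsscalar_j$. For softmax regression $\logitsscalar_l = \params_l \cdot \myinput$, so the parameter $\paramsscalar_{lj}$ influences only logit $\logitsscalar_l$ and $\pd{\Loss}{\paramsscalar_{lj}} = \pd{\Loss}{\logitsscalar_l}\,\myinput_j$. Substituting the Gradient-update into the new logits gives $\logitsscalar'_l = \logitsscalar_l - \lambda \lVert \myinput \rVert^2 \,\pd{\Loss}{\logitsscalar_l}$, i.e.\ the update acts on the logits as a gradient step with a rescaled learning rate that is the \emph{same} for every coordinate. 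Hence ratio preservation within the allowed set (and within the disallowed set) reduces to showing that $\pd{\Loss}{\logitsscalar_l}$ takes one common value across all allowed $l$ and one common value across all disallowed $l$: any two logits in the same group then shift by equal amounts, leaving their difference, and thus their probability ratio, unchanged.

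Next I would compute $\pd{\Loss}{\logitsscalar_l}$ through the softmax Jacobian $\pd{\probsscalar_i}{\logitsscalar_l} = \probsscalar_i(\delta_{il} - \probsscalar_l)$. Writing $g_l := \pd{\Loss}{\probsscalar_l}$ and $S := \sum_i \probsscalar_i g_i$, this yields the clean expression $\pd{\Loss}{\logitsscalar_l} = \probsscalar_l\big(g_l - S\big)$. For $\Loss = \Loss_{\biprplosssubscript}$ the scalar derivatives are $g_l = -\tfrac{1}{\numallowed\,\probsscalar_l}$ on allowed coordinates and $g_l = \tfrac{1}{(\outdim-\numallowed)\,\probsscalar_l}$ on disallowed ones, so that $\probsscalar_l g_l$ is \emph{already} piecewise constant, equal to $-\tfrac{1}{\numallowed}$ on the allowed set and $\tfrac{1}{\outdim - \numallowed}$ on the disallowed set.

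The crux is the correction term $S$. Because the two coefficients $-\tfrac1\numallowed$ and $\tfrac1{\outdim-\numallowed}$ are calibrated to the group sizes, the mass cancels exactly: $S = \sum_i \probsscalar_i g_i = \numallowed\cdot(-\tfrac1\numallowed) + (\outdim-\numallowed)\cdot\tfrac1{\outdim-\numallowed} = 0$. This is the step I expect to be the real content of the argument; it is precisely the balancing built into \Cref{def:biprp-loss} that makes the softmax-Jacobian correction vanish (contrast with $\prploss$, where $S \neq 0$ but the surviving correction still cancels only on the allowed coordinates, which is all the weaker $\prps$ property demands). With $S = 0$ we get $\pd{\Loss_{\biprplosssubscript}}{\logitsscalar_l} = \probsscalar_l g_l$, which is $-\tfrac1\numallowed$ on every allowed coordinate and $\tfrac1{\outdim-\numallowed}$ on every disallowed coordinate. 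By the reduction above, all allowed logits move by the identical increment $\lambda\lVert\myinput\rVert^2/\numallowed$ and all disallowed logits by the identical decrement $\lambda\lVert\myinput\rVert^2/(\outdim-\numallowed)$, so both families of ratios are preserved and $\Loss_{\biprplosssubscript}$ has the $\biprps$ property.

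Finally, for the composed loss $\Loss = h_\numallowed(\Loss_{\biprplosssubscript})$ I would simply apply the chain rule: $\pd{\Loss}{\logitsscalar_l} = h'_\numallowed(\Loss_{\biprplosssubscript})\,\pd{\Loss_{\biprplosssubscript}}{\logitsscalar_l}$. Since $\numallowed = \sum_i \mylabelscalar_i$ is fixed for the given datapoint, the prefactor $h'_\numallowed(\Loss_{\biprplosssubscript})$ is a single scalar, identical across all coordinates $l$. Multiplying the already piecewise-constant logit gradient by a common scalar leaves it piecewise constant, so every allowed logit still shifts by one common amount and every disallowed logit by another; the ratio-preservation argument then goes through verbatim (regardless of the sign of $h'_\numallowed$), giving the $\biprps$ property for the whole family.
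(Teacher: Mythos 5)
Your proof is correct and follows essentially the same route as the paper's: compute the logit gradients of $\Loss_{\biprplosssubscript}$ through the softmax Jacobian, observe that the correction term $S=\sum_i \probsscalar_i \pd{\Loss_{\biprplosssubscript}}{\probsscalar_i}$ vanishes because the $1/\numallowed$ and $1/(\outdim-\numallowed)$ coefficients are calibrated to the group sizes, and conclude that the logit gradient is piecewise constant ($-1/\numallowed$ on allowed, $1/(\outdim-\numallowed)$ on disallowed coordinates), which forces equal logit shifts within each group and hence preservation of both families of ratios. If anything, your write-up is slightly more complete than the paper's: you justify the reduction from parameter updates to logit updates for an arbitrary input (via the effective learning rate $\lambda\lVert\myinput\rVert^2$) rather than treating the logits as directly updateable parameters, and you explicitly dispatch the $h_k$-composition claim by the chain rule, a step the paper's own proof of this theorem leaves implicit (it is spelled out only in the analogous $\prps$ argument).
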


  Furthermore, 
we get a characterization analogous to the one
of \cref{thm:prp_loss}.
\begin{restatable}{theorem}{thmbiprpchar}
\label{them:biprp_char}
Let $\Loss$ be a function that has the $\biprps$ property,
  invariant under the permutation of the input (i.e., $\forall \pi\in
  S_n, \Loss(\pi\circ \probs, \pi\circ \mylabel) = \Loss(\probs, \mylabel)$).  Then there
  exist $h_i:\RR\to \RR$ continuously differentiable functions such
  that $\Loss(\probs, \mylabel) = h_k(\Loss_{\biprplosssubscript}(\probs, \mylabel))$.
\end{restatable}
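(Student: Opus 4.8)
The plan is to mirror the strategy behind \Cref{thm:prp_char}, but to exploit a crucial simplification: in logit coordinates the Sagittarius loss is \emph{linear}, which trivializes the connectedness step that was delicate in the acceptable-dependent case. First I would translate the $\biprps$ property into a pointwise condition on the logit-gradient of $\Loss$. For softmax regression $\probs = \softmax(\params \cdot \myinput)$, writing $\logits = \params\cdot\myinput$, a Gradient-update on the weight matrix shifts each logit by $\Delta \logitsscalar_i = -\lambda\|\myinput\|^2\, \pd{\Loss}{\logitsscalar_i}$, with the derivative evaluated at the current point. Since softmax ratios satisfy $\probsscalar_i/\probsscalar_j = e^{\logitsscalar_i - \logitsscalar_j}$, preserving the ratio of two outputs for every learning rate $\lambda>0$ is equivalent to $\Delta\logitsscalar_i = \Delta\logitsscalar_j$, i.e.\ $\pd{\Loss}{\logitsscalar_i} = \pd{\Loss}{\logitsscalar_j}$. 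Hence the $\biprps$ property forces $\pd{\Loss}{\logitsscalar_i}$ to be constant across all allowed coordinates and constant across all disallowed coordinates, at every point. Moreover, because softmax is invariant under the shift $\logits \mapsto \logits + c\mathbf 1$, the loss satisfies $\sum_i \pd{\Loss}{\logitsscalar_i} = 0$. Writing the common allowed value as $g_+$ and the common disallowed value as $g_-$, this gives $k\,g_+ + (n-k)\,g_- = 0$, so the single scalar $g_+$ determines the entire gradient.

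Comparing with the logit-gradient of the Sagittarius loss, which one computes to be the \emph{constant} vector equal to $-1/k$ on allowed coordinates and $1/(n-k)$ on disallowed ones, I would conclude that $\nabla_{\logits}\Loss = c(\logits)\,\nabla_{\logits}\Loss_{\biprplosssubscript}$ pointwise for a scalar field $c = -k\,g_+$. The point needing the most care here is checking that a \emph{single} scalar $c$ governs both the allowed and the disallowed blocks at once; this is exactly what the zero-sum constraint guarantees, and it is the only genuinely new ingredient relative to \Cref{thm:prp_char}. Next I would pass to the gauge hyperplane $V = \{\logits : \sum_i \logitsscalar_i = 0\}$, which softmax maps diffeomorphically onto the open simplex (the whole domain of $\Loss$, since $\log\probsscalar_i$ requires $\probsscalar_i>0$). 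Substituting $\log\probsscalar_i = \logitsscalar_i - \log\sum_\ell e^{\logitsscalar_\ell}$ shows that on $V$ the Sagittarius loss collapses to the linear function $\Loss_{\biprplosssubscript}(\logits) = -\frac1k\sum_{i\,\text{allowed}}\logitsscalar_i + \frac1{n-k}\sum_{i\,\text{disallowed}}\logitsscalar_i$, whose gradient is the nonzero, zero-mean constant vector above.

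Consequently the level sets of $\Loss_{\biprplosssubscript}$ inside $V$ are affine subspaces, hence convex and path-connected -- this is the step that was nontrivial for $\prploss$ but is immediate here. Since $\nabla_{\logits}\Loss$ is parallel to $\nabla_{\logits}\Loss_{\biprplosssubscript}$, the tangential derivative of $\Loss$ along any such level set vanishes, so $\Loss$ is constant on each (connected) level set. As $\Loss_{\biprplosssubscript}$ is a surjective linear map $V \to \RR$, this yields a well-defined $h_{\mylabel} : \RR \to \RR$ with $\Loss(\probs,\mylabel) = h_{\mylabel}(\Loss_{\biprplosssubscript}(\probs,\mylabel))$. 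Finally I would establish smoothness and remove the dependence on the full label. For smoothness, take the linear section $s(t) = t\,v/\|v\|^2$ with $v = \nabla_{\logits}\Loss_{\biprplosssubscript}|_V$, which satisfies $\Loss_{\biprplosssubscript}(s(t)) = t$; then $h_{\mylabel} = \Loss\circ s$ is a composition of $C^1$ maps and is therefore continuously differentiable. For the index, $\Loss_{\biprplosssubscript}$ is equivariant under simultaneous permutation of $\probs$ and $\mylabel$ while $\Loss$ is invariant by hypothesis; since any two label sets of equal cardinality are related by some $\pi\in S_n$, the factor $h_{\mylabel}$ depends only on $k = \sum_i \mylabelscalar_i$, giving the family $h_k$ and completing the argument. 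The main obstacle is thus concentrated in the two-sided gradient computation of the first two paragraphs; once linearity of $\Loss_{\biprplosssubscript}$ in logit space is in hand, the connectedness and smoothness steps are routine.
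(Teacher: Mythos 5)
Your proposal is correct, and while it follows the same four-step skeleton as the paper's argument (gradient proportionality, connected level sets, constancy on level sets, smoothness of $h$), it executes the two hardest steps by a genuinely different and substantially simpler route. The pivotal observation is that in logit coordinates the log-partition terms cancel, so $\Loss_{\biprplosssubscript}\circ\softmax$ is exactly the linear functional $\logits \mapsto -\frac1{\numallowed}\sum_{i\,\mathrm{allowed}}\logitsscalar_i + \frac1{\outdim-\numallowed}\sum_{i\,\mathrm{disallowed}}\logitsscalar_i$ on the gauge hyperplane $V=\{\logits:\sum_i\logitsscalar_i=0\}$; this makes every level set an affine (hence convex, hence path-connected) subspace, replacing the paper's considerably heavier machinery of regular values, the quotient map $\pi$ identifying the sets $D_\probs$, and the open-map argument (\Cref{clm:preimagemanifoldbi}, \Cref{clm:everyvaluebiprp}). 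Working entirely in logit coordinates also lets you bypass the paper's passage from logit gradients back to probability gradients, which requires inverting the softmax Jacobian over the orthogonal complement of $\underline{1}$ (\Cref{clm:invertibleoversubspace}); your first-paragraph computation is essentially the paper's \Cref{prop:partiallogit} (both hinge on the zero-sum constraint forcing a single scalar to govern the allowed and disallowed blocks), but nothing further is needed. Your smoothness argument via the explicit linear section $s(t)=t\,v/\lVert v\rVert^2$ is also cleaner than the paper's directional-derivative limit, and your use of permutation equivariance to collapse $h_{\mylabel}$ to $h_{\numallowed}$ makes explicit a step the paper's proof leaves implicit. One point to state more prominently rather than in passing: the equivalence between ratio preservation and equality of logit increments (and hence the pointwise block-constancy of $\nabla_{\logits}\Loss$) is the analogue of the paper's \Cref{thm:prp_grad_form} and deserves a displayed derivation, including the remark that the update multiplies the gradient by $\lambda\lVert\myinput\rVert^2>0$ so the sign and the exponential ratio formula $\probsscalar'_m/\probsscalar'_n = (\probsscalar_m/\probsscalar_n)\,e^{\Delta\logitsscalar_m-\Delta\logitsscalar_n}$ carry through; but this is a presentational matter, not a gap.
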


The proofs are provided in \Cref{app:biprploss}.

\subsection{Comparing $\prploss$ and $\biprploss$}

The $\prploss$ and the $\biprploss$ have many similarities and are
strongly related to $\nllloss$. They both factorize into an allowed
and a disallowed term, and the allowed terms are identical: the cross
entropy of $\probs$ relative to the uniform distribution on the
allowed outputs, which is also the average of the individual negative
log likelihood losses for each allowed output.  They differ in the
disallowed term. For $\prploss$ it is the positive log likelihood of
selecting a disallowed output, while for $\biprploss$ it is the cross
entropy of $\probs$ relative to the uniform distribution on the
disallowed outputs, or equally the average of the individual positive
log likelihood losses for each disallowed output.

The $\biprp$ property implies the $\prp$ property, hence the
$\biprploss$ satisfies the $\prps$ property. 
At this point the reader
may expect that the $\biprploss$, having a stronger property, should
be superior to $\prploss$.  Surprisingly, we will explain in 
\cref{sec:experiments} that this is not the case: the need to retain
balance on both acceptable and unacceptable outputs leads to some
undesirable effects. In particular, the magnitude of the logit vector
increases rapidly during learning, leading to numerical instability.

%auto-ignore
\section{Learning Mapping Rules via Partial Label Learning}
\label{sec:obda}

We introduce new sequential datasets with disjunctive
supervision. Extending Example \ref{ex:running}
in the introduction, these dataset  will
concern  learning rules, a topic that has gained considerable
interest in the AI community,
e.g. \citet{imperialrules,uclrules,rnnlogic}. More specifically, we
consider learning \emph{mapping rules} which relate data sources in a
\emph{source vocabulary} into some \emph{target vocabulary}. This is a
common approach in data integration, where the target vocabulary is
often standardized (an ``ontology'' \cite{owl}), optionally equipped
with additional logical constraints.  Although there are a vast number
of tools available for answering queries with known rules, determining
the mapping rules by hand is known to be a difficult even with domain
expertise \citep{rodi}. Thus a key challenge is to learn the mapping
rules from supervision on the target vocabulary -- we know some tuples
$\vec t$ that should or should not be inferred in the target
vocabulary, called \emph{positive} and \emph{negative} facts.

\begin{example} [\cref{ex:running} continued]
  \label{ex:obda}
Consider the setting with source relation
\verb|Person|, target relation \verb|Author|, and facts:
$Person(alice, 45, 1)$, $Person(bob, 34, 1)$, $Person(joe, 23,
2)$, $Person(lola, 12, 2)$. Supervision might consist of:

\begin{tabular}{l l}
  Positive: & \verb|Author(alice)|, \verb|Author(bob)| \\
  Negative: & \verb|Author(joe)|, \verb|Author(lola)|
\end{tabular}
\end{example}

In ~\Cref{ex:obda}, we are looking for mapping rules between source relation
\verb|Person| and target relation \verb|Author| that allow for
deducing that Alice and Bob are authors, and that cannot be used to
prove that Joe and Lola are authors.

The number of possible mapping rules is generally large, much larger
than what can be enumerated. In data integration, there are typical
patterns in which the source and the target may differ, and when
domain experts construct mappings by hand, they tend to try these
typical patterns to find the one that fits the task at hand. We
formalise these patterns as \emph{mapping templates}, e.g.:
$$T(x) \leftarrow S_1(x) \land S_2(x,y) \land S_3(y)$$ where variables
in the head (the variable $x$ above) are universally quantified and
the rest of the variables (the variable $y$ in the example above) are
existentially quantified. $T$ and $S_1, S_2, S_3$ are template
variables over predicate names in the target and source language,
respectively. Any instantiation of template variables yields a mapping
rule. Such templates are assumed in most prior work in the area,
e.g. \cite{imperialrules,uclrules}.
In particular, we support mapping templates of the form
\[
H(x_1 \ldots x_k)
\datalogarrow \bigwedge_{i \leq b} C_i(\vec y_i), \bigwedge_{j \leq k}
x_i =\tau_i
\]
Mapping rules are formed by replacing template variable $H$ by a
target predicate, template variables $C_i$ by source predicates and
variables $y_i$ by either variables or source constants. The terms
$\tau_i$ are formed from applying string concatenation to either
variables or strings. 

\begin{example}
  To illustrate the usage of string concatenation, we provide a real mapping rule from the NPD challenge (to be described below).
  
\begin{align*}
  & \mathrm{Agent}(x) \datalogarrow \\
  & C_1(y_1) \land C_2(y_2) \land C_3(y_3) \land C_4(y_4) \land \\
  & x = \textrm{CONCAT}(\textrm{http://sws.ifi.uio.no/data/npd-v2/baa/}, y_1, \textrm{/licensee/}, y_2, \textrm{/history/}, y_3, \textrm{/}, y_4)
\end{align*}

This mapping rule aligns target concept $\textrm{Agent}$. In the
target language, agents are represented as URL strings.  Components of these strings
are fixed for all agents, such as the prefix
$\textrm{http://sws.ifi.uio.no/data/npd-v2/baa/}$. Other parts
are derived from four source predicates $C_1, C_2, C_3, C_4$. The
source predicates correspond to columns in the database -- we omit
their description in the example.
\end{example}

We assume that mapping templates $\mt$ are provided by domain experts.
Our task is to find a subset $\m$ of the instantiations of the
templates such that the source database instance $I_{\source}$ and the
mapping rules $\m$ together imply all the positive facts $\possup$ and
none of the negative facts $\negsup$. When an exact solution is not
achievable, we can also consider a relaxation of the problem, i.e., we
want to cover ``as many as possible'' of the positive facts and ``as
few as possible'' of the negative facts.

The number of possible rules is infinite, due to the number of
possible strings in concatenation terms. However, we will only be
interested in the rules that can produce a given target fact. For any
positive or negative fact $F$ and mapping template $MT \in \mt$, we
define the \emph{candidates} of $F$ with respect to $MT$ to be the set
of all instantiations $M$ of $MT$ such that $M$, together with the
source instance implies $F$:

$$\mathrm{candidates}(F, MT) = \{M | M \in MT, (M \land I_{\source} \vDash F)\}$$ 

Taking into account that the source database is finite,
$\mathrm{candidates}(F, MT)$ is a finite set and typically small enough
so that it can be obtained via preprocessing.

\begin{example}[\cref{ex:running} continued]
  \label{ex:obda2}
In our example, the rules that derive \verb|Author(alice)|
are: $R_0 = Author(x) \leftarrow \exists a, t. Person(x, a, t)$,
$R_1 = Author(x) \leftarrow \exists a. Person(x, a, 1)$, $R_2 =
Author(x) \leftarrow \exists t. Person(x, 45, t)$. $R_0$ and $R_1$
also derive \verb|Author(bob)| while $R_2$ does not. We obtain the
following candidate sets:

\begin{tabular}{l l | l l}
  \verb|Author(alice)| & $\{R_0, R_1, R_2\}$ &
  \verb|Author(bob)| &  $\{R_0, R_1, R_3\}$ \\
  \verb|Author(joe)| & $\{R_0, R_4, R_5\}$ &
  \verb|Author(lola)| & $\{R_0, R_4, R_6\}$
\end{tabular}
\end{example}

In \Cref{ex:obda2}, each fact has three candidates and we have seven rules in total. R0
proves all facts, R1 proves all positives and none of the negatives,
R2 and R3 prove some of the positives, R4 proves all the negatives, R5
and R6 prove some negatives. Clearly, R1 is the optimal choice as a
single rule.

Let us consider a function $\classifier_{\params}: \mathrm{fact}
\rightarrow \mathrm{rule}$ that assigns to each fact a correct mapping
rule. Approximating this function via learning can greatly reduce the
labor cost of data integration. Given a set of positive facts
$\{\possup^{(i)}\}_{i=1}^{\nsamples_p}$ we can compute the corresponding
candidate rule sets $\{\mathrm{candidates}(\possup^{(i)},
MT)\}_{i=1}^{\nsamples_p}$, which together constitute a partially labelled
dataset for learning $\classifier$.

Analogously, we can use negative facts
$\{\negsup^{(i)}\}_{i=1}^{\nsamples_n}$ to extract a \emph{negative}
partially labelled dataset for learning $\classifier$.  Negative supervision
represents global constraints and requires special treatment. Given a
negative sample $(\myinput, \mylabel)$, the labels in $\mylabel$ are
explicitly forbidden for \emph{any} input. Theoretically, this is
equivalent to a partial labelling that excludes globally these
outputs, however, producing complementer sets of forbidden label sets
can be problematic in practice when the output space is large. Let
$A_{neg} = \{\mylabel | (\myinput, \mylabel) \mbox{ is a negative example} \}$ be
the set of all label sets that appear in some negative example. Given
a loss function $\Loss$ for positive disjunctive supervision, we introduce a new loss
term $\Loss_{neg}(\probs) = \sum_{\myinput \in A_{neg}'} -\Loss(\probs, \myinput)$ where $A_{neg}' \subseteq A_{neg}$ is $50$ samples selected uniformly at random from $A_{neg}$ for each update step. This term quantifies the extent to which negatives are violated and it is,
weighted by a hyperparameter $\gamma$\footnote{$\gamma$ represents the
tradeoff between fitting to positive and negative datapoints.}
added to the loss function:
$$\Loss'(\probs, \myinput) = L(\probs, \myinput) + \gamma \Loss_{neg}(\probs)$$

Recall that the input space is the set of all possible atoms
expressible in the target language, while the output space is all
possible mapping rules. 
Although there are only finitely many options when conditioned on the
supervision and the source database -- when we only consider
mapping rules that derive some fact -- even in this case the output
space remains huge. It can easily reach hundreds of thousands of
rules. Directly training a model with so many outputs is challenging
and such an approach would neglect similarities across rules. For this
reason, we instead represent inputs and outputs as text, i.e., as
sequences of tokens, yielding a sequence-to-sequence language
modelling task with disjunctive supervision. As discussed in
\Cref{sec:prelims}, autoregressive models can be used to model
problems with sequential outputs: model predicted probabilities
$\probs$ can be calculated in a sequence of evaluations. Consequently,
any loss function that takes $\probs$ and label $\mylabel$ as input
can be applied directly, without modification for optimization,
independent of the architecture.  In the following we describe the
novel datasets that we extracted from the \cite{rodi} benchmark and
that are used in the experiments presented in
\Cref{subsec:rulelearningexp} to compare various loss functions.

\subsection{RODI Challenges} \label{subsec:rodi}

The RODI dataset was introduced in \cite{rodi,rodi2} as a benchmark for
systems that integrate a set of source relational schemas into a
target graph schema.  Each challenge provides a target schema
consisting of unary and binary relations and a source relational
database. The task is to find mapping rules that define concepts in
the target using query expressions over the source database.

The challenges are synthetically generated starting from an
instance of the target schema, generating a source schema. The target
schema consists of binary relations (\emph{properties}) and unary
relations (\emph{classes}).  The source schema generation involves one or
a combination of typical -- real life inspired -- distortions that
make the alignment nontrivial. For competition purposes, RODI
provides the target schema (without data), the source data, and a list
of \emph{translation pairs} (source query, target query) that can be
used for evaluation. In each pair one is a SPARQL~\citep{sparql} query against the
target and the other is an SQL query against the source database. In case of
correct mapping, the two queries have to return the same result.  The
target schemas (\emph{ontologies}) are based on three conference management systems: CMT,
SIGKDD and CONFERENCE. RODI uses the distortions described in \Cref{tab:distortions} (see \cite{rodi} for more details):

\begin{table}
  \label{tab:distortions}
\caption{Synthetic distortions applied to make the alignment task harder.}
\small
\begin{tabular}{p{0.15\textwidth} p{0.75\textwidth}}
  {\bf Distortion} & {\bf Description} \\
  \toprule
  renaming & Classes and properties have different names in the
  ontology and the database \\
  cleaning & Foreign keys in the database are removed,
  making it harder to join tables. \\
  restructuring & Class hierarchies are represented using
  attributes indicating subclass membership. \\
  denormalising & Correlated information is jointly stored in the same
  table, redundantly.\\
  \bottomrule
\end{tabular}
\end{table}

For each predicate of each challenge, we sample $\nsamples$ positive
tuples that satisfy the predicate and $\nsamples$ negative tuples that
do not satisfy it. The positive tuples are sampled uniformly from the
tuples returned by the provided SQL query for that predicate. For
sampling negatives, we use random constants for each tuple position,
selected uniformly from the constants of the database with matching
type and ensuring no overlap with the positives.

We obtain 5 datasets for each domain (one without distortion and four
with one of the above distortions) that contain 1500-2000 positive
samples and a maximum of 55 candidates for each input. We find that
the different domains yield no new insights and preliminary
experiments suggest similar performance. Hence, we focus on the CMT
system and experiment with the 5 challenges associated with it in
\Cref{subsec:rulelearningexp}. Our distribution contains the extracted
CMT datasets, as well as code to generate datasets for any domain.

\subsection{NPD Challenge}
Besides the synthetically generated challenges, \cite{rodi} provide a
real world dataset related to the Norwegian Petroleum Directorate
(NPD) FactPages~\citep{npd}. The source data and the target schema
were constructed from publicly available data and the translation
pairs were built from real use cases from end users of the
FactPages. The source database contains ~40MB data and has a rather
complex structure with 70 tables, ~1000 columns and ~350 foreign
keys. The target schema has ~300 classes and ~350 properties.
Existing tools (e.g. \cite{bootox,incmap}) for this task rely
completely on the structure of the source and target, and are unable
to infer any relationships in a challenge like this.

Positive facts are sampled uniformly, just like for RODI. For sampling
negatives, however, we find that uniform sampling yields facts that
have extremely small probability of being provable by the rules
required to prove positives, making it rather easy to avoid
negatives. This is because the rules required to align NPD are much
more complex than those for RODI. For this reason, negative tuples are
sampled uniformly not from the entire database, but only from
constants appearing in positive tuples of other predicates. We
observe that this way of sampling negatives makes aligning NPD
harder, since many of the candidates of positive facts have to be
eliminated as they also prove some of the negatives.

We end up with a dataset consisting of $34965$ positive facts,
using $421$ target predicates. Over $98\%$ of the facts have less than
$1000$ candidates and we truncate the set of allowed candidates to
$1000$ for computational reasons. Our distribution includes the
extracted dataset, as well as code to generate a new dataset.

%auto-ignore
\section{Experiments}
\label{sec:experiments}

Our experiments aim to provide a quantitative overview of how
different loss functions perform on learning from partially labelled
data both in the PLL and DS settings, as well as to demonstrate the
practical benefit of the newly introduced $\prploss$. We employ three
types of datasets:

\begin{compactenum}
\item Synthetic inputs, synthetic outputs (PLL): These experiments,
  presented in \Cref{subsec:synexp}, examine extremely simple scenarios
  aimed at highlighting failure cases of various loss functions.
\item Real inputs, synthetic outputs (PLL): This is the setup typically used
  to evaluate PLL methods in the literature. We present two experiments
  in \Cref{subsec:realsynexp} based on the CIFAR10 and CIFAR100 datasets.
\item Real inputs, real outputs (PLL and DS): This is the most challenging and most
  important scenario. We experiment with a novel rule learning dataset
  for DS in \Cref{subsec:rulelearningexp}, as well as a collection of
  standard benchmarks for PLL in \Cref{subsec:pll_real}.
\end{compactenum}

Before moving on to the experiments, we provide an overview of the
loss functions from the literature that we use as competitors in
\Cref{subsec:competitors}. We end the section with a discussion of the
results in \Cref{subsec:discussionexp}.

%auto-ignore
\subsection{Competitors}
\label{subsec:competitors}
We overview the alternative approaches from the literature we compete with in the following experiments.

\introparagraph{Negative Log Likelihood loss (NLL)}
The $\nllloss$, defined in \Cref{sec:prelims} is the standard example of an average-based loss that
appears in the literature, often under different names. For example, it
is called the \emph{maximum marginal likelihood (MML) } loss in
\cite{guu-etal} and the \emph{classifier consistent (CC)} loss in
\cite{provablyconsistentpll}. We repeat the definition:
$$
\Loss_{\nll}\left(\probs, \mylabel \right) 
	= - \log \left(\probs \cdot \mylabel \right) 
	= - \log \left( \sum_i \mylabelscalar_i \cdot \probsscalar_i \right)
$$

\introparagraph{Uniform loss}
A very simple baseline is to compute the negative log likelihood of each allowed output and optimize their sum:
$$
\Loss_u(\probs, \mylabel)  = - \sum_i \mylabelscalar_i \log(\probsscalar_i)
$$
This is an average-based method
and it differs from the disallowed term of the $\prploss$ only by a multiplicative factor of $\frac{1}{\numallowed}$.
This loss has a single optimum, when the prediction is
uniform on the allowed outputs and zero elsewhere.  We refer to this
as $\uloss$.

\introparagraph{$\beta$-Meritocratic loss}
Recall that \cite{guu-etal} consider the semantic parsing application of  DS, overviewed in
\cref{ex:semparse}.
They propose the $\bmeritloss$:
$$
\Loss_{\bmerit}(\probs, \mylabel) = - \sum_i w(\beta)_i \log(\probsscalar_i),
$$ where each output $i$ is associated with a weight $w(\beta)_i =
\frac{\mylabelscalar_i \cdot \left( \probsscalar_i / P_{acceptable}
  \right)^\beta}{\sum_q \mylabelscalar_q \cdot \left( \probsscalar_q /
  P_{acceptable} \right)^\beta} $ with $P_{acceptable} = \sum_j
\mylabelscalar_j \probsscalar_j$.  A technical caveat is that the
dependence of $\vec w$ on the model output is disregarded during
optimization, i.e., no gradients are propagated through it. This holds
for $\vec w$ in all other loss functions inroduced below.

Notice, that the $\beta$ parameter provides one possible smooth
interpolation between two losses: $\nllloss = - \log \left( \sum_i
\mylabelscalar_i \probsscalar_i \right)$ and $\uloss = - \sum_i
\mylabelscalar_i \log(\probsscalar_i)$.  More specifically, the
$\bmeritloss$ has the same gradient as $\nllloss$ when $\beta=1$,
since the denominator of $w(\beta)_i$ becomes $1$ thus can be ignored.
On the other hand, $\bmeritloss$ with $\beta=0$ is equivalent to
$\uloss$, which is minimized where the entropy on the acceptable
outputs is maximal, i.e., when each of the $k$ acceptable outputs has
probability $\frac{1}{k}$. All three losses focus solely on the
probabilities of the acceptable outputs, since $w(\beta)_i=0$ when
$y_i = 0$.  \cite{guu-etal} observe that while there is no universal
$\beta$ across datasets, tuning this hyperparameter can greatly
increase convergence speed and slightly improve final accuracy. In our
experiments, we report the extreme values as $\uloss$ and $\nllloss$
and let $\bmeritloss$ refer to the best performing $\beta$ for the
given task from the set $\{0.25, 0.5, 0.75\}$.

Note that this interpolation is similar in spirit to the
$\prploss$, which has two terms: one similar to $\nllloss$ 
and has the winner-take-all property, while the other is an entropy
regularizer and pushes the probabilities towards uniform
distribution. What is different is that $\prploss$ does not have an
extra $\beta$ parameter: the strength of the two loss terms depends
implicitly on how well the model fits to the sample. In particular, $\prploss$ 
is similar to $\nllloss$ when most of the probability mass has accumulated on the 
acceptable outputs, which happens towards the end of training. At the beginning 
of training, however, the entropy regularizer term has a stronger effect.

\introparagraph{Leverage-weighted loss (LW)} \cite{pllleveraging}
introduce \emph{leverage weighted} loss, as a family of loss
functions based on the unnormalized model outputs or logits $\logits$ and focus in
particular on the following loss:

$$\Loss_{LW}(\logits, \mylabel) = \sum_i \mylabelscalar_i w_i \sigmoid(\logitsscalar_i) + \beta \sum_i (1-\mylabelscalar_i) w_i \sigmoid(-\logitsscalar_i)$$

where $\sigmoid(t) = \frac{1}{1+e^t}$. The loss has two terms, one for
allowed outputs ($\mylabelscalar_i = 1$) and one for disallowed outputs
($\mylabelscalar_i = 0$) and the leverage hyperparameter $\beta$ controls
their relative importance. The authors achieve best empirical results
with $\beta=1$ most of the time and sometimes with $\beta=2$. The
results presented in our experiments use the best performing value
from $\{0.5, 1, 2\}$, which turns out to be $\beta=1$ in all cases.

Each output $i$ is associated with an input dependent
weight $w_i$, which is defined as the likelihood assigned to the
output by the model, normalized so that weights for allowed and
disallowed outputs both add up to one:

$$w_i = \begin{cases}
  \frac{e^{\logitsscalar_i}}{\sum_j \mylabelscalar_j e^{\logitsscalar_j}} ~& \text{if} ~ \mylabelscalar_i = 1 \\
  \frac{e^{\logitsscalar_i}}{\sum_j (1-\mylabelscalar_j) e^{\logitsscalar_j}} ~& \text{if} ~ \mylabelscalar_i = 0 \\
\end{cases}$$

This is a typical identification-based loss: the model predicted $w_i$
values are used to ``identify'' how much an allowed/disallowed output
should be rewarded/penalized for fitting. We refer to this as
$\lwsloss$.

\introparagraph{Risk-consistent loss (RC)}
A similar identification-based approach is provided in \cite{provablyconsistentpll}, using loss function

$$\Loss_{RC}(\probs, \mylabel) = - \frac{1}{2} \sum_i \mylabelscalar_i w_i \log(\probsscalar_i)$$

For each allowed output $i$ the negative log likelihood loss ($-\log(\probsscalar_i)$) is weighted by 

$$w_i = \frac{\probsscalar_i}{\sum_j \mylabelscalar_j \probsscalar_j}$$

which is the model predicted probability of output $i$, normalized to
the allowed outputs.  \cite{provablyconsistentpll} refer to this as risk-consistent loss (and we abbreviate as $\rcloss$).
We will not discuss risk-consistency  -- a  property of partial labelling losses that was established for $\rcloss$ under a
particular noise model: see  \cite{provablyconsistentpll} for a definition and discussion.

\subsection{Synthetic Experiments}
\label{subsec:synexp}

\introparagraph{Small consistent synthetic dataset} Recall that \Cref{ex:3outputs}
presented an extremely simple situation with $\outdim=3$ outputs and
$\nsamples = 2$ samples with the same input $\myinput$: $(\myinput,
\{A, B\})$ and $(\myinput, \{A, C\})$, i.e., each sample having
$\numallowed=2$ allowed outputs. To scale this example up, let us
consider a problem with $\outdim = 100$ possible outputs and a dataset
of $\nsamples=10$ samples, each having the same input vector
$\myinput$ and $\numallowed=10$ allowed outputs. In each of the $10$
samples $\mylabel^{(1)}, \ldots \mylabel^{(10)}$ allows output $o_0$
together with $9$ different values from among  $o_1 \dots
o_{10}$.\footnote{E.g.\ $\mylabel^{(1)} = \{o_0, o_2, o_3, \ldots,
o_{10}\}$, $\mylabel^{(2)} = \{o_0, o_1, o_3, \ldots, o_{10}\}$, \ldots
$\mylabel^{(10)} = \{o_0, o_1, o_2, \ldots, o_9\}$.} In this dataset,
there are $10$ outputs $o_1 \dots o_{10}$ that are ``almost good'' in
the sense that they are acceptable for $9$ out of $10$ samples and
there is a single output $o_0$ that is acceptable in all samples.
Hence, the only consistent solution is to select $o_0$. This example
highlights the challenge of identifying the correct label when some
alternative label has a large ``support'', i.e., when it is acceptable
by many samples, while not all of them.

\begin{figure}[htb]
  \centering
  \includegraphics[width=0.70\textwidth]{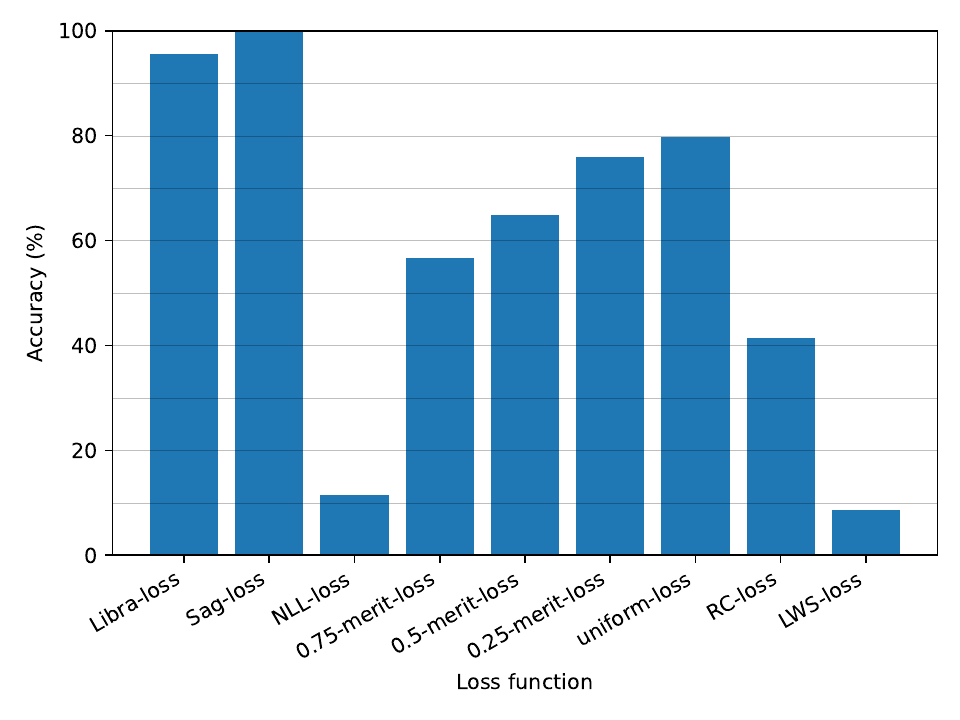}
  \caption{Average training accuracy on the small consistent dataset
    over 1000 random initializations for various loss functions.
    $\prploss$ and $\biprploss$ almost always find the optimal
    solution ($95.5\%$ and $99.9\%$), while $\nllloss$, $\rcloss$ and
    $\lwsloss$ perform extremely poorly on this task. $\bmeritloss$
    alleviates this weakness and reaches $79.7\%$ in the extreme case
    of $\beta=0$ ($\uloss$).
  }
  \label{fig:small_consistent_dataset}
\end{figure}

\paragraph{Network}
For each loss considered, we train an MLP with a single hidden layer of
$50$ neurons on this dataset, with $1000$ different random seeds.  We
employ Glorot~\citep{glorot} initialization.

\paragraph{Results}
We report average accuracy on the training set in
\Cref{fig:small_consistent_dataset}.  The models trained with
$\prploss$ and $\biprploss$ robustly find the output that is
consistent with all samples. However, for the other losses this is
often not the case. Depending on random initialization, some of the
suboptimal outputs can have higher initial probability, resulting in
them getting greater gradients, even greater than $o_0$ which is
promoted by all samples during optimization. This behaviour arises
when the strongly ``supported'', yet suboptimal, output has higher
initial probability than the single optimal output.
On the project webpage, we drill down to provide visualizations of $10$
randomly selected learning curves for each loss function.

We have seen earlier that the winner-take-all dynamics of $\nllloss$
makes it extremely sensitive to initialization. This, however, also
holds for the identification based approaches: $\rcloss$ and
$\lwsloss$. These methods weigh the loss for each output with the
model's own prediction: i.e., when allowed label $A$ is predicted to
be more likely than allowed label $B$, $A$ will be promoted more,
making the probability gap between $A$ and $B$ even greater. We argue
that this phenomenon is what makes these three losses perform so
poorly. $\bmeritloss$ reduces the winner-take-all effect and we get
better results as we decrease $\beta$. $\uloss$, which is completely
insensitive to the current model configuration and merely tries to
reach uniform distribution on the allowed outputs performs
surprisingly well, although still consistently worse than $\prploss$.

\introparagraph{Large consistent synthetic dataset} While the previous
example is useful to intuitively understand the harmful
``winner-take-all'' behaviour of $\nllloss$, it is very restrictive,
since it assumes a setting with multiple competing samples for the
same input vector.  In a more realistic scenario there 
are few (or no) samples with the same input and hence the
interaction among points is more subtle. More specifically, learned
models are functions that display some degree of smoothness.  As a
result, samples with similar features, i.e., similar input vectors,
will get similar predictions, affecting each other's prediction
accuracy.  In our next experiment, we aim to simulate this by building
a large synthetic dataset with partial labels. Our dataset has
$\nsamples=100,000$ samples, $\indim=100$ input dimension and
$\outdim=100$ possible outputs.  First, we produce a set of synthetic
input vectors with their corresponding true labels, as follows: We
uniformly sample $\outdim$ corners of a hypercube in
$\mathbb{R}^{100}$, i.e., from $\{0, 1\}^{100}$ which will function as
our cluster centroids. Each cluster will correspond to one true label,
ensuring that samples that have similar input will likely share their
true output.  Then, we utilize a mixture of $\outdim$ Gaussian
distributions (having standard deviation $1$) with our selected
centroids, and sample $\nsamples=100,000$ input vectors. Each input
vector $\myinput^{(i)}$ is assigned a true output $\truetarget^{(i)}$
corresponding to the Gaussian from which it was sampled.

With the input samples defined, we randomly select partial/distractor labels for each sample. Distractor selection is controlled by the following two parameters:

\begin{definition}[$\prate$]
In the context of a random  PLL dataset as above,
the Distractor pool fraction ($\prate$) 
is the  fraction of the output labels that can appear as distractors for any given true 
label.
\end{definition}

For example, if $\prate = 0.2$ and there are $\outdim = 100$ outputs
in total, then for each true label $c \in [\outdim]$ we select
(uniformly at random) $100 \cdot 0.2 - 1= 19$, other labels, which --
along with $c$ -- form the distractor pool $D(c)$. For each input
$\myinput^{(i)}$ the partial labels are constrained to be from $D(\truetarget^{(i)})$. The second parameter controls the \emph{strength of distraction}:

\begin{definition}[$\srate$]
The Distractor co-occurrence fraction ($\srate$) is the fraction
  of  inputs that are affected by any particular distractor from the
  distractor pool. More precisely, for any label $c$ and potential
  distractor $c' \in D(c)$ the fraction of inputs with true label $c$
  and distractor $c'$ is $\srate$.
\end{definition}

For example, if $\srate = 0.1$ and there are $1000$ inputs with true
label $c$, then distractor $c' \in D(c)$ will be present $1000 \cdot 0.1 =
100$ times as a distractor in the label sets of inputs with true label
$c$. A high $\srate$ means that the distractors are strongly ``supported'', 
i.e., are almost indistinguishable from true labels.

In the preceding example (small consistent synthetic dataset), $\srate
= 0.9$ since each distractor occurs in $9$ out of $10$ samples and
$\prate = 0.11$, since $11$ out of the $100$ possible outputs appear
in the label sets.

We note that $\prate$ and $\srate$ are just two of the many possible
ways of characterising this noise model. $\prate$ was motivated by the
observation that all losses are very sensitive to the number of
distractors and the motivation for $\srate$ comes from observing that
in the real world rule learning datasets, high $\srate$ made learning
much harder (see \Cref{subsec:rulelearningexp}).
The employed noise
model is \emph{instance-independent}, meaning that partial label
$\mylabel$ is independent from input $\myinput$ given true label
$\truetarget$.

\paragraph{Network}
We alter $\prate$ and $\srate$ and train models with various loss
functions. As underlying network, We use the same MLP model from
\cite{pllleveraging}, having $5$ layers and $333,108$ parameters. We
run each experiment 9 times, using 3 seeds for dataset generation and
3 seeds for training.

\paragraph{Results}
\Cref{fig:synthetic_large_consistent} shows model accuracies for
different loss functions, as well as $\prate$ and $\srate$ values. On
all plots, we see a clear downward trend in performance as we increase
$\prate$, with the exception of $\prploss$ and $\uloss$. We argue that
this is due to the winner-take-all behaviour: as we increase $\prate$,
there are more and more distractors, so the chance of one of them
getting significantly greater initial probability than the true label
increases, which makes it impossible to recover the true label. This
trend is greatly exacerbated by increasing $\srate$: when $\srate$ is
high, distractors are ``almost as good'' as the true label, so it gets
easy to confuse them. $\prploss$ and $\uloss$ demonstrate extreme
resistance against this kind of distraction. As in the previous
experiment, $\prploss$ performs consistently better than $\uloss$.

\begin{figure}[thb]
  \centering
  \includegraphics[width=0.32\textwidth]{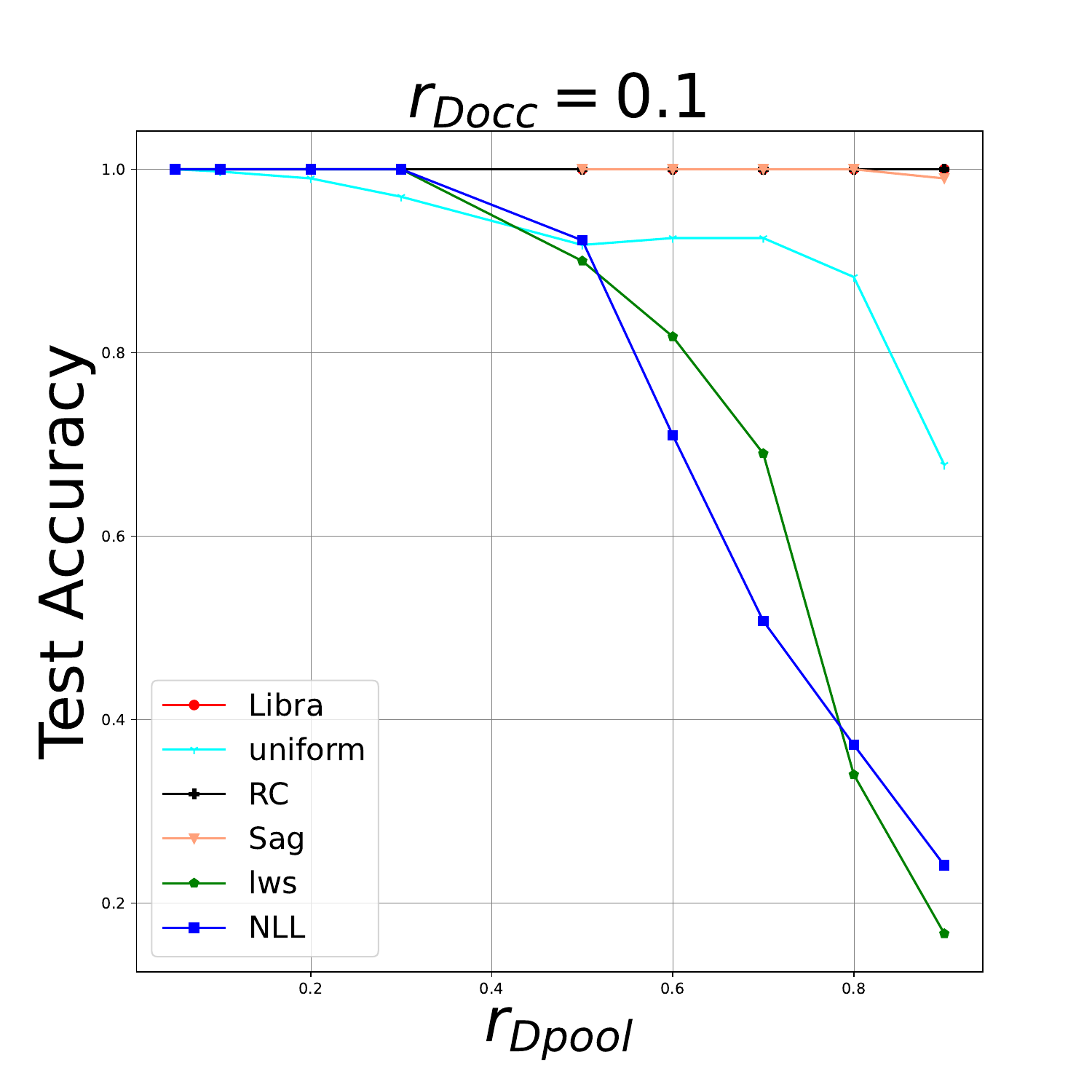}
  \includegraphics[width=0.32\textwidth]{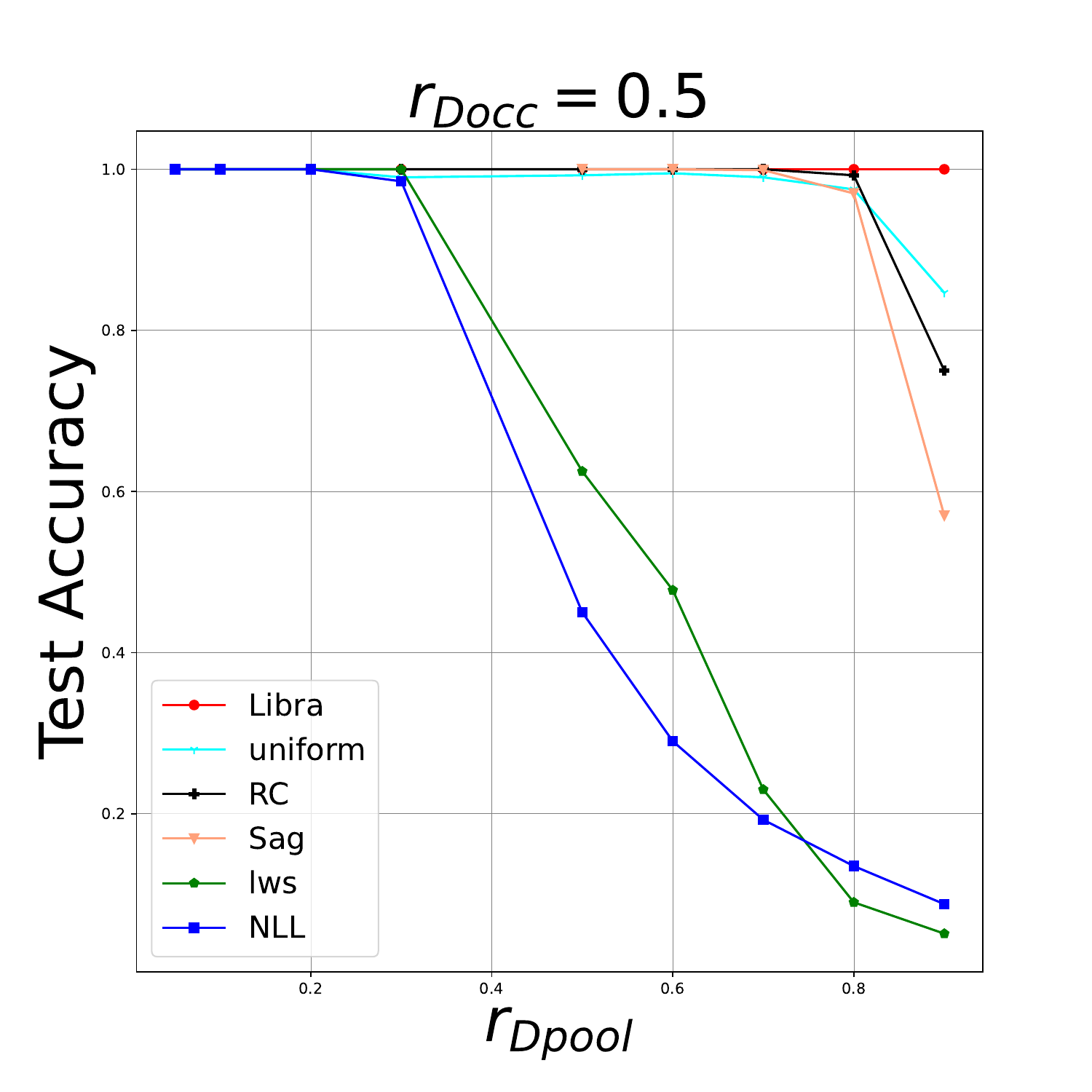}
  \includegraphics[width=0.32\textwidth]{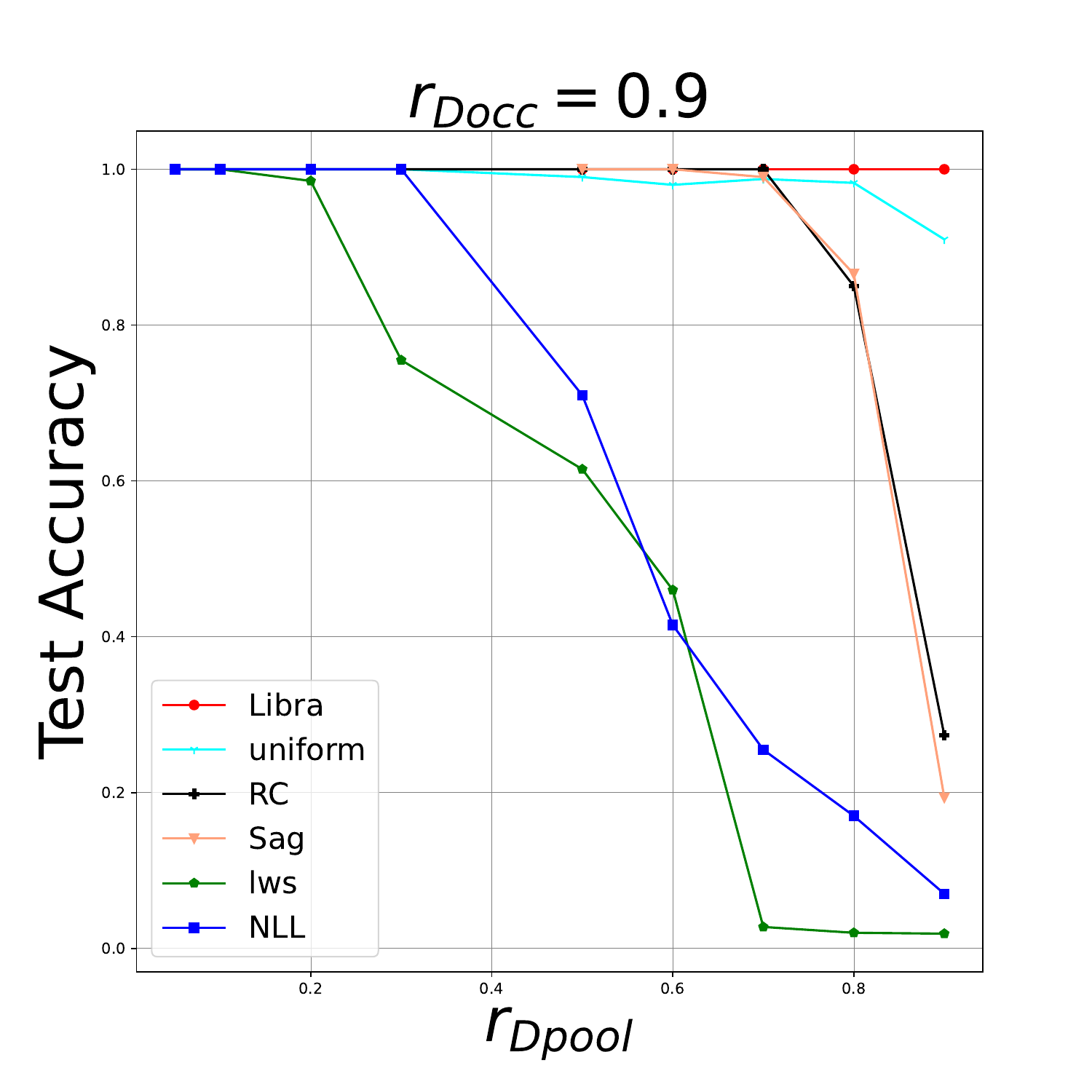}
  \caption{Test accuracy on the large consistent synthetic dataset for
    different combinations of $\prate$ and $\srate$ and different loss
    functions. We show mean values over 9 trials, using 3 seeds for
    dataset generation and 3 seeds for training.
  }
  \label{fig:synthetic_large_consistent}
\end{figure}

\subsection{Experiments with Real Datasets and Synthetic Distractors}
\label{subsec:realsynexp}

To better understand the practical value of learning methods for PLL
and DS, we can start from a real fully-labelled dataset instead of a
synthetic one, and generate distractor labels according to the noise
model. This approach is often taken in the literature to evaluate PLL
methods. We use the setup from \cite{pllleveraging}, starting from the
CIFAR10~\citep{cifar} image classification benchmark and apply various
true label dependent (instance-independent) noise models. \cite{pllleveraging} define three
cases, to which we add two harder ones and refer to them as ``Case 1''
\dots ``Case 5''. The noise models corresponding to these $5$ cases are
described in detail on the project webpage.
CIFAR10 has 10
possible outputs and out of the $9$ non-correct labels the expected
number of distractors is $0.5$, $0.6$, $1.8$, $4$ and $7.1$ for the
$5$ cases, respectively.

\paragraph{Network}
We train on this dataset the CNN model from \cite{pllleveraging}, that
has 9 convolutional layers and $4,434,570$ parameters.

\paragraph{Results}
\Cref{fig:cifar10_cases} shows the performance of several loss
functions trained on these datasets. Unsurprisingly, performance
decreases as the distraction is stronger, however, the only loss that
shows catastrophic collapse is $\lwsloss$.  Also note that while
$\uloss$ performs very well on purely synthetic inputs, it is clearly
inferior to the other competitors in this setup.  Some initial
experiments with $\prploss$ show easy overfitting, requiring careful
early stopping to avoid a drop in final accuracy. We overcome this by
introducing a weight $w_{\prplosssubscript} = 1-\sum_i y_i p_i$ that
makes the loss vanish as the model gets close to fitting. This weight
is used in all subsequent experiments.  Experiments with $\biprploss$
reveal that it is rather unstable. The explicit loss term that
penalizes each disallowed label makes the average of the logits
$\logits$ tend to minus infinity and training quickly reaches a
configuration that yields numerical instability. We managed to
overcome this by adding an extra L2 regularization term to the loss
that penalizes the magnitude of the logit vector:
$$
\Loss_{\textrm{logit}} = \gamma_{\logits} \sum_i \logitsscalar_i^2
$$ where $\gamma_{\logits}$
is a hyperparameter determining the importance of this loss term and
it is set to $0.01$ in our experiments. This regularization
successfully stabilised learning with the $\biprploss$, however, we
find that it performs consistently worse than $\prploss$. All later experiments with $\biprploss$ makes use of this regularization term.

\begin{figure}[thb]
  \centering
  \includegraphics[width=0.5\textwidth]{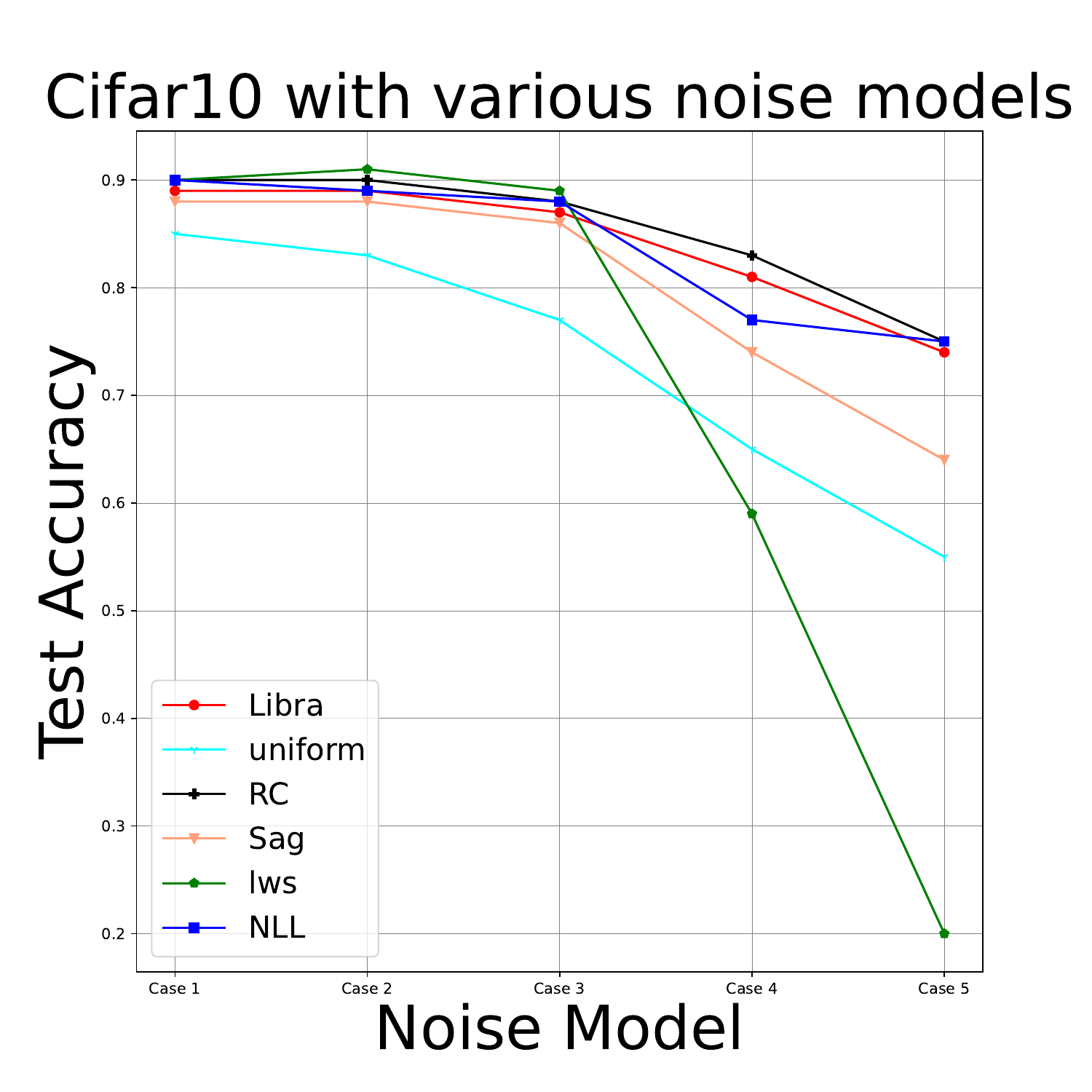} 
  \caption{Performance of various loss functions on a PLL dataset extracted from CIFAR10 and various noise models applied.}
  \label{fig:cifar10_cases}
\end{figure}

In our next experiment we evaluate the effect of changing $\prate$ and
$\srate$ on the much harder CIFAR100 dataset, which has 100 labels.

\paragraph{Network}
We use the $18$-block residual network from ~\cite{resnet18}, as
implemented in \cite{cifar100_models}. This model has $11,220,132$ parameters.

\paragraph{Results}
Figure~\ref{fig:cifar100} shows the same trends as observed on
Figure~\ref{fig:synthetic_large_consistent}: performance degrades as
$\prate$ (number of distractors) and $\srate$ increase (strength of
distraction) increase. However, $\prploss$ shows remarkable
robustness.

\begin{figure}[thb]
  \centering
  \includegraphics[width=0.32\textwidth]{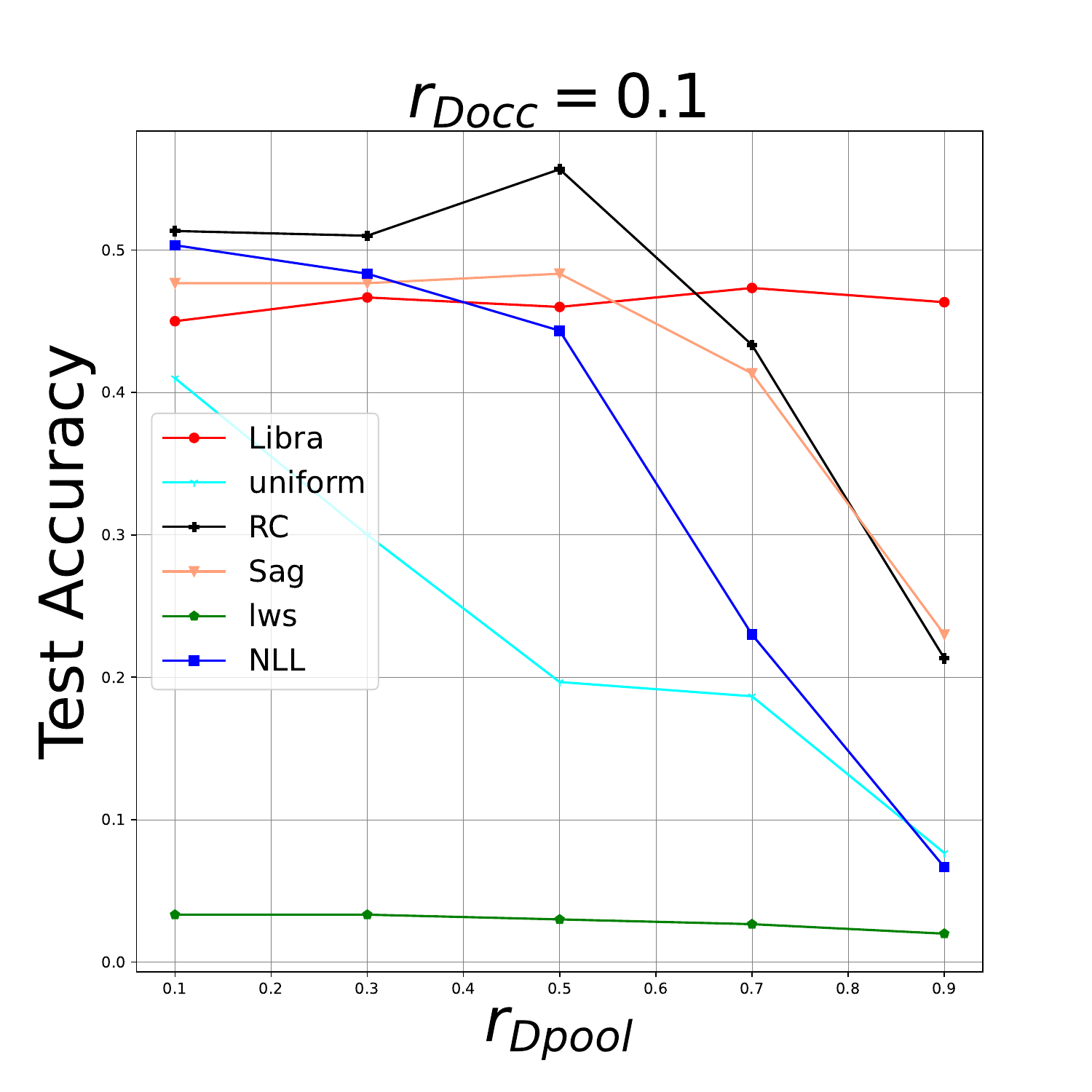}
  \includegraphics[width=0.32\textwidth]{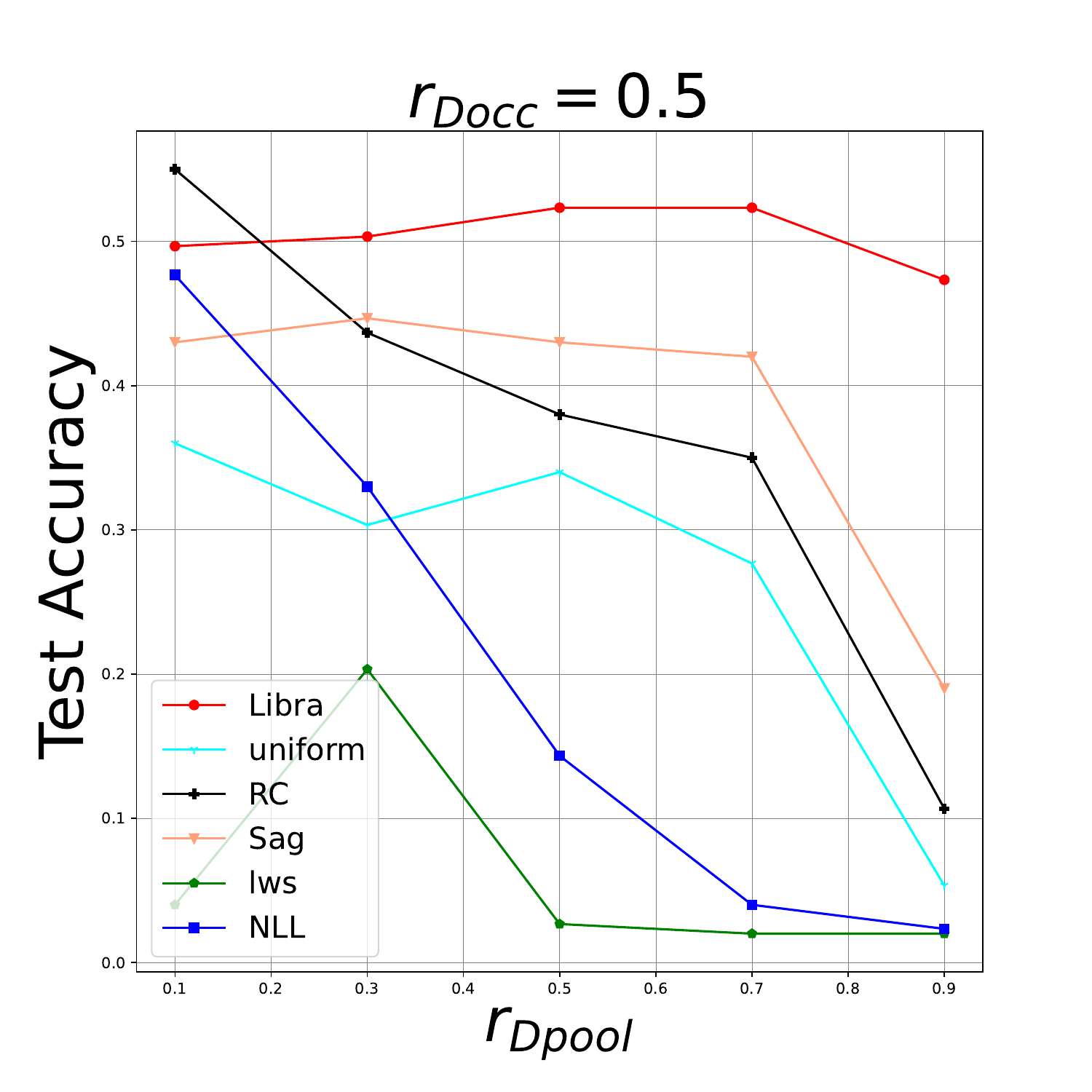}
  \includegraphics[width=0.32\textwidth]{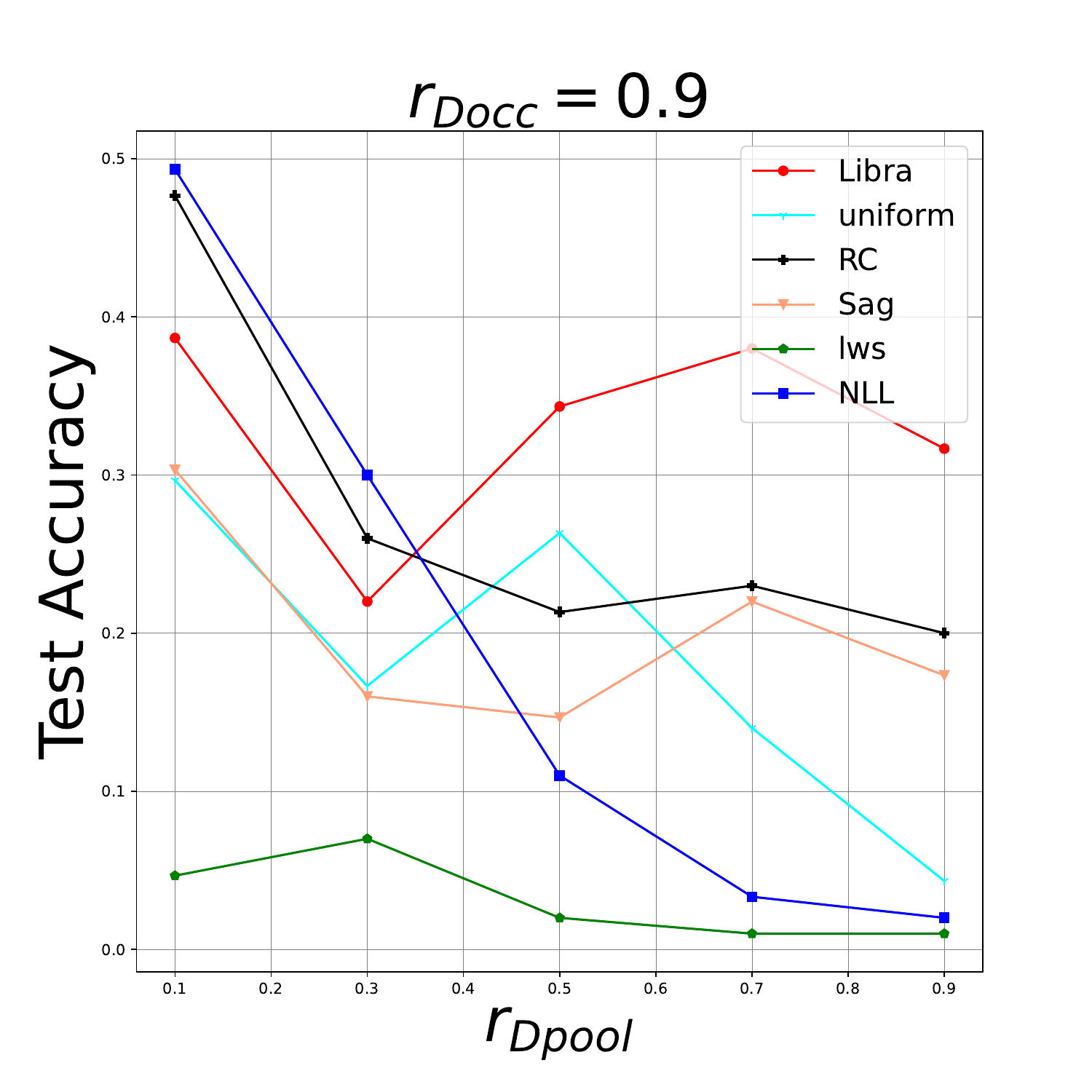}
  \caption{Test accuracy on CIFAR100 for different combinations of
    $\prate$ and $\srate$ and different loss functions. We show mean values over 3 trials, using 3 seeds for dataset generation.}
  \label{fig:cifar100}
\end{figure}

\subsection{Rule Learning Experiments}
\label{subsec:rulelearningexp}

In the following we experiment with the partially labelled rule
learning datasets, introduced in \Cref{sec:obda}. We remind the reader
that these datasets contain negative samples, which are handled as
described in \Cref{sec:obda}. We also recall that these are
sequence-to-sequence datasets, i.e., both the input and the output are
represented as sequences of tokens. As described in
\Cref{sec:prelims}, we use an autoregressive model
$\classifier_{\params}(\myinput,\prefix)$ that outputs a distribution
over single tokens in one step, conditioned on the preceding
tokens. By sequentially evaluating all tokens in a sequence, we obtain
the model predicted probability of the sequence. We recall that the
output space of sequences is huge and we cannot compute the
probability of all sequences, even if we employ some length
limit. Even computing the probabilities of allowed sequences (via
positive supervision) and explicitly forbidden sequences (via
negative supervision) is computation heavy due to the sequential
nature of evaluation. Consequently, we cannot use loss functions that
depend on the probabilities of all possible outputs, such as
$\biprploss$ or $\lwsloss$.

We treat all datasets as disjunctive supervision, i.e., we do not assume
a single unknown correct output.  Given a sample $(\myinput,
\mylabel)$, our primary evaluation metric is the probability of the
model outputting an allowed output:
$$
P_{pos} = \sum_i y_i \probsscalar_i
$$
Furthermore, we compute the probability of the model returning an output from any of the label sets of the negative samples (including training and test samples). Let $I_{neg} = \{i | (\myinput, \mylabel) \mbox{ is a negative example and }\mylabelscalar_i = 1\}$ denote the set of indices of all forbidden sequences. Then, the probability of selecting one of them is:
$$
P_{neg} = \sum_{i \in I_{neg}} \probsscalar_i
$$
We are also interested in $H@k$ metrics, which is the ratio of inputs for
which the $k$ highest scoring outputs according to the model include
either 1) an allowed (positive $H@k$) output or 2) a forbidden
(negative $H@k$) output. Exactly determining the $k$ highest scoring outputs
is not tractable, as it would require evaluating all possible
outputs. Thus we approximate this with beam-search, employing
beamsize $10$. All experiments employ a (70\%, 15\%, 15\%) train-validation-test split.

\introparagraph{CMT challenges} We experiment with the CMT challenges, described earlier
in  \Cref{sec:obda},
and train sequence-to-sequence models.

\paragraph{Network}
We use an encoder-decoder transformer
architecture~\citep{transformer} with embedding
dimension 128 and 4 encoder/decoder layers, having 2.5M
parameters.

The largest rule in the training set contains 17 tokens and the model
generated rules are restricted to 20 tokens.  The $\gamma$
hyperparameter that controls the tradeoff between positive and
negative samples is empirically set to $3$.  A single experiment lasts
for around $7 $hours on a single Nvidia A100 GPU.

\paragraph{Results}
\Cref{tab:rodi_cmt} shows our experimental results. Given the
different metrics, it is hard to come up with an unambiguous ordering of the
loss functions. Nevertheless, $\prploss$ clearly seems to perform best
in terms of predicting allowed outputs for the test samples and
$\bmeritloss$ is second best. $\uloss$ is weaker, but performs
consistently, while $\nllloss$ and $\rcloss$ are overall quite weak
and sometimes extremely weak. As for avoiding forbidden labels,
$\uloss$ tends to perform best, however, this becomes somewhat vacuous
given its mediocre performance on the allowed labels.

\begin{table}[htb]
\small
  \caption{$P_{pos}$, $P_{neg}$, H@1 and H@5 scores on the evaluation set of CMT
    datasets. For $\bmeritloss$, $\beta=0.5$, which provided the
    best results based on a grid search with $\beta \in \{0.25, 0.5,
    0.75\}$.}
  \label{tab:rodi_cmt}
  \centering
  \begin{tabular}{l l | l l l | l l l}
    & & \multicolumn{3}{c |}{\bf Positive} & \multicolumn{3}{c}{\bf
      Negative} \\
    Loss & Distortion & $P_{pos}$ & H@1 & H@5 & $P_{neg}$ & H@1 & H@5 \\
    \toprule
    $\prploss$ & - & {\bf 0.97} & 99\% & 100\% & {\bf 0.03} & 4\% & 7\% \\
    $\nllloss$ & - & 0.83 & 83\% & 85\% & 0.17 & 17\% & 17\% \\
    $\rcloss$ & - & 0.71 & 0.71 & 71\% & 0.2 & 20\% & 20\% \\
    $\bmeritlossB$ & - & 0.79 & 92\% & 100\% & 0.12 & 11\% & 25\% \\
    $\uloss$ & - & 0.69 & 93\% & 99\% & 0.08 & 22\% & 29\% \\
    \midrule
    $\prploss$ & renaming & {\bf 0.94} & 93\% & 99\% & 0.11 & 9\% & 28\% \\
    $\nllloss$ & renaming & 0.77 & 77\% & 79\% & 0.15 & 15\% & 15\% \\
    $\rcloss$ & renaming & 0.52 & 52\% & 52\% & 0.15 & 15\% & 15\% \\
    $\bmeritlossB$ & renaming & 0.78 & 90\% & 100\% & 0.13 & 8\% & 29\% \\
    $\uloss$ & renaming & 0.68 & 94\% & 100\% & {\bf 0.08} & 15\% & 31\% \\
    \midrule
    $\prploss$ & restructuring & {\bf 0.93} & 93\% & 100\% & {\bf 0.03} & 2\% & 26\% \\
    $\nllloss$ & restructuring & 0.32 & 32\% & 32\% & 0.13 & 13\% & 13\& \\
    $\rcloss$ & restructuring & 0.18 & 18\% & 18\% & 0.7 & 7\% & 7\% \\
    $\bmeritlossB$ & restructuring & 0.8 & 86\% & 99\% & 0.07 & 12\% & 21\% \\
    $\uloss$ & restructuring & 0.76 & 96\% & 100\% & 0.06 & 6\% & 27\% \\
    \midrule
    $\prploss$ & cleaning & 0.87 & 89\% & 98\% & 0.14 & 10\% & 29\% \\
    $\nllloss$ & cleaning & 0.71 & 71\% & 72\% & 0.10 & 10\% & 10\% \\
    $\rcloss$ & cleaning & 0.53 & 54\% & 54\% & 0.1 & 10\% & 10\% \\
    $\bmeritlossB$ & cleaning & {\bf 0.88} & 91\% & 100\% & 0.17 & 18\% & 30\% \\
    $\uloss$ & cleaning & 0.66 & 89\% & 98\% & {\bf 0.07} & 17\% & 29\% \\
    \midrule
    $\prploss$ & denormalising & {\bf 0.98} & 100\% & 100\% & 0.14 & 16\% & 20\% \\
    $\nllloss$ & denormalising & 0.32 & 32\% & 32\% & 0.08 & 8\% & 8\% \\
    $\rcloss$ & denormalising & 0.25 & 25\% & 25\% & 0.12 & 12\% & 12\% \\
    $\bmeritlossB$ & denormalising & 0.85 & 88\% & 100\% & 0.1 & 9\% & 17\% \\
    $\uloss$ & denormalising & 0.75 & 91\% & 100\% & {\bf 0.04} & 6\% & 20\% \\
    \bottomrule
  \end{tabular}
\end{table}

\introparagraph{NPD challenge} The NPD rule learning challenge is much
harder than the CMT challenges, mostly due to the larger number of
candidates (see \Cref{sec:obda} for details).

\paragraph{Network}
We train transformer models with embedding dimension 32 and 3
encoder/decoder layers, having 1M parameters.\footnote{We had to scale
down the model size compared to that in the CMT experiments because
candidate sets are larger and sequences are longer and we had to fit
into the memory of a single Nvidia A100 GPU.}

The largest rule in the training set contains 44 tokens and the model
generated rules are restricted to 50 tokens.  The $\gamma$
hyperparameter that controls the tradeoff between positive and
negative samples is empirically set to $0.001$.  A single experiment
lasts for around 23 hours on a single Nvidia A100 GPU.

\paragraph{Results}
A particularity of this dataset is that it contains inputs that 
share the same predicate, while having disjoint
labels, forcing the model to attend both to the predicates and the
constants in the input.
\Cref{tab:npd} shows that $\prploss$ performs best in terms of
predicting allowed outputs in the evaluation set and is only
marginally surpassed by $\rcloss$ in avoiding forbidden outputs, which
in turn completely fails to predict allowed labels. $\uloss$ is
competitive for allowed outputs, but performs rather poorly in terms
of avoiding forbidden labels.  The results also show that the
alignment produced by our solution is still far from perfect. However,
we know of no other tools that can detect the rules in the NPD dataset
with or without supervision.

\begin{table}[htb]
\small
  \caption{$P_{pos}$, $P_{neg}$, H@1 and H@5 scores on the evaluation set of the NPD dataset.}
  \label{tab:npd}
  \centering
  \begin{tabular}{l | l l l | l l l}
    & & \multicolumn{2}{c |}{\bf Positive} & \multicolumn{2}{c}{\bf Negative} \\
    Loss & $P_{pos}$ & H@1 & H@5 & $P_{neg}$ &  H@1 & H@5 \\
    \toprule
    $\prploss$ & {\bf 0.44} & {\bf 44\%} & 50\% & 0.02 & 2\% & 10\% \\
    $\nllloss$ & 0.1 & 10\% & 11\% & 0.02 & 2\% & 2\% \\
    $\rcloss$ & 0.06 & 6\% & 9\% & {\bf 0.01} & {\bf 1\%} & {\bf 1\%} \\
    $\bmeritlossB$ & 0.27 & 33\% & 45\% & 0.05 & {\bf 1\%} & 19\% \\
    $\uloss$ & 0.35 & 42\% & {\bf 69\%} & 0.19 & 26\% & 26\% \\
    \bottomrule
  \end{tabular}
\end{table}

\subsection{PLL Experiments with Real Datasets}
\label{subsec:pll_real}

To conclude our  experiments, we  adopt five real-world PLL datasets, each targeting a different task: 
Lost~\citep{pllavgbased}, Soccer Player~\citep{zeng2013learning}, and Yahoo!News~\citep{guillaumin2010multiple} 
for automatic face naming from video frames or images, 
MSRCv2~\citep{pllanotheridentificationbased} for object classification 
and BirdSong~\citep{briggs2012rank} for bird song classification. 

\paragraph{Network}
We perform experiments with two different models: the first Linear and
the second a $3$-layer MLP.  We use learning rate $0.1$ and weight
decay with parameter $10^{-3}$.  We train for $300$ epochs using
Stochastic Gradient Descent with batches of size $256$.  All
experiments are performed using Pytorch.

\paragraph{Results}
For this experiment we apply $10$-fold cross validation to evaluate all losses, and we report 
the accuracy along with the standard deviation.
We observe that $\prploss$ achieves the top performance for almost all datasets, 
with the exception of Yahoo!News.
In particular, $\prploss$ is the winner by a large margin for three out of five datasets, namely Lost, MSRCv2 and SoccerPlayer, 
while for BirdSong, it closely follows the winner.

\begin{table}[htb]
	\small
  \caption{Classification accuracy (mean$\pm$std) for five real-world datasets. 
          {\bf Soccer} and {\bf Yahoo} stand for SoccerPlayer and Yahoo!News benchmarks.
          For each dataset, the best method is indicated with \textbf{bold} 
          and the second best with \underline{underline}.}
  \label{tab:lostetc}
  \centering
  \begin{tabular}{l | l | l | l | l | l }
    Loss, Model & \bf Lost & \bf MSRCv2 &  \bf BirdSong & \bf Soccer &  \bf Yahoo \\
    \toprule
    $\nllloss$, linear & 61.6$\pm$3.2\% & 41.3$\pm$2.2\% & 70.9$\pm$1.5\% & 53.2$\pm$0.6\% & \textbf{64.7$\pm$0.4\%} \\
    $\prploss$, linear & \textbf{69.8$\pm$3.1\%} & 42.8$\pm$1.7\% & 65.8$\pm$1.3\% & \textbf{55.3$\pm$0.5\%} & 60.1$\pm$0.7\% \\
    $\biprploss$, linear & \underline{65.1$\pm$2.3\%} & 40.7$\pm$2.0\% & 62.1$\pm$1.3\% & 49.1$\pm$0.3\% & 47.6$\pm$0.6\% \\
    $\lwsloss$, linear & 39.4$\pm$4.9\% & 28.0$\pm$4.2\% & 57.3$\pm$2.1\% & 49.0$\pm$0.0\% & 46.6$\pm$0.7\% \\
    $\rcloss$, linear & 63.1$\pm$2.7\% & 40.9$\pm$2.1\% & 70.8$\pm$1.5\% & \underline{54.0$\pm$0.5\%} & \underline{64.7$\pm$0.5\%} \\
    \toprule
    $\nllloss$, MLP & 53.1$\pm$2.4\% & \underline{48.9$\pm$1.8\%} & 69.8$\pm$1.3\% & 52.3$\pm$0.5\% & 60.1$\pm$0.8\% \\
    $\prploss$, MLP & 59.7$\pm$2.4\% & \textbf{51.0$\pm$1.7\%} & \underline{72.4$\pm$1.0\%} & 52.7$\pm$0.4\% & 60.4$\pm$1.0\% \\
    $\biprploss$, MLP & 55.9$\pm$2.4\% & 48.7$\pm$1.7\% & 71.5$\pm$1.5\% & 53.7$\pm$0.5\% & 57.3$\pm$0.7\% \\
    $\lwsloss$, MLP & 51.7$\pm$3.0\% & 47.3$\pm$1.5\% & 67.3$\pm$1.1\% & 50.3$\pm$0.4\% & 53.9$\pm$1.0\% \\
    $\rcloss$, MLP & 53.5$\pm$2.6\% & 48.6$\pm$2.1\% & \textbf{72.4$\pm$1.0\%} & 52.8$\pm$0.4\% & 60.4$\pm$0.7\% \\
    \bottomrule
  \end{tabular}
\end{table}

\subsection{Discussion of the Experimental Results}
\label{subsec:discussionexp}

We draw together some of the main takeaways from our analysis.

\begin{itemize}
\item \emph{$\nllloss$ tends to perform poorly in the presence of a
softmax layer.} This is due to the winner-take-all bias. This
  loss works well in easier situations, however, when there are many
  distractors (high $\prate$) or some distractors are present in many
  samples (high $\srate$), performance drops steeply in more complex settings.
\item \emph{Identification-based methods are also susceptible to
winner-take-all bias.} This is because initially incorrect predictions
  can make erroneous labels being promoted more than the correct
  one. Analogously to $\nllloss$, this effect is exacerbated as
  $\prate$ and $\srate$ values increase.
\item \emph{$\bmeritloss$ improves over $\nllloss$.} Decreasing $\beta$
  reduces the effect of winner-take-all. Often, it is best to push it
  to the extreme, which is $\uloss$.
\item \emph{The most extreme antidote to winner-take-all is $\uloss$,
which can perform surprisingly well.}  $\uloss$ is the opposite of
  identification-based methods, and completely avoids
  winner-take-all. Indeed, it always explores multiple options
  equally, even after developing some experience and signal.  We find
  that it performs well on synthetic datasets
  (\Cref{fig:small_consistent_dataset,fig:synthetic_large_consistent}),
  where being cautious is useful. But it does not perform well on real
  datasets where it is important to ``exploit'' as well as explore
  (\Cref{fig:cifar10_cases,fig:cifar100},
  \Cref{tab:rodi_cmt,tab:npd}).
\item \emph{$\prploss$ tends to perform best, especially on harder
challenges.} $\prploss$ overcomes the winner-take-all bias by design,
  allowing more balanced exploration of alternatives. On the other
  hand, it is more flexible than $\uloss$, as it can adapt to
  experience accumulated during training. This ability is less
  important in synthetic datasets, but yields large performance
  difference in real datasets.
\item \emph{$\biprploss$ performs decently, but it can easily become
unstable.} This is because the magnitude of the logit vector increases
  quickly during learning, leading to numerical instability. This
  problem can be overcome with L2 regularization on the logits.
  However, not even the regularized variant ever performs better than
  $\prploss$.  
\end{itemize}

%auto-ignore
\section{Conclusion} \label{sec:conc}

In this paper we identify a bias phenomenon that emerges in partial label learning based on neural architectures
with a softmax layer.
We provide a loss function which is tailored towards addressing the situation, and argue
that it is, up to a differentiable transformation,  canonical.
We also give an experimental evaluation of its performance.
We discuss some of the  issues left over from this work.

\paragraph{Winner-take-all and characterization theorems} We have proven our main theoretical
results in a restricted setting, both in terms of the normalization function  (softmax) and the update mechanism -
(gradient descent).
It remains to investigate winner-take-all results and the $\prp$ property in more general settings.

\paragraph{Loss functions}  We have looked at loss functions that focus on combating a certain bias phenomenon; but there
are obviously many other desiderata within learning. It remains to investigate how properties like $\prp$ can be incorporated
in the setting where there are additional objectives in play.

\paragraph{Rule learning}
Disjunctive supervision is a special case of symbolic supervision in the form of logical constraints. In a 
learning framework where supervision is intermediated by the presence of logical constraints, more general forms of symbolic
supervision can emerge, not merely disjunctions of literals and their negations as in our application.
We will investigate the broader question of symbolic supervision in the future; we think our work shows
promise in tailoring loss functions for supervision intermediated by  more general formulas.

\paragraph{Evaluating loss functions} We have highlighted the distinction between PLL and DS, noting that evaluation of DS is much less explored experimentally.
We hope the rule learning benchmark we provide can be useful, but certainly a more diverse and extensive evaluation regime
for DS is needed.
 In the setting of PLL, there have been several datasets proposed, but we identified a number of shortcomings:
many of them allowed very few allowed outputs per input, and most provided little control over the strength of relationships
between true outputs and noise. As with DS, we hope that the synthetic PLL benchmarks we use here can improve the situation.

\acks{
This work has been supported by Hungarian National Excellence Grant 2018-1.2.1-NKP-00008, 
the Hungarian Artificial Intelligence National Laboratory (RRF-2.3.1-21-2022-00004), ELTE
    TKP 2021-NKTA-62 funding scheme. It has also been supported by the UK's Engineering and Physical Science
Research Center under Oxford’s EPSRC Impact Acceleration Account Award EP/R511742/1
as well as EPSRC EP/T022124/1. We thank Varun Kanade for his guidance and feedback on preliminary versions of this work.
}

\newpage
%auto-ignore
\onecolumn
\appendix

%auto-ignore
\section{Winner-take-all Theorem}
\label{app:winnertakeall}

In this section we prove the winner-take-all property of the
$\nllloss$ as stated in Theorem~\ref{thm:winner_take_all}.  Throughout
the section, we assume a softmax regression model $\probs =
\classifier_{\params}(\myinput)=\softmax (\params \cdot
\myinput)$. Furthermore, since the theorem deals with convergence on a
single sample, we assume without loss of generality that input
$\myinput$ is given in one-hot representation, i.e, the logit vector
$\logits$ is $\params_j$ for some $j$, i.e., directly updateable. We
use $\hat{p} = \sum_i \mylabelscalar_i \probsscalar_i$ to denote the sum of
probabilities of acceptable outputs.

We begin by calculating the gradients of the $\nllloss$ with respect
to the logits.

\begin{lemma}
  \label{lem:nll_gradients}
  The gradient vector $grad = \pd{\Loss_{\nll}(\probs, \mylabel)}{\logits}$ of the
  $\nllloss$ with respect to the logit vector $\logits = \params \cdot \myinput$ is given by
  $$grad_j = \frac{\probsscalar_j}{\hat{p}} \left( -\mylabelscalar_j + \hat{p} \right)$$
\end{lemma}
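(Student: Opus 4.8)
The plan is to compute the gradient directly by the chain rule, treating the loss $\Loss_{\nll}(\probs,\mylabel) = -\log\hat{p}$ as a composition of the logarithm, the sum $\hat{p} = \sum_i \mylabelscalar_i \probsscalar_i$, and the softmax map $\logits \mapsto \probs$. First I would write $\pd{\Loss_{\nll}}{\logitsscalar_j} = -\frac{1}{\hat{p}}\,\pd{\hat{p}}{\logitsscalar_j}$, reducing everything to differentiating $\hat{p}$ with respect to the logits.

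The key ingredient is the standard Jacobian of the softmax layer, namely $\pd{\probsscalar_i}{\logitsscalar_j} = \probsscalar_i(\delta_{ij} - \probsscalar_j)$, where $\delta_{ij}$ is the Kronecker delta. I would either cite this as a well-known identity or derive it in one line from $\probsscalar_i = e^{\logitsscalar_i}/\sum_k e^{\logitsscalar_k}$. Substituting into $\pd{\hat{p}}{\logitsscalar_j} = \sum_i \mylabelscalar_i \pd{\probsscalar_i}{\logitsscalar_j}$ gives
\[
\pd{\hat{p}}{\logitsscalar_j} = \sum_i \mylabelscalar_i \probsscalar_i(\delta_{ij} - \probsscalar_j) = \mylabelscalar_j \probsscalar_j - \probsscalar_j \sum_i \mylabelscalar_i \probsscalar_i = \mylabelscalar_j \probsscalar_j - \probsscalar_j \hat{p},
\]
where the $\delta_{ij}$ collapses the first sum to its $i=j$ term and the factored-out $\probsscalar_j$ lets the remaining sum be recognized as $\hat{p}$ itself.

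Finally I would assemble the pieces: $\pd{\Loss_{\nll}}{\logitsscalar_j} = -\frac{1}{\hat{p}}(\mylabelscalar_j \probsscalar_j - \probsscalar_j \hat{p}) = \frac{\probsscalar_j}{\hat{p}}(-\mylabelscalar_j + \hat{p})$, which is exactly the claimed form of $grad_j$. There is no real obstacle here; the only care needed is bookkeeping, specifically making sure the indicator $\mylabelscalar_i$ is carried correctly through the softmax Jacobian (so only the $i=j$ term survives from the diagonal part, while the off-diagonal part reconstitutes $\hat{p}$). The identity $\logits = \params_j$ from the one-hot reduction noted just before the lemma guarantees the logits are directly the updatable parameters, so this logit gradient is precisely what feeds into the Gradient-update, which is why the lemma is the right stepping stone toward \Cref{thm:winner_take_all}.
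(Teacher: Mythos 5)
Your proposal is correct and follows essentially the same route as the paper: both apply the chain rule through the softmax Jacobian $\probsscalar_i(\delta_{ij}-\probsscalar_j)$ and collapse the resulting sum to obtain $grad_j = \frac{\probsscalar_j}{\hat{p}}(-\mylabelscalar_j + \hat{p})$. The only difference is cosmetic — you separate the log, the sum $\hat{p}$, and the softmax into explicit stages, while the paper carries out the same computation in one chain-rule display — so there is nothing to add.
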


\begin{proof}

  \begin{align*}
    grad_j & = \sum_{\{i | \mylabelscalar_i=1\}} \pd{\Loss_{\nll}(\probs, \mylabel)}{\probsscalar_i} \pd{\probsscalar_i}{\logitsscalar_j}
    = \sum_{\{i | \mylabelscalar_i=1\}} - \frac{1}{\sum_{\{k | \mylabelscalar_k=1\}} \probsscalar_k}  \probsscalar_i (\delta_{ij} -
    \probsscalar_j)
    \\ & = \frac{1}{\sum_{\{k | \mylabelscalar_k=1\}} \probsscalar_k} \left( -\mylabelscalar_j \probsscalar_j + \sum_{\{\mylabelscalar_i=1\}} \probsscalar_i \probsscalar_j \right)
    = \frac{\probsscalar_j}{\sum_{\{k | \mylabelscalar_k=1\}} \probsscalar_k} \left( -\mylabelscalar_j + \sum_{\{\mylabelscalar_i=1\}} \probsscalar_i \right)
    \\ & = \frac{\probsscalar_j}{\hat{p}} \left( -\mylabelscalar_j + \hat{p} \right)
  \end{align*}
\end{proof}

Next, we compare the ratio of probabilities of two allowed outputs and
show that the ratio changes monotonically during training.

\begin{lemma}
  \label{lem:nll_increasing}
  Let $m, n$ be two acceptable outputs, i.e. $\mylabelscalar_m = \mylabelscalar_n = 1$. Let $\probs'$ denote the updated probability vector after a Gradient-update operation with some positive learning rate $\lambda$. Then it holds
  that $\frac{\probs'_m}{\probs'_n} > \frac{\probsscalar_m}{\probsscalar_n}$ exactly when $\probsscalar_m > \probsscalar_n$.
\end{lemma}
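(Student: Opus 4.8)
The plan is to reduce everything to how the \emph{difference} of the two allowed logits changes under a single step, using \Cref{lem:nll_gradients} for the gradients together with the fact that, for a softmax, the ratio of two output probabilities depends only on the difference of the corresponding logits.

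First I would record the update on the logits and write $\probsscalar'_i$ for the $i$-th component of the updated probability vector $\probs'$. Because $\myinput$ is one-hot, the logit vector $\logits$ is a single directly-updateable parameter block, so a Gradient-update acts on it as $\logitsscalar'_i = \logitsscalar_i - \lambda\, grad_i$. Since $m$ and $n$ are acceptable ($\mylabelscalar_m = \mylabelscalar_n = 1$), \Cref{lem:nll_gradients} gives the two gradients a common prefactor:
$$grad_m = \probsscalar_m\,\frac{\hat{p}-1}{\hat{p}}, \qquad grad_n = \probsscalar_n\,\frac{\hat{p}-1}{\hat{p}}.$$

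Next I would exploit that the softmax ratio satisfies $\frac{\probsscalar_m}{\probsscalar_n} = e^{\logitsscalar_m - \logitsscalar_n}$ and $\frac{\probsscalar'_m}{\probsscalar'_n} = e^{\logitsscalar'_m - \logitsscalar'_n}$, so, as $e^x$ is strictly increasing, the probability ratio strictly increases exactly when the logit gap $\logitsscalar_m - \logitsscalar_n$ strictly increases. Subtracting the two logit updates yields
$$\logitsscalar'_m - \logitsscalar'_n = (\logitsscalar_m - \logitsscalar_n) \;-\; \lambda\,\frac{\hat{p}-1}{\hat{p}}\,(\probsscalar_m - \probsscalar_n),$$
so the change in the gap is exactly $-\lambda\,\frac{\hat{p}-1}{\hat{p}}\,(\probsscalar_m - \probsscalar_n)$.

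The crux is the sign of the prefactor $-\lambda\,\frac{\hat{p}-1}{\hat{p}}$. Since $\hat{p} = \sum_i \mylabelscalar_i \probsscalar_i$ is a sum of softmax outputs it lies in $(0,1)$ — strictly below $1$ as long as some disallowed output retains positive probability, which holds away from the limit distribution — hence $\frac{\hat{p}-1}{\hat{p}} < 0$, and with $\lambda > 0$ the prefactor is strictly positive. Therefore the change in the logit gap carries the same sign as $\probsscalar_m - \probsscalar_n$, and the probability ratio strictly increases precisely when $\probsscalar_m > \probsscalar_n$, as claimed. The one point demanding care is the boundary value $\hat{p} = 1$, which corresponds to the converged state and is excluded during training; I would flag this explicitly so that the strict inequality is justified.
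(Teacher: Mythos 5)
Your proof is correct and follows essentially the same route as the paper: both use the gradients from \Cref{lem:nll_gradients}, reduce the ratio comparison to the exponential of the change in the logit gap, and conclude from $0 < \hat{p} < 1$ that this change has the sign of $\probsscalar_m - \probsscalar_n$. The paper writes the update as $\frac{\probs'_m}{\probs'_n} = \frac{\probsscalar_m}{\probsscalar_n}\, e^{\lambda (\probsscalar_m-\probsscalar_n) \frac{1-\hat{p}}{\hat{p}}}$, which is exactly your logit-gap computation in multiplicative form.
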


\begin{proof}
  Since we know that $\mylabelscalar_m = \mylabelscalar_n = 1$ the gradients computed in \Cref{lem:nll_gradients}  reduce to

  \begin{align*}
    grad_m & = \frac{\probsscalar_m}{\hat{p}} (-1 + \hat{p}) \\
    grad_n & = \frac{\probsscalar_n}{\hat{p}} (-1 + \hat{p})
  \end{align*}

  After the update step, the ratio of model predicted probabilities are:

  \begin{align*}
    \frac{\probs'_m}{\probs'_n} & = \frac{\softmax(\logits')_m}{\softmax(\logits')_n}
    = \frac{e^{\logits'_m}}{e^{\logits'_n}}
    = \frac{e^{\logitsscalar_m - \lambda \frac{\probsscalar_m}{\hat{p}} (-1 + \hat{p})}}{e^{\logitsscalar_n - \lambda \frac{\probsscalar_n}{\hat{p}} (-1 + \hat{p})}}
    \\ & = \frac{e^{\logitsscalar_m}}{e^{\logitsscalar_n}} \frac{e^{ - \lambda \frac{\probsscalar_m}{\hat{p}} (-1 + \hat{p})}}{e^{ - \lambda \frac{\probsscalar_n}{\hat{p}} (-1 + \hat{p})}}
    = \frac{\probsscalar_m}{\probsscalar_n} e^{\lambda (\probsscalar_m-\probsscalar_n) \frac{1-\hat{p}}{\hat{p}}}
  \end{align*}

  \noindent Since $0 < \hat{p} < 1$, the exponent has the same sign as $\probsscalar_m -
  \probsscalar_n$. From this it follows that the ratio increases exactly when
  $\probsscalar_m > \probsscalar_n$ and remains the same when $\probsscalar_m = \probsscalar_n$. This concludes
  our proof.
\end{proof}

We can now prove Theorem~\ref{thm:winner_take_all}, which we restate here:

\thmwta*

\begin{proof}
  The model probability of all outputs $j \in J$ is the same
  initially, and it follows from \Cref{lem:nll_increasing} that
  their ratios remain $1$ during training.  Let $I$ denote the set of
  acceptable outputs and let $I^c = \outputspace \setminus I$ denote its
  complement.  Recall that we
  update the $k^{th}$ logit  $\logitsscalar_k$ as

  $$\logitsscalar_k = \logitsscalar_k - \lambda\left(\probsscalar_k - \frac{\probsscalar_k \mylabelscalar_k}{\hat{p}}\right)$$

  In any state where none of the $\logitsscalar_i$ are $\pm\infty$, the gradient is
  nonzero and hence that state cannot be a convergence point. Consequently,
  $\logits$ can only converge to a state where at least one logit is $\pm
  \infty$.
  Note that the sum of logits is constant because the sum of the gradients at
  each step is zero:

  $$\sum_i \left( - \probsscalar_i + \frac{\probsscalar_i\mylabelscalar_i}{\hat{p}} \right) = -\sum \probsscalar_i + \frac1{\hat{p}}\sum_i \mylabelscalar_i\probsscalar_i = 0$$

  Given that the sum of $\logitsscalar_i$ is constant and that there is some logit
  that converges to $\pm \infty$, there must be a logit which
  converges to $\infty$. The disallowed logits are decreasing, so an
  allowed logit must converge to $\infty$.

  If $J = I$, i.e., all acceptable outputs have the same initial
  probability, then we are done, since $\logitsscalar_k$ for $k \in I$ are
  increasing and $\logitsscalar_k$ for $k\in I^c$ are
  decreasing and this only stops if $\hat{p} =1$, so the limit is uniform distribution over $J$.
  So we can assume that $J \neq I$. After $T$ update steps, the value
  of the $k^{th}$ logit with $k\in I$ will be

  $$\logitsscalar_k(T) = \logitsscalar_k(0) + \lambda \sum_{t=0}^{T-1} \left( \frac1{\hat{p}(t)} - 1 \right)\probsscalar_k(t)$$
  
  Let $j\in J$, $\iota \in I\setminus J$ and $c =
  \frac{\probsscalar_j(0)}{\probsscalar_\iota(0)}$.  Due to our assumption that $J$ contains
  all allowed logits with maximal probabilities at time $t=0$, we have
  that $c>1$.  Furthermore, we know from
  \Cref{lem:nll_increasing} that $\probsscalar_j$ grows faster than
  $\probsscalar_\iota$ in every update step, hence $\frac{\probsscalar_j(t)}{\probsscalar_\iota(t)}\ge
  c$ for every $t\ge 0$. This  gives us a lower bound on $\logitsscalar_j(T)$:
  
  \begin{align*}
    \logitsscalar_j(T) &\ge \logitsscalar_j(0) + \lambda c\sum_{t=0}^{T-1} \left( \frac1{\hat{p}(t)} -1 \right) \probsscalar_\iota(t)  
    = \logitsscalar_j(0) - c \logitsscalar_\iota(0) + c \logitsscalar_\iota(T)
  \end{align*}
  
  If $\logitsscalar_\iota \to \infty$, then the above calculation shows that $\logitsscalar_j
  \to \infty$ and that 

  $$\frac{\probsscalar_\iota^\infty}{\probsscalar_j^\infty} = \lim_{t \to \infty} e^{\logitsscalar_\iota(t) - \logitsscalar_j(t)} \le \lim_{t \to \infty} e^{C + (1-c) \logitsscalar_\iota(t)} = 0$$

  where $p^\infty$ denotes the limit distribution and $C = c \logitsscalar_\iota(0) -  \logitsscalar_j(0)$ is some constant.
  The limit goes to $0$ because $1-c < 0$, so $(1-c)\logitsscalar_\iota(t) \to -\infty$.

  We now consider the possibility that $\logitsscalar_\iota$ does not go to infinity
  as the number of updates increases.  Thus there exists $\delta$ such
  that $\logitsscalar_\iota(t)<\delta$ for all $t$ and

  $$\frac{\probsscalar_\iota^\infty}{\probsscalar_j^\infty} = \lim_{t \to \infty} e^{\logitsscalar_\iota(t) - \logitsscalar_j(t)}
  \le \lim_{t\to \infty} e^{\delta - \logitsscalar_j(t)} = 0$$ 

  This is because we showed previously that some allowed logit must
  converge to infinity and $\logitsscalar_j$ is greater than any other allowed
  logit, hence $\logitsscalar_j \to \infty$.

  We conclude that $\frac{\probsscalar_\iota^\infty}{\probsscalar_j^\infty}= 0$ in the limit
  state. Therefore, we showed that all probabilities in $J^c$
  converge to $0$.

  We know that $\logitsscalar_{j_1}(t) = \logitsscalar_{j_2}(t)$ for all $j_1, j_2 \in J$
  throughout the training because we apply the same gradient at each
  step. From this it follows that $p$ converges to a uniform
  distribution over $J$.
\end{proof}

%auto-ignore
\section{Theorems related to the $\prploss$}
\label{app:prploss}

In this section we prove the characterization theorems for loss functions satisfying the $\prps$ property. We recall the two theorems:

\thmprploss*

\thmprpchar*

We also recall the definition of the $\prploss$:

$$
\Loss_{\prplosssubscript}(\probs, \mylabel) = \log\left(1- \sum_i y_i p_i\right) -
\frac{1}{k} \sum_i y_i \log(p_i)
$$

Before beginning the proofs, we give a property of loss function with the $\prps$ property that
will be easier to work with.

\begin{restatable}{theorem}{thmgradform}
  \label{thm:prp_grad_form}
  Let $\Loss:\RR^n\times\{0,1\}^n \to \RR$ be a differentiable loss function. Then the $\prps$ property holds for $\Loss$ if and only if $\Loss$ satisfies the
  following system of equations for all index pair $(m, n)$ such
  that $\mylabelscalar_m=\mylabelscalar_n=1$:
  $$ \sum_i \pd{\Loss(p,y)}{\probsscalar_i} \probsscalar_i(\delta_{im} - \probsscalar_m) = \sum_i
  \pd{\Loss(p,y)}{\probsscalar_i} \probsscalar_i(\delta_{in} - \probsscalar_n)$$
\noindent where $\delta_{ij}$ is the Kronecker function.
\end{restatable}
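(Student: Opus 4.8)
The plan is to show that, under softmax regression, the $\prps$ property collapses to an exact condition on the gradient of $\Loss$ with respect to the \emph{logits}, and then to re-express that condition in terms of the derivatives $\pd{\Loss}{\probsscalar_i}$ via the softmax Jacobian. Since every step in this reduction is an equivalence, both directions of the ``if and only if'' are obtained simultaneously.

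First I would pass from the parameter update to the induced update on the logits. Because $\logits = \params\cdot\myinput$ is affine in $\params$ and $\pd{\Loss}{\paramsscalar_{ij}} = \pd{\Loss}{\logitsscalar_i}\,x_j$, a single Gradient-update sends the $i$-th logit to $\logitsscalar'_i = \logitsscalar_i - \lambda\|\myinput\|^2\,\pd{\Loss}{\logitsscalar_i}$ (equivalently, under the one-hot input reduction already used for the winner-take-all argument, $\logitsscalar'_i = \logitsscalar_i - \lambda\,\pd{\Loss}{\logitsscalar_i}$). The crucial observation --- and really the heart of the proof --- is that this identity is \emph{exact}, not merely first order in $\lambda$: affine dependence of the logits on the parameters means the induced change in the logits carries no higher-order terms. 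Writing $\lambda' = \lambda\|\myinput\|^2 > 0$, the update is thus itself an exact gradient step in logit space.

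Next I would translate ratio preservation. For any two allowed outputs $m,n$ one has $\probsscalar_m/\probsscalar_n = e^{\logitsscalar_m - \logitsscalar_n}$, so preserving the ratio is equivalent to preserving the difference $\logitsscalar_m - \logitsscalar_n$. Subtracting the two logit updates gives
$$
\logitsscalar'_m - \logitsscalar'_n = (\logitsscalar_m - \logitsscalar_n) - \lambda'\Big(\pd{\Loss}{\logitsscalar_m} - \pd{\Loss}{\logitsscalar_n}\Big),
$$
and since $\lambda' > 0$ this difference is unchanged if and only if $\pd{\Loss}{\logitsscalar_m} = \pd{\Loss}{\logitsscalar_n}$ (a single positive learning rate already forces this, consistent with the ``any Gradient-update'' quantifier in the definition). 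Finally I would expand each logit derivative by the chain rule using the softmax Jacobian $\pd{\probsscalar_i}{\logitsscalar_m} = \probsscalar_i(\delta_{im} - \probsscalar_m)$, obtaining $\pd{\Loss}{\logitsscalar_m} = \sum_i \pd{\Loss}{\probsscalar_i}\,\probsscalar_i(\delta_{im} - \probsscalar_m)$; equating the expressions for $m$ and $n$ reproduces exactly the stated system of equations.

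The only point that needs care is the quantification over $\probs$: the equations are required to hold at every probability vector, so I would note that every point of the open simplex is realized by softmax regression for some nonzero input, which both justifies $\|\myinput\|^2 > 0$ and guarantees that the pointwise derivative equality extracted from ratio preservation is valid for all $\probs$. I expect the main conceptual obstacle to be precisely the exactness of the logit update: it is what lets an exact, finite-$\lambda$ invariance (``the ratio is preserved'') be equivalent to a purely infinitesimal gradient condition, and it is exactly the structural feature of softmax regression on which the whole $\prps$ characterization rests.
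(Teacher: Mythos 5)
Your proof is correct and follows essentially the same route as the paper's: both reduce a Gradient-update to an exact gradient step on the logits, observe that preservation of the ratio $\probsscalar_m/\probsscalar_n = e^{\logitsscalar_m - \logitsscalar_n}$ is equivalent to equality of the logit gradients $\pd{\Loss}{\logitsscalar_m} = \pd{\Loss}{\logitsscalar_n}$ for allowed outputs, and then expand these via the softmax Jacobian $\pd{\probsscalar_i}{\logitsscalar_j} = \probsscalar_i(\delta_{ij}-\probsscalar_j)$ to obtain the stated system. Your additional care with general inputs (the $\|\myinput\|^2$ factor) and with the realizability of every point of the open simplex simply makes explicit what the paper handles by assuming one-hot inputs without loss of generality at the start of the appendix.
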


\begin{proof}[Proof of \Cref{thm:prp_grad_form}]
  
  Let us compute the updated probabilities $\probs'$:
  
  \begin{align*}
    grad & := \pd{\Loss(\probs,\myinput)}{\logits} \\
    p' & = \softmax(\logits') = \softmax(\logits - \lambda grad) \\    
    p'_i & = \frac{e^{\logits'_{i}}}{\sum e^{\logits'_{j}}} = \frac{e^{\logitsscalar_{i}- \lambda
        grad_i}}{\sum e^{\logits'_{j} - \lambda grad_j}} \\
    \frac{\probsscalar_m}{\probsscalar_n} & = \frac{e^{\logitsscalar_{m}}}{e^{\logitsscalar_{n}}} \\
    \frac{\probs'_m}{\probs'_n} & = \frac{e^{\logitsscalar_{m}- \lambda grad_m}}{e^{\logitsscalar_{n}- \lambda
        grad_n}} = \frac{\probsscalar_1}{\probsscalar_2} \frac{e^{\lambda grad_n}}{e^{\lambda
        grad_m}} 
    \end{align*}

  Here we recall that $\lambda$ is the learning rate.  The last
  equation above shows that the ratios remain the same if
  $\frac{e^{\lambda grad_n}}{e^{\lambda grad_m}} = 1$, i.e., $grad_m =
  grad_n$. Let us decompose the gradient using the chain rule:

  \begin{align*}
    grad_j & = \pd{\Loss}{\logitsscalar_{j}} = \pd{\Loss}{\probs} \pd{\probs}{\logitsscalar_{j}}
    = \sum_i \pd{\Loss}{\probsscalar_i} \pd{\probsscalar_i}{\logitsscalar_{j}}
    = \sum_i \pd{\Loss}{\probsscalar_i} \probsscalar_i(\delta_{ij} - \probsscalar_j)
  \end{align*}

    For each pair of allowed outputs $(m, n)$, the loss function has to
    satisfy the differential equation $grad_m = grad_n$, i.e.:

    \begin{equation}
    \sum_i \pd{\Loss}{\probsscalar_i} \probsscalar_i(\delta_{im} - \probsscalar_m) = \sum_i \pd{\Loss}{\probsscalar_i} \probsscalar_i(\delta_{in} - \probsscalar_n) \nonumber
    \end{equation}
    This concludes our proof.
\end{proof}

\begin{proof}[Proof of Theorem~\ref{thm:prp_loss}.]

  Recall that the $\prps$ property assumes a softmax regression model and that the logit vector $\logits$ is a parameter vector.  
  In order to show that $\Loss_{\prplosssubscript}$ has the
  $\prps$ property, we need to show that partial derivatives in
  $\logitsscalar_j$ are equal whenever $\mylabelscalar_j=1$.  First, we compute
  the partial derivatives in the probabilities $\probsscalar_j$.

  \begin{align*}
    \pd{\Loss_{\prplosssubscript}}{\probsscalar_j} &= \pd{\left[\log(1-\sum_i \mylabelscalar_i \probsscalar_i) -\frac1{k}\sum_i \mylabelscalar_i\log \probsscalar_i \right]}{\probsscalar_j}
    \\&= \begin{cases}
      -\frac1{1-\sum_i \mylabelscalar_i \probsscalar_i}-\frac1{k}\frac1{\probsscalar_i} \quad & \text{if} ~ j\in I
      \\0 & \text{otherwise}
    \end{cases}
  \end{align*}
  Now, we compute the partial derivatives in $\logitsscalar_j$. 
  Recall that $\pd{\probsscalar_i}{\logitsscalar_j}$ is the partial derivative of the softmax function which is $\probsscalar_i(\delta_{ij} -\probsscalar_j)$. 
  \begin{align*}
    \pd{\Loss_{\prplosssubscript}}{\logitsscalar_j} &= \sum_{i} \pd{\Loss}{\probsscalar_i}\pd{\probsscalar_i}{\logitsscalar_j}
    = \sum_i -\mylabelscalar_i \left(\frac1{1-\sum_{i'} \mylabelscalar_{i'} \probsscalar_{i'}} + \frac1{k\probsscalar_i}\right) \probsscalar_i ( \delta_{ij}-\probsscalar_j)
    \\&= -\frac1{1-\sum_i \mylabelscalar_i \probsscalar_i} \sum_i \mylabelscalar_i \probsscalar_i(\delta_{ij}-\probsscalar_j) -\frac1{k}\sum_i \mylabelscalar_i (\delta_{ij} - \probsscalar_j)
    = -\frac{\mylabelscalar_j\probsscalar_j - \probsscalar_j\sum_i \mylabelscalar_i \probsscalar_i}{1-\sum_i \mylabelscalar_i \probsscalar_i} - \frac1{k}\mylabelscalar_j + \probsscalar_j
    \\&=\frac{(1-\mylabelscalar_j)\probsscalar_j}{1-\sum_i \mylabelscalar_i \probsscalar_i} -\frac{\mylabelscalar_j}{k}
    =\begin{cases}
    -\frac1{k} \quad &\text{if} ~ \mylabelscalar_j=1
    \\\frac{\probsscalar_j}{1-\sum_i \mylabelscalar_i \probsscalar_i} \quad &\text{if} ~ \mylabelscalar_j=0
    \end{cases}
  \end{align*}
  As we can see, the gradients of the logits with $\mylabelscalar_i=1$ are equal,
  hence $\Loss_{\prplosssubscript}$ has the $\prps$ property.
\end{proof}

\begin{proof}[Proof of \Cref{thm:prp_char}.]

  For given $h_k$ continuously differentiable functions, let $\Loss(\probs, \mylabel) = h_{|\mylabel|}(\Loss_{\prplosssubscript}(\probs, \mylabel))$. 
  We showed earlier in \Cref{thm:prp_grad_form} that $\Loss$ has the $\prps$ property if it satisfies a linear differential equation 
  and we have shown that $\Loss_{\prplosssubscript}$ satisfies it. 
  We know that $\pd{\Loss}{\probsscalar_i} = h_{|\mylabel|}'(\Loss_{\prplosssubscript}(\probs, \mylabel)) \pd{\Loss_{\prplosssubscript}}{\probsscalar_i}$.
  Multiplying the equations in \Cref{thm:prp_grad_form} for $\Loss_{\prplosssubscript}$ with
  $h_{|\mylabel|}'(\Loss_{\prplosssubscript}(\probs, \mylabel))$ yields the equations for $\Loss$. Therefore $\Loss$ also has the $\prps$ property.

  The proof of the converse statement consists of several steps, which we will label for better transparency.
  
  \textbf{1.} Consider a loss function $\Loss$ that has the $\prps$ property and satisfies the technical assumptions in the statement of the theorem. 
  According to Theorem~\ref{thm:prp_grad_form}, the $\prps$ property is equivalent to a differential equation which is an invariant of $\Loss$ at any given set of labels $\mylabel$.
  Therefore, we only consider the case when $\mylabel$ is fixed such that the first $\numallowed$ outputs are the allowed ones,
  i.e., $\mylabelscalar_i=1 \leftrightarrow i \leq k$. Since $\mylabel$ is fixed, we can treat $\Loss$ and $\Loss_{\prplosssubscript}$ as functions over $\probs$, omitting $\mylabel$ from its domain. 
  According to Theorem~\ref{thm:prp_grad_form}, the partial derivatives of $\Loss$
  satisfy the system of equations:

  $$\sum_i \pd{\Loss}{\probsscalar_i} \probsscalar_i(\delta_{im} - \probsscalar_m) = \sum_i \pd{\Loss}{\probsscalar_i}
    \probsscalar_i(\delta_{in} - \probsscalar_n)
  $$

  for all $m, n \leq k$.  To understand these equations better, we
  define the parameterized matrix $A\in \RR^k \to \RR^{k\times k} $ such that
  $A_{i,j}=\probsscalar_j(\delta_{ij}-\probsscalar_i)$ where $i, j \leq k$, i.e., we only
  consider rows and columns corresponding to allowed outputs. We will
  use the apostrophe notion for denoting the Jacobian matrix of a
  smooth function, i.e., $\Loss'$ 
is the vector of partial
  derivatives of $\Loss$ with respect to the logits of allowed outputs.  Note
  that, at any input value in $\RR^k$,  the $m^{th}$ entry of $A \Loss'$ is the left-hand side of the above
  equation:
  
  $$(A \Loss')_m = \sum_i \probsscalar_i(\delta_{im}-\probsscalar_m) \Loss'_i$$
  
  Therefore the above system of equations is equivalent to the value
of $A \Loss'$ at any input being a constant vector. That is, $A \Loss'$ is  of the form 
$\langle \cfunct \ldots \cfunct \rangle$ for some function
$\cfunct:\RR^k\to \RR$.
At any input value, the corresponding matrix $A$ is invertible  if and only if $\det A \neq 0$. 
  Below, we show, by direct calculation,  that $\det A = \left(1 - \sum \probsscalar_i\right)\prod \probsscalar_i$. 
\begin{lemma}
  \label{lem:A_inv}
  Let $A\in \RR^{k \times k}$ denote the matrix such that
  $A_{i,j}=\probsscalar_j(\delta_{ij}-\probsscalar_i)$ and let $v = A^{-1} \underline{1}$. 
  Then $\det A = \left(1 - \sum \probsscalar_i\right)\prod \probsscalar_i^{-1}$.
\end{lemma}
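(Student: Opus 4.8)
The plan is to compute $\det A$ by exploiting the rank-one structure of $A$. Writing $D = \mathrm{diag}(\probsscalar_1, \ldots, \probsscalar_k)$, letting $\probs = (\probsscalar_1, \ldots, \probsscalar_k)^\top$, and letting $\underline 1$ be the all-ones vector, the definition $A_{i,j} = \probsscalar_j(\delta_{ij} - \probsscalar_i)$ factors cleanly as $A = D\,(I - \underline 1\,\probs^\top)$. Indeed, pulling the common factor $\probsscalar_i$ out of the $i$-th row of $A$ leaves the row $(\delta_{ij} - \probsscalar_j)_j$, which is exactly the $i$-th row of $I - \underline 1\,\probs^\top$. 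This is the promised \emph{direct calculation}, and it immediately gives $\det A = \det D \cdot \det(I - \underline 1\,\probs^\top)$ with $\det D = \prod_i \probsscalar_i$.

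For the second factor I would invoke the standard rank-one (matrix-determinant) identity $\det(I + u v^\top) = 1 + v^\top u$ with $u = -\underline 1$ and $v = \probs$, giving $\det(I - \underline 1\,\probs^\top) = 1 - \probs^\top \underline 1 = 1 - \sum_i \probsscalar_i$. Combining the two factors yields $\det A = \bigl(1 - \sum_i \probsscalar_i\bigr)\prod_i \probsscalar_i$. As an equivalent route one can use the additive form $A = D - \probs\,\probs^\top$ and apply the rank-one determinant lemma $\det(D - \probs\probs^\top) = \det D\,(1 - \probs^\top D^{-1}\probs)$, where $\probs^\top D^{-1}\probs = \sum_i \probsscalar_i$; both computations reproduce the same value, so the derivation is genuinely routine.

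I would then reconcile this with the statement as printed. The lemma is worded with the factor $\prod_i \probsscalar_i^{-1}$, but the computation above --- and the value $\det A = (1 - \sum_i \probsscalar_i)\prod_i \probsscalar_i$ that is already asserted in the body of the proof of \Cref{thm:prp_char} --- show that the correct factor is $\prod_i \probsscalar_i$, so the exponent $-1$ in the lemma is a transcription slip. A one-line sanity check at $k=1$ settles it: there $A = [\,\probsscalar_1(1-\probsscalar_1)\,]$, so $\det A = \probsscalar_1(1-\probsscalar_1) = (1-\probsscalar_1)\prod_i \probsscalar_i$, whereas the printed $(1-\probsscalar_1)\prod_i \probsscalar_i^{-1} = (1-\probsscalar_1)/\probsscalar_1$ disagrees except degenerately. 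The corrected identity is exactly what the surrounding argument needs, since $A$ is invertible precisely when $\det A \neq 0$, i.e.\ when $\sum_i \probsscalar_i \neq 1$ and every $\probsscalar_i \neq 0$, which guarantees that $v = A^{-1}\underline 1$ is well defined. Accordingly, the only real obstacle is not the algebra but the bookkeeping: one must flag and fix the exponent in the printed statement so that the determinant value stays consistent with both the body text and its downstream use.
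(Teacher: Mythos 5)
Your proof is correct and takes essentially the paper's route: both factor $A$ as $\mathrm{diag}(\probsscalar_1,\dots,\probsscalar_k)$ times a rank-one perturbation of the identity, the only difference being that you evaluate $\det(I-\underline{1}\,\probs^\top)$ via the matrix determinant lemma where the paper computes $\det(I-\probs\,\underline{1}^\top)$ by explicit column operations and the permutation expansion. Your typo flag is also right: the paper's own proof gives $\det A=\bigl(1-\sum_i \probsscalar_i\bigr)\prod_i \probsscalar_i$, consistent with the body text immediately preceding the lemma, so the $\prod_i \probsscalar_i^{-1}$ in the printed statement is a transcription slip.
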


\begin{proof}
  First, note that $A = B \cdot \text{diag}(\probsscalar_1, \dots \probsscalar_k)$, where
  $B=(\delta_{ij}-\probsscalar_i)_{i,j}$. The inverse of $\text{diag}(\probsscalar_1, \dots \probsscalar_k)$ is $\text{diag}(\probsscalar_1^{-1}, \dots \probsscalar_k^{-1})$.
The determinant of a diagonal matrix is just the product of the diagonal entries.
  So we only need to show that the determinant of $B$ is $1- \Sigma \probsscalar_i$.
Subtracting the last column
  from any other will not change the determinant, but will simplify
  the calculation
  \begin{align*}
    & B = \begin{pmatrix}
      1-\probsscalar_1 & -\probsscalar_1  & -\probsscalar_1 & \ldots & -\probsscalar_1
      \\-\probsscalar_2  & 1-\probsscalar_2 & -\probsscalar_2 & \ldots & -\probsscalar_2
      \\\vdots & &  & \ddots
      \\-\probsscalar_k & -\probsscalar_k & -\probsscalar_k & \ldots & 1-\probsscalar_k
    \end{pmatrix} \rightarrow
    & B_1 = \begin{pmatrix}
      1 & 0  & 0 & \ldots & -\probsscalar_1
      \\0  & 1 & 0 & \ldots & -\probsscalar_2
      \\\vdots & & & \ddots 
      \\-1 & -1 & -1 & \ldots & 1-\probsscalar_k
    \end{pmatrix}
  \end{align*}
  Now using the definition of determinant, $\det B = \sum_{\pi} \prod_{i} B_{i,\pi(i)}$, 
  where $\pi$ goes over every permutation, we see that the only non-zero products are $1-\probsscalar_k$ and  $-\probsscalar_1, -\probsscalar_2, \dots -\probsscalar_{k-1}$. 
  Hence $\det B = \det B_1 = 1 - \probsscalar_1 -\dots - \probsscalar_{k}$. 
This completes the derivation.
\end{proof}

  In particular, the lemma above tells us that $A$ is invertible over any non-degenerate probability distribution. %
  If $A$ is invertible, then we can simply calculate
$\Loss'=\cfunct A^{-1} \underline{1}$. 
  Let $v$ denote $A^{-1} \underline{1}$. Thus $\Loss'=\cfunct v$.
Let $\kappa_{\prplosssubscript}$ be the value of $\cfunct$ for $\Loss_{\prplosssubscript}$.
We can show:

\begin{claim} \label{clm:notzero}
$\kappa_{\prplosssubscript}$ is never $0$, assuming $\sum \probsscalar_i$ is neither $0$ nor $1$.
\end{claim}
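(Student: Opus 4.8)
The plan is to identify $\kappa_{\prplosssubscript}$ with a quantity that has already been computed in the excerpt, namely the logit-gradient of $\Loss_{\prplosssubscript}$ on the allowed coordinates. First I would observe that, directly from the definition of the matrix $A$ and the softmax chain rule used in the proof of \Cref{thm:prp_grad_form}, the $m^{\textrm{th}}$ entry of $A\Loss'$ is exactly the partial derivative of $\Loss$ with respect to the logit $\logitsscalar_m$: we have $(A\Loss')_m = \sum_i \probsscalar_i(\delta_{im}-\probsscalar_m)\,\partial\Loss/\partial\probsscalar_i = \partial\Loss/\partial\logitsscalar_m$ for every allowed index $m\le k$. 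The sum here may be taken over $i\le k$ because $\Loss_{\prplosssubscript}$ is acceptable-dependent, so its probability-derivatives vanish on the disallowed coordinates; this is what makes the truncated $k\times k$ system agree with the full expression for the logit-gradient in \Cref{thm:prp_grad_form}. Since $\kappa_{\prplosssubscript}$ is, by definition, the common value of the entries of $A\Loss'_{\prplosssubscript}$, it therefore equals the common logit-gradient of $\Loss_{\prplosssubscript}$ over the allowed outputs.

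Second, I would simply invoke the computation already carried out in the proof of \Cref{thm:prp_loss}, where it was shown that $\partial\Loss_{\prplosssubscript}/\partial\logitsscalar_j = -\tfrac1{k}$ whenever $\mylabelscalar_j=1$. Combining this with the previous paragraph yields $\kappa_{\prplosssubscript}=-\tfrac1{k}$. Because there is at least one allowed output, $k\ge 1$ is a finite positive integer, so $-\tfrac1{k}\neq 0$, which is precisely the claim.

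The hypothesis that $\sum_i \probsscalar_i$ (the mass on the allowed outputs) is neither $0$ nor $1$ enters in two places, and I would flag both explicitly. It guarantees, via \Cref{lem:A_inv} together with positivity of the softmax outputs, that $\det A=(1-\sum_i\probsscalar_i)\prod_i\probsscalar_i\neq 0$, so that $A$ is invertible and $\kappa_{\prplosssubscript}$ is well defined through $\Loss'=\cfunct v$ with $v=A^{-1}\underline 1$; and it validates the intermediate step of the gradient computation imported from \Cref{thm:prp_loss}, which divides by $1-\sum_i\probsscalar_i$.

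The step I expect to require the most care is not a calculation but the bookkeeping of which index set the various sums range over: making sure the passage from the full sum of \Cref{thm:prp_grad_form} to the truncated $k\times k$ system is legitimate, which rests precisely on the acceptable-dependence of $\Loss_{\prplosssubscript}$. Once that is pinned down the claim is immediate, and it is exactly the non-vanishing needed downstream to write $\Loss'=(\cfunct/\kappa_{\prplosssubscript})\,\Loss'_{\prplosssubscript}$, thereby concluding that the gradient of an arbitrary $\prps$ loss is a scalar multiple of the gradient of $\Loss_{\prplosssubscript}$.
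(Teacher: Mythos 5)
Your proof is correct and takes essentially the same approach as the paper: both reduce the claim to the explicit gradient computation in the proof of \Cref{thm:prp_loss}, which shows that the logit-derivatives of $\Loss_{\prplosssubscript}$ on the allowed outputs all equal $-\frac{1}{k}$. The paper phrases this as a contradiction (if $\kappa_{\prplosssubscript}=0$ then $\Loss'_{\prplosssubscript}=\kappa_{\prplosssubscript}v=0$, contradicting the nonzero computed gradient), whereas you identify $\kappa_{\prplosssubscript}=-\frac{1}{k}$ directly via $(A\Loss')_m=\pd{\Loss}{\logitsscalar_m}$; the difference is purely cosmetic.
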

\begin{proof} At a point where $\kappa_{\prplosssubscript}$ is $0$, we have
$\Loss'_{\prplosssubscript}$ is $0$, since $\Loss'_{\prplosssubscript}= \kappa_{\prplosssubscript} v_{\prplosssubscript}$.
But above we have calculated that $\pd{\Loss_{\prplosssubscript}}{\logitsscalar_j}$ is
    $-\frac1{k}$ if ~ $\mylabelscalar_j=1$ and
    $\frac{\probsscalar_j}{1-\sum_i \mylabelscalar_i \probsscalar_i}$ if $\mylabelscalar_j=0$.
Clearly this is not $0$ when $\vec p$ is nontrivial.
\end{proof}

From the claim it follows that at every point, $\Loss'$ and $\Loss'_{\prplosssubscript}$ only
differ by a constant multiple. Of course, we are not interested in the derivatives of the loss functions, but in the functions
themselves.

  Before we move on with the remainder of the proof, here is an outline of the steps.
  \begin{enumerate}
    \item We argue that $\Loss' = \cfunct v$ for some $\cfunct:\RR^k \to \RR$. 
and that $\Loss' = d \cdot L'_{\prplosssubscript}$ for some constant function $d$.

    \item The sets $H_z = \Loss_{\prplosssubscript}^{-1}(\{z\})$  are path-connected.
    \item We argue that $\Loss$ is constant on $H_z$ for any $z$. Restated, this means that the function $h:\RR\to\RR$ as
required by theorem (but not necessarily smooth) exists.  This will make use of the first items above.
    \item The $h$ function is continuously differentiable.
  \end{enumerate}

We have already shown the first item above, modulo the gap of showing
the determinant of $A$ is nonzero, and also that $\kappa_{\prplosssubscript}$ is never
$0$.

  \textbf{2.} Note that in this item, we are only reasoning about
$\Loss_{\prplosssubscript}$, and not the generic loss function $\Loss$.
Let $H_z = \Loss_{\prplosssubscript}^{-1}(\{z\}) = \{ \probs | \Loss_{\prplosssubscript}(\probs, \myinput) = z \}$ be the preimage of $z$. 
  Let $\mathcal{P}=\{(\probsscalar_1, \dots, \probsscalar_k)| \probsscalar_i\in (0,1), \sum_j \probsscalar_j < 1\}$ denote the space of the projection onto the first $k$ coordinates of the non-degenerate probability distribution over $\outdim$ categories.

  Note that $\mathcal{P}$ is an open path-connected subset of $\RR^k$ and hence a differentiable manifold. 
  At the same time, the range of $\Loss_{\prplosssubscript}$ is $\RR$, which is also a differentiable manifold.
  Thus we can view $\Loss_{\prplosssubscript}$ as a smooth map between manifolds $\mathcal{P}$ and $\RR$. 
Our next goal will be:
\begin{claim} \label{clm:preimagemanifold} Each $H_z$
is a differentiable manifold.
\end{claim}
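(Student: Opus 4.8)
The plan is to obtain the manifold structure of each $H_z$ as a direct application of the Regular Value Theorem (the level-set / preimage theorem of differential topology). We already have, from the preceding item, that $\mathcal{P}$ is an open, hence $\numallowed$-dimensional, submanifold of $\RR^{\numallowed}$, that $\Loss_{\prplosssubscript}:\mathcal{P}\to\RR$ is smooth, and that $\RR$ is a one-dimensional manifold. The theorem then guarantees that whenever $z$ is a \emph{regular value} of $\Loss_{\prplosssubscript}$ -- i.e.\ the differential $d(\Loss_{\prplosssubscript})_{\probs}$ is surjective at every $\probs\in H_z$ -- the fiber $H_z=\Loss_{\prplosssubscript}^{-1}(\{z\})$ is an embedded submanifold of $\mathcal{P}$ of dimension $\numallowed-1$, and in particular a differentiable manifold.

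Since the target $\RR$ is one-dimensional, surjectivity of $d(\Loss_{\prplosssubscript})_{\probs}$ is equivalent to the gradient $\nabla_{\probs}\Loss_{\prplosssubscript}$ being nonzero at $\probs$. So the only thing I would need to check is that this gradient never vanishes on $\mathcal{P}$, which makes \emph{every} value of $\Loss_{\prplosssubscript}$ regular. For this I reuse the partial derivatives already computed in the proof of \Cref{thm:prp_loss}: for each allowed coordinate $j$ (the coordinates parametrizing $\mathcal{P}$),
$$\pd{\Loss_{\prplosssubscript}}{\probsscalar_j} = -\frac{1}{1-\sum_i \mylabelscalar_i \probsscalar_i} - \frac{1}{\numallowed\,\probsscalar_j}.$$
On $\mathcal{P}$ we have $\probsscalar_j\in(0,1)$ and $\sum_i \mylabelscalar_i \probsscalar_i = \sum_{i\le \numallowed}\probsscalar_i < 1$, so both summands are strictly negative; hence every component of $\nabla_{\probs}\Loss_{\prplosssubscript}$ is strictly negative, and in particular the gradient is nonvanishing on all of $\mathcal{P}$.

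Putting these together, every $z$ in the range of $\Loss_{\prplosssubscript}$ is a regular value, so the Regular Value Theorem applies and each $H_z$ is a $(\numallowed-1)$-dimensional differentiable submanifold of $\mathcal{P}$. I expect the only real content to be the nonvanishing-gradient check, which is immediate from the earlier derivative computation; the rest is a routine invocation of a standard theorem. The one point I would state carefully is that the relevant derivative here is with respect to the probability coordinates $\probsscalar_j$ parametrizing $\mathcal{P}$ (as opposed to the logit derivatives used when establishing the $\prps$ property), but since both are strictly negative the conclusion is unaffected.
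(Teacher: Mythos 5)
Your proposal is correct and takes essentially the same route as the paper: both reduce the claim to the regular value (preimage) theorem and then check that the gradient of $\Loss_{\prplosssubscript}$ never vanishes on $\mathcal{P}$, so that every $z$ is a regular value. The only cosmetic difference is that the paper establishes the non-vanishing gradient via \Cref{clm:notzero} (through the logit derivatives and the factorization $\Loss'_{\prplosssubscript}=\cfunct\, A^{-1}\underline{1}$), whereas you verify it directly from the probability-coordinate partials $-\frac{1}{1-\sum_i \mylabelscalar_i \probsscalar_i}-\frac{1}{\numallowed \probsscalar_j}<0$, which is if anything slightly more direct since those are the coordinates of the domain manifold $\mathcal{P}$.
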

  
\begin{proof}
  Let $X,Y$ be two differentiable manifolds and $f:X\to Y$ a smooth map between them.
  We say that $y\in Y$ is regular if for every $x\in f^{-1}(y)$ the map $df_x: T_xX \to T_yY$ is surjective, where $T_xX$ is the tangent space of $X$ in $x$.

  We will use the following elementary result about differentiable manifolds. 

\begin{fact} \label{fact:regularpoint}
  If $y\in Y$ is a regular value of $f$, then $f^{-1}(y)$ is a differentiable submanifold of $X$.%
\end{fact}

  We want to show that $d\Loss_{\prplosssubscript}$ is surjective everywhere, in order to argue, using
the fact above, that  the pre-image of a single point is a differentiable
manifold.

  Since $\RR$ is one-dimensional, $d\Loss_{\prplosssubscript}$ is not surjective precisely when $d\Loss_{\prplosssubscript}$ is the zero map.
  Equivalently the gradient is zero; it follows from \Cref{clm:notzero}
that this
can only occur on the 
boundary of $\mathcal{P}$. 
  Therefore, any $z$ is a regular value of $H$, 
and consequently $H_z$ is a differentiable submanifold of $\mathcal{P}$.
\end{proof}

For any probability distribution $\probs$ over $\outdim$ categories with $\sum_{j=1}^k \probsscalar_j=1$ we assign a line that goes through $\probs$ and $0$, let $l_{\probs} = \{ \omega \probs | \omega \in (0,1) \}$ denote this line.
Note that $l_{\probs}$ does not contain either $(\probsscalar_1, \dots \probsscalar_k)$ or $0$ and it lies in $\mathcal{P}$, i.e., $l_{\probs}\subset \mathcal{P}$.
Informally, $l_{\probs}$ represents the possible ways of ``scaling down'' some target distribution that assigns all the mass to acceptable elements.
We
make the following claim, where again $\vec y$ is fixed to sum to $k$.
\begin{claim} \label{clm:everyvalue}  $\Loss_{\prplosssubscript}$ takes every value precisely once on $l_{\probs}$. 
\end{claim}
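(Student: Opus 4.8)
The plan is to parametrize $l_{\probs}$ by $\omega\in(0,1)$ and show that, restricted to this line, $\Loss_{\prplosssubscript}$ collapses to a strictly monotone continuous function of $\omega$ whose range is all of $\RR$. First I would substitute the point $\omega\probs$, whose first $k$ coordinates are $(\omega\probsscalar_1,\dots,\omega\probsscalar_k)$, into the fixed-$\mylabel$ form of the loss. Using the normalization $\sum_{i=1}^{k}\probsscalar_i=1$, the disallowed term becomes $\log(1-\omega)$, while the allowed term expands as $-\frac1k\sum_{i=1}^{k}\big(\log\omega+\log\probsscalar_i\big)=-\log\omega-\frac1k\sum_{i=1}^{k}\log\probsscalar_i$. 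Since the final sum does not depend on $\omega$, along $l_{\probs}$ we obtain
\[
\Loss_{\prplosssubscript}(\omega\probs)=\log\!\left(\frac{1-\omega}{\omega}\right)-C,
\]
where $C=\frac1k\sum_{i=1}^{k}\log\probsscalar_i$ is a constant determined by the direction $\probs$.

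Next I would check that $\omega\mapsto\log\big((1-\omega)/\omega\big)-C$ is a strictly decreasing continuous bijection from $(0,1)$ onto $\RR$. Differentiating gives $-\tfrac{1}{1-\omega}-\tfrac{1}{\omega}$, which is strictly negative on all of $(0,1)$, so the function is strictly monotone. The boundary behaviour is $+\infty$ as $\omega\to0^{+}$ and $-\infty$ as $\omega\to1^{-}$. Strict monotonicity together with the intermediate value theorem then yields that each value in $\RR$ is attained exactly once, which is precisely the assertion of the claim.

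The argument is essentially a single substitution followed by elementary calculus, so I do not expect a genuine obstacle; the only point requiring care is isolating the $\omega$-dependence and confirming that $C$ absorbs all dependence on $\probs$ while leaving the qualitative profile along every line identical. This uniformity is exactly what the subsequent steps will exploit: since $\Loss_{\prplosssubscript}$ sweeps out all of $\RR$ bijectively on each $l_{\probs}$, every level set $H_z$ meets each line in a unique point, providing a coordinate system on $\mathcal{P}$ adapted to $\Loss_{\prplosssubscript}$ that will drive the path-connectedness and constancy arguments to follow.
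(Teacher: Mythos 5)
Your proposal is correct and follows essentially the same route as the paper's own proof: substitute the parametrization $\omega\probs$ with $\sum_{i\le k}\probsscalar_i=1$, reduce the loss along the line to $\log\bigl((1-\omega)/\omega\bigr)$ plus an $\omega$-independent constant, and conclude surjectivity and injectivity from the limits at $\omega\to 0^+,1^-$ and the strictly negative derivative $-\tfrac{1}{1-\omega}-\tfrac{1}{\omega}$. No gaps; the two arguments are the same computation.
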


\begin{proof}
To prove the claim, observe that the loss is 
\begin{align*}
  \log(1 - \omega \sum_i \mylabelscalar_i \probsscalar_i) - \frac{1}{k} \sum_i \mylabelscalar_i \log(\omega \probsscalar_i)
    &= \log\left(1- \omega\right) -\log(\omega) -
    \frac{1}{k} \sum_i \mylabelscalar_i \log(\probsscalar_i)
  \end{align*}
  where we used that $\sum_i \mylabelscalar_i \probsscalar_i = 1$ and $\sum_i \mylabelscalar_i = k$.
  It is clear that when $\omega \to 0$ it converges to $\infty$ and when $\omega \to 1$ it converges to $-\infty$.
  Now, we show that the above mapping is monotonically strictly decreasing in $\omega$, and consequently $\Loss_{\prplosssubscript}$ takes every value of $\RR$ precisely once on $l_{\probs}$.
  It is sufficient if the derivative with respect to $\omega$ is less than zero. 
  The derivative is $ -\frac{1}{1- \omega} - \frac1{\omega} $ which is clearly less than zero. Hence the claim is proven.
\end{proof}

Recall that we are interested in showing path connectedness
of the set $H_z$, the pre-image of singletons under $\Loss_{\prplosssubscript}$.
By the claim above, we know that as we vary the lines
$l_{\probs}$, $\Loss_{\prplosssubscript}$ always hits $H_z$ exactly once on the line, but the point at which it hits $H_z$ varies with $\probs$.

  Let $\pi:\mathcal{P} \to S^{k-1}$ the projection given by $\pi(x) = \frac{x}{\left\lVert x \right\rVert}$. Here $\lVert\cdot\rVert$ is the $2$-norm.
  Note that the preimage of a point under $\pi$ is precisely an $l_{\probs}$ line for some $\probs$.
  Since we have shown above that $\Loss_{\prplosssubscript}$ takes every value once over a fixed $l_{\probs}$ line, we conclude that $\pi$ is a bijection between $\pi(\mathcal{P})$ and $H_z$.

We will now use the fact that $H_z$ is a manifold by \Cref{clm:preimagemanifold}.
  It is known that for manifolds, connected and path-connected are equivalent properties. 
  If $H_z$ were not connected, then there would be $U,V$ disjoint
non-empty open sets such that $H_z\subset U\cup V$. 
  Observe that $\pi$ is an open map,  $\pi(\mathcal{P})$ is connected, and $\pi{\mathcal{P}} = \pi(H_z) \subset \pi(U\cup V)$. We
cannot have two disjoint open sets covering the connected set $\pi(\mathcal{P})$. Thus the sets $\pi(U)$ and $\pi(V)$ must overlap:
  there are $u\in U, v\in V$ points such that $\pi(u) = \pi(v)$. 
Thus there are two distinct
points in the pre-image of $\pi$ with the same value. Since
the pre-image is an $l_{\probs}$ line, this
contradicts \Cref{clm:everyvalue}.

\textbf{3.} We show that $\Loss$ is constant on $H_z$ for any $z$,
and that a $h:\RR\to\RR$ function exists such that $\Loss = h(\Loss_{\prplosssubscript})$.
The idea will be that for any $a \neq b \in H_z$ we show
$\Loss(b)-\Loss(a)=0$.  We do this by computing $\Loss(b)-\Loss(a)$ as an integral of
a quantity, over a path in $H_z$ between $a$ and $b$, using the fact
that $H_z$ is path-connected. The quantity will involve a dot product
with the derivative of $\Loss$, and we will use part (1) to argue that this dot
product is always $0$.
We will make use of the following result from multi-variable
calculus
\begin{proposition} \label{prop:pathint}
  For any $F:\RR^m\to \RR$ continuously differentiable function and 
$\gamma:[0,1] \to \RR^m$ differentiable path from $\gamma(0)=a$ to $\gamma(1)=b$,
  we have $F(b) - F(a) = \int_\gamma \langle F', d\gamma \rangle$.
\end{proposition}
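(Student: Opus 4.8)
The plan is to recognize Proposition~\ref{prop:pathint} as the fundamental theorem of calculus for line integrals (the gradient theorem) and reduce it to the one-dimensional case by composing $F$ with the path $\gamma$. First I would unfold the definition of the line integral on the right-hand side: by convention, $\int_\gamma \langle F', d\gamma\rangle = \int_0^1 \langle F'(\gamma(t)), \gamma'(t)\rangle\, dt$, where $F'$ denotes the gradient of $F$ and $\gamma'$ the velocity of the path.

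The core step is to introduce the auxiliary scalar function $g:[0,1]\to\RR$ defined by $g(t) = F(\gamma(t))$, so that $g(1)-g(0) = F(b)-F(a)$ is exactly the left-hand side of the claimed identity. I would then apply the multivariable chain rule: since $F$ is continuously differentiable and $\gamma$ is differentiable, $g$ is differentiable on $[0,1]$ with $g'(t) = \langle F'(\gamma(t)), \gamma'(t)\rangle$, which is precisely the integrand appearing on the right-hand side. With this identification in hand, the result follows by applying the one-dimensional fundamental theorem of calculus to $g$, giving $g(1)-g(0) = \int_0^1 g'(t)\,dt$; substituting the expressions for $g(0)$, $g(1)$, and $g'$ yields $F(b)-F(a) = \int_\gamma\langle F', d\gamma\rangle$ as required.

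The one point requiring care is the integrability of $g'$ needed to invoke the fundamental theorem of calculus. The cleanest hypothesis is that $\gamma$ be continuously differentiable ($C^1$), in which case $g' = \langle F'\circ\gamma, \gamma'\rangle$ is continuous, hence Riemann integrable, and the standard FTC applies immediately. If $\gamma$ is only assumed differentiable, one needs the refinement of the FTC valid for everywhere-differentiable functions with integrable derivative. For the use of this proposition in the present paper, the paths connecting points of $H_z$ may be taken to be piecewise $C^1$, so this subtlety never arises: I would simply remark that it suffices to establish the identity on each smooth segment and add the contributions, the intermediate endpoint values telescoping. In short, the whole argument is routine once the chain-rule reduction is made; the only substantive modeling choice is the smoothness assumption on $\gamma$, which I would state explicitly as $C^1$ (or piecewise $C^1$) to keep the FTC application elementary.
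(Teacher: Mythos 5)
Your proof is correct; it is the standard chain-rule-plus-FTC argument for the gradient theorem. Note that the paper does not prove this proposition at all---it is explicitly invoked as a known ``result from multi-variable calculus''---so your argument simply supplies the standard proof that the paper takes for granted. Your remark on the hypothesis is also well taken: with $\gamma$ merely differentiable, the integrability of $t \mapsto \langle F'(\gamma(t)), \gamma'(t)\rangle$ is not automatic, and strengthening to (piecewise) $C^1$ paths, which suffices for how the proposition is used on the level sets $H_z$ in the paper, is the clean way to make the FTC step airtight.
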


 Applying this to $\Loss$, we get
  \begin{align*}
    \Loss(b) - \Loss(a) &= \int_\gamma \langle \Loss', d\gamma\rangle
\end{align*}

Applying what we showed about $\Loss'$ in part (1), we have that this integral
simplifies as follows:
\begin{align*}
    &= \int_\gamma \langle \cfunct v, d\gamma \rangle
    = \int_\gamma \frac{\cfunct}{\cfunct_{\prplosssubscript}}\langle \Loss_{\prplosssubscript}', d\gamma \rangle
  \end{align*}
In the last line, we used the assumption that $\kappa_{prp}>0$, so we
can divide by it.
We now use another fact from calculus:
\begin{proposition} \label{prop:orthogonal}
For any smooth $H$, the gradient  $H'$ is orthogonal to the tangent plane of 
a constant surface $H_z$.  
\end{proposition}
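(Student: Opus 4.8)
The plan is to prove the orthogonality by the standard chain-rule argument applied to a curve that stays inside the level set. Fix a point $x \in H_z$ and an arbitrary tangent vector $w \in T_x H_z$. By the definition of the tangent space of a manifold, there is a smooth curve $c:(-\epsilon,\epsilon)\to H_z$ with $c(0)=x$ and $c'(0)=w$. Since the image of $c$ lies entirely in the level set, $H(c(t))=z$ holds identically in $t$.

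Next I would differentiate the identity $H(c(t))=z$ in $t$. The chain rule gives $\langle H'(c(t)), c'(t)\rangle = 0$ for every $t$, and evaluating at $t=0$ yields $\langle H'(x), w\rangle = 0$. Because $w$ was an arbitrary element of $T_x H_z$, the gradient $H'(x)$ is orthogonal to the entire tangent plane at $x$, which is precisely the asserted claim.

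The only point requiring care — and the one I would flag as the genuine obstacle, since the calculus step itself is routine — is guaranteeing that $H_z$ really is a manifold, so that ``tangent plane'' is well defined and a curve $c$ with prescribed initial velocity exists. In our application this has already been secured: \Cref{clm:preimagemanifold} shows that each level set of $\Loss_{\prplosssubscript}$ is a differentiable manifold. Applying the proposition with $H=\Loss_{\prplosssubscript}$ then tells us that $\Loss'_{\prplosssubscript}$ is orthogonal to the tangent plane of $H_z$ at every point, hence orthogonal to $d\gamma$ along any path $\gamma$ contained in $H_z$. Substituting into $\int_\gamma \frac{\cfunct}{\cfunct_{\prplosssubscript}}\langle \Loss'_{\prplosssubscript}, d\gamma\rangle$ makes the integrand vanish identically, so $\Loss(b)-\Loss(a)=0$ and $\Loss$ is constant on each $H_z$, completing the core of \Cref{thm:prp_char}.
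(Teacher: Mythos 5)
Your proof is correct: representing an arbitrary tangent vector by a smooth curve $c$ lying in the level set, differentiating the identity $H(c(t))=z$, and invoking the chain rule to get $\langle H'(x), w\rangle = 0$ is the standard proof of this fact. The paper itself states this proposition as a background fact from multivariable calculus with no proof at all, so your argument simply supplies what was taken for granted; moreover, the prerequisite you flag --- that $H_z$ must be a genuine manifold so that tangent vectors are realized by curves inside it --- is exactly the hypothesis the paper secures beforehand via \Cref{clm:preimagemanifold} (regularity of the level value), and your account of how the proposition is then used to kill the line integral matches the paper's step 3.
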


Now note that $\gamma$ lies in $H_z$, so $d\gamma$ is in the tangent
plane of $H_z$. So the inner product $\langle \Loss', d \gamma
\rangle=0$, for every point of $\gamma$.  And since
$\Loss_{\prplosssubscript}'$ is a constant multiple of $\Loss'$ by
part (1), we have $\langle \Loss_{\prplosssubscript}', d\gamma \rangle
= 0$ for every point of $\gamma$.  This implies that $\Loss(a) =
\Loss(b)$ and that there exists some $h:\RR\to\RR$ function such that
$\Loss = h(\Loss_{\prplosssubscript})$, though it is not necessarily
differentiable or even continuous.

\textbf{4.} We claim that $h$ should be differentiable.  Let $d$ be a
vector.
By $\partial_d \Loss_{\prplosssubscript}(p) \neq 0$, then the directional derivative of $\Loss$ is
  \begin{align*}
    \partial_d \Loss(\probs) &= \lim_{\epsilon\to 0} \frac{\Loss(\probs+ \epsilon d) - \Loss(\probs)}{h}
    \\ &= \lim_{\epsilon \to 0} \frac{h(\Loss_{\prplosssubscript}(\probs + \epsilon d))- h(\Loss_{\prplosssubscript}(\probs))}{\Loss_{\prplosssubscript}(\probs +  \epsilon d)- \Loss_{\prplosssubscript}(\probs)} \cdot \frac{\Loss_{\prplosssubscript}(\probs + 
      \epsilon d)- \Loss_{\prplosssubscript}(\probs)}{\epsilon}
    \\ &= \partial_d \Loss_{\prplosssubscript}(\probs) \lim_{\epsilon \to 0} \frac{h(\Loss_{\prplosssubscript}(\probs + \epsilon d))- h(\Loss_{\prplosssubscript}(\probs))}{\Loss_{\prplosssubscript}(\probs +  \epsilon d)- \Loss_{\prplosssubscript}(\probs)}
  \end{align*}
  By assumption $\Loss_{\prplosssubscript}'$ and $\Loss'$ exist and they are continuous, therefore the above limit also exists which is just the derivative of $h$ at 
  $\Loss_{\prplosssubscript}(\probs)$. That means that $h$ is indeed continuously differentiable on the domain of $\Loss_{\prplosssubscript}$, which is $\RR$.
  Note that we fixed $y$ at the very beginning. There are only finitely many such $y$ over a set of $\outdim$ outputs, so we have a $h$ function for every $y$, and putting these
  together gets the $h$ that we want.
\end{proof}

%auto-ignore
\section{Theorems related to the $\biprploss$}
\label{app:biprploss}

In this section we prove the characterization theorems for loss functions satisfying the $\biprps$ property. We recall the two theorems:

\thmbiprploss*

\thmbiprpchar*

We also recall the definition of the $\biprploss$:

$$\Loss_{\biprplosssubscript}(\probs, \mylabel) =
\underbrace{- \frac{1}{\numallowed}\sum_i \mylabelscalar_i \log(\probsscalar_i)}_{\textrm{Allowed term}} + 
\underbrace{\frac{1}{\outdim - \numallowed} \sum_i (1-\mylabelscalar_i) \log(\probsscalar_i)}_{\textrm{Disallowed term}}
$$

As before, let $\numallowed=\sum_i \mylabelscalar_i$, and $\outdim-\numallowed = \sum_i (1-\mylabelscalar_i)$ denote the number of acceptable and unacceptable labels, respectively.

\begin{proof}[Proof of \Cref{thm:biprp_loss}.]

  Recall that the $\biprps$ property assumes a softmax regression model and that the logit vector $\logits$ is a parameter vector.  
  In order to show that $\Loss_{\biprplosssubscript}$ has the
  $\biprps$ property, we need to show that partial derivatives in
  $\logitsscalar_j$ are equal whenever $\mylabelscalar_j=1$ and they are also equal whenever $\mylabelscalar_j=0$. First, we compute
  the partial derivatives in the probabilities $\probsscalar_j$.

  \begin{align*}
    \frac{\partial}{\partial \probsscalar_i} \Loss_{\biprplosssubscript}(\probs, \mylabel) &= -\frac{1}{\numallowed} \frac{\mylabelscalar_i}{\probsscalar_i} + \frac{1}{\outdim - \numallowed} \frac{1-\mylabelscalar_i}{\probsscalar_i}
    = \begin{cases}
      -\frac{1}{\numallowed} \frac{1}{\probsscalar_i}~&\text{if}~\mylabelscalar_i=1
      \\\frac{1}{\outdim- \numallowed} \frac{1}{\probsscalar_i}~&\text{otherwise}
    \end{cases}
  \end{align*}

  Now, we compute the partial derivatives in logit $\logitsscalar_j$. 
Recall that $\pd{\probsscalar_i}{\logitsscalar_j}$ is the partial derivative of the softmax function which is $\probsscalar_i(\delta_{ij} -\probsscalar_j)$. 
\begin{align*}
  \pd{\Loss_{\biprplosssubscript}}{\logitsscalar_j} &= \sum_{i} \pd{\Loss_{\biprplosssubscript}}{\probsscalar_i}\pd{\probsscalar_i}{\logitsscalar_j}
  = \sum_i \left(-\frac{1}{\numallowed} \frac{\mylabelscalar_i}{\probsscalar_i} + \frac{1}{\outdim - \numallowed} \frac{1-\mylabelscalar_i}{\probsscalar_i}\right) \probsscalar_i ( \delta_{ij}-\probsscalar_j)
  \\&= \sum_i \left(-\frac{1}{\numallowed} \mylabelscalar_i + \frac{1}{\outdim - \numallowed} (1-\mylabelscalar_i)\right) ( \delta_{ij}-\probsscalar_j)
  \\&= \left(-\frac{1}{\numallowed} \mylabelscalar_j + \frac{1}{\outdim - \numallowed} (1-\mylabelscalar_j)\right) - \probsscalar_j \sum_i \left(-\frac{1}{\numallowed} \mylabelscalar_i + \frac{1}{\outdim - \numallowed} (1-\mylabelscalar_i)\right)
  \\&= -\frac{1}{\numallowed} \mylabelscalar_j + \frac{1}{\outdim - \numallowed} (1-\mylabelscalar_j)
  = \begin{cases}
    -\frac{1}{\numallowed}~&\text{if}~\mylabelscalar_i=1
    \\\frac{1}{\outdim - \numallowed}~&\text{otherwise}
  \end{cases}
\end{align*}
We can observe that the gradients of the logits with $\mylabelscalar_i=1$ and those with $\mylabelscalar_i=0$ are equal, indicating that the loss function satisfies the $\biprps$ property.

\end{proof}

\begin{proof}[Proof of \Cref{thm:prp_char}.]

The proof follows along the same lines as in the the $\prps$ case.
The revised outline is just as before:
  \begin{enumerate}
    \item We argue that $\Loss' = \cfunct v$ for some $\cfunct:\RR^k \to \RR$. 
and that $\Loss' = d \cdot \Loss'_{\biprplosssubscript}$ for some constant function $d$.
    \item We show that the sets $H_z = \Loss_{\biprplosssubscript}^{-1}(\{z\})$  are path-connected.
    \item We argue that $\Loss$ is constant on $H_z$ for any $z$. Restated, this means that the function $h:\RR\to\RR$ as
required by theorem (but not necessarily smooth) exists.  
    \item The $h$ function is continuously differentiable.
  \end{enumerate}

  \textbf{1.} Consider a loss function $\Loss$ that has the $\biprps$ property and satisfies the technical assumptions in the statement of the theorem. 
Let $\Loss'_{accept}$ denote the gradient restricted to acceptable inputs, and $\Loss'_{unaccept}$ the restriction
to unacceptable outputs. We let $\Loss'_{\biprplosssubscript,accept}$ and $\Loss'_{\biprplosssubscript,unaccept}$ denote the special case
where the loss is the $\biprploss$.
First, we show that $\Loss'$ and $\Loss'_{\biprplosssubscript}$ are scalar multiples of one another at any $\vec p$. 
Based on our assumption that the ratios of gradients for acceptable and unacceptable inputs are equivalent, 
we can infer that $\Loss'_{accept}$ and $\Loss'_{\biprplosssubscript,accept}$ are scalar multiples of each other, 
as are $\Loss'_{unaccept}$ and $\Loss'_{\biprplosssubscript,unaccept}$.
However, we still need to prove that the constants for both pairs are identical.

Let $v_{accept}$ and $v_{unaccept}$ denote the gradients of $\Loss$ with respect to the acceptable and unacceptable logits.
Similarly, for  $\Loss_{\biprplosssubscript}$, we use  $v_{\biprplosssubscript,accept}, v_{\biprplosssubscript,unaccept}$.
Furthermore, we will use $v$ for the gradients of a general $\Loss$ (with respect to \emph{logits}), without restricting to particular outputs.
We similarly use $v_{\biprplosssubscript}$ for the full gradient vector of $\Loss_{\biprplosssubscript}$, with respect to logits.
Since $\Loss$ satisfies the $\biprps$ property, we have $v_{accept} = \kappa_{accept} \underline{1}$ and $v_{unaccept} = \kappa_{unaccept} \underline{1}$ for some $\kappa_{accept},\kappa_{unaccept}$ scalars.
For any $\Loss$, the gradients on the logits add to $0$, since:
\begin{align*}
  \sum_j \frac{\partial \Loss}{\partial \logitsscalar_j} &= \sum_j \sum_i \frac{\partial \Loss}{\partial \probsscalar_i} \frac{\partial \probsscalar_i}{\partial \logitsscalar_j}
  = \sum_i \frac{\partial \Loss}{\partial \probsscalar_i} \sum_j \frac{\partial \probsscalar_i}{\partial \logitsscalar_j}
  = \sum_i \frac{\partial \Loss}{\partial \probsscalar_i} \sum_j \probsscalar_i(\delta_{ij}-\probsscalar_j) 
  \\ &= \sum_i \frac{\partial \Loss}{\partial \probsscalar_i} \probsscalar_i \sum_j (\delta_{ij}-\probsscalar_j) 
 = 0
\end{align*}
The last equality follows because the $\probsscalar_j$ form a probability distribution, hence for any fixed $i$, $\sum_j (\delta_{ij}-\probsscalar_j)=0$.

Since the gradients on the logits add to zero, we have
\begin{align*}
  0 &= v_{accept} + v_{unaccept} = \numallowed \kappa_{accept} + (\outdim - \numallowed)\kappa_{unaccept}  
  \\0 &= v_{\biprplosssubscript,accept} + v_{\biprplosssubscript,unaccept} = \numallowed \kappa_{\biprplosssubscript,accept} + (\outdim - \numallowed) \kappa_{\biprplosssubscript,unaccept}  
\end{align*}
For this, it is easy to see that the ratios $\kappa_{accept}:\kappa_{\biprplosssubscript,accept}$ and $\kappa_{unaccept}:\kappa_{\biprplosssubscript,unaccept}$ have to be equal. Thus we have derived the following result:

\begin{proposition}  \label{prop:partiallogit}
$v$ and $v_{\biprplosssubscript}$ are scalar multiples of one another. 
\end{proposition}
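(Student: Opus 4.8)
The plan is to promote the two block-wise proportionalities already recorded into a single scalar relating the full logit-gradients $v$ and $v_{\biprplosssubscript}$. The $\biprps$ property forces the gradient of $\Loss$ with respect to the logits to be constant on the acceptable coordinates and constant on the unacceptable coordinates, so $v_{accept} = \kappa_{accept}\underline{1}$ and $v_{unaccept} = \kappa_{unaccept}\underline{1}$, and likewise $v_{\biprplosssubscript,accept} = \kappa_{\biprplosssubscript,accept}\underline{1}$ and $v_{\biprplosssubscript,unaccept} = \kappa_{\biprplosssubscript,unaccept}\underline{1}$. Since on each block both vectors are multiples of $\underline{1}$, I can write $\kappa_{accept} = \alpha\,\kappa_{\biprplosssubscript,accept}$ and $\kappa_{unaccept} = \beta\,\kappa_{\biprplosssubscript,unaccept}$ at every non-degenerate $\probs$, where $\alpha,\beta$ are scalars (well defined because, as computed in the proof of \Cref{thm:biprp_loss}, $\kappa_{\biprplosssubscript,accept} = -\tfrac1{\numallowed}$ and $\kappa_{\biprplosssubscript,unaccept} = \tfrac1{\outdim-\numallowed}$ are nonzero). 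All that remains is to show $\alpha = \beta$, for then $v = \alpha\,v_{\biprplosssubscript}$ coordinate-by-coordinate.

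First I would invoke the identity $\sum_j \partial\Loss/\partial\logitsscalar_j = 0$ established just above for any loss factoring through a softmax. Applied to $\Loss$ it gives $\numallowed\,\kappa_{accept} + (\outdim-\numallowed)\,\kappa_{unaccept} = 0$, and applied to $\biprploss$ it gives $\numallowed\,\kappa_{\biprplosssubscript,accept} + (\outdim-\numallowed)\,\kappa_{\biprplosssubscript,unaccept} = 0$. Substituting the two proportionalities into the first equation and then eliminating $\numallowed\,\kappa_{\biprplosssubscript,accept} = -(\outdim-\numallowed)\,\kappa_{\biprplosssubscript,unaccept}$ via the second, the two $\biprploss$ terms collapse and I am left with $(\outdim-\numallowed)\,\kappa_{\biprplosssubscript,unaccept}\,(\beta - \alpha) = 0$.

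Finally I would cancel the prefactor: since $\outdim > \numallowed$ on any genuine partial label and $\kappa_{\biprplosssubscript,unaccept} = \tfrac1{\outdim-\numallowed}\neq 0$, the factor $(\outdim-\numallowed)\,\kappa_{\biprplosssubscript,unaccept}$ is nonzero, forcing $\alpha = \beta$ and hence $v = \alpha\,v_{\biprplosssubscript}$, which is exactly the claim. The one point needing care is to stay on the interior where every $\probsscalar_i\in(0,1)$, so that the $\biprploss$ constants are defined and nonzero — precisely the non-degeneracy restriction used throughout the $\prps$ argument. The main obstacle is conceptual rather than computational: each block on its own only yields proportionality with a \emph{separate} constant, and one must rule out two distinct constants; it is the zero-sum-of-logit-gradients identity, shared by $\Loss$ and $\biprploss$, that couples the two blocks and pins down a single common scalar.
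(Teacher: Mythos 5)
Your proof is correct and follows essentially the same route as the paper's: you use the block-constant structure of the logit gradients forced by the $\biprps$ property, the identity $\sum_j \pd{\Loss}{\logitsscalar_j} = 0$ shared by any loss composed with softmax, and the resulting pair of linear relations to force the two block-wise proportionality constants to coincide. The only difference is that you spell out the final elimination step explicitly, which the paper dismisses as ``it is easy to see.''
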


Recall that the goal of part (1) of the proof is to show that $\Loss' = d\cdot \Loss'_{\biprplosssubscript}$, i.e., the gradients with respect to the probabilities of $\Loss$ and $\Loss_{\biprplosssubscript}$ are scalar multiple of one another.
\Cref{prop:partiallogit} shows the analog for the gradients with respect to the logits.
But because of the chain rule, the gradients with respect to the probabilities and the logits are connected by a linear transformation. 
We define the vector to vector function  $A$ by
\[
A_{i,j}=\probsscalar_j(\delta_{ij}-\probsscalar_i)
\]
This is quite similar to the function $A$ in the earlier proof of \Cref{thm:prp_char}, but this time $i, j$ range over all inputs,
not just acceptable ones.
The  equality
\[
\frac{\partial \Loss}{\partial \logitsscalar_j} =
 \sum_i \frac{\partial \Loss}{\partial \probsscalar_i} \probsscalar_i \sum_j (\delta_{ij}-\probsscalar_j) 
\]
can be expressed in matrix multiplication terms as 
\[
v = A\Loss'
\]

If $A$ were invertible, then $\Loss' = A^{-1}v$, and it would follow that 
$\Loss' = d\cdot \Loss'_{\biprplosssubscript}$.  Unfortunately, this is not true. 
From the fact that the function uses all inputs, which sum to $1$,
we can infer that $\det A = 0$, and so we cannot take the inverse of $A$ over the entire input space.

Let $V$ be the orthogonal complement of $\underline{1}$. This is all real vectors whose dot product with 
$\underline{1}$ is $0$; that is, vectors whose sum is $0$.  We claim  that $V$ is invertible when we restrict to these vectors:
\begin{claim} \label{clm:invertibleoversubspace} $A$ is invertible over $V$.
\end{claim}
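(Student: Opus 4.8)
The plan is to exploit the same factorization used in \Cref{lem:A_inv}, namely $A = B\,\mathrm{diag}(\probsscalar_1,\dots,\probsscalar_\outdim)$ with $B = I - \probs\,\underline{1}^{\top}$ (so that $B_{ij} = \delta_{ij} - \probsscalar_i$), except that now $i,j$ range over \emph{all} $\outdim$ outputs rather than only the acceptable ones. Because $\sum_i \probsscalar_i = 1$ for a full distribution, $A$ is genuinely singular here: the determinant computed in \Cref{lem:A_inv} carries the factor $1-\sum_i\probsscalar_i$, which vanishes. So the content of \Cref{clm:invertibleoversubspace} is not that $A$ is invertible on all of $\RR^\outdim$, but rather that its only degeneracy lies along $\underline{1}$ and that this degeneracy is transverse to $V$. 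Concretely I would establish three facts: that $A$ maps $\RR^\outdim$ (and hence $V$) into $V$; that $\ker A = \mathrm{span}(\underline{1})$; and that these two facts force $A|_V$ to be a bijection of $V$.

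For the first fact, each column of $A$ sums to zero, since $\sum_i A_{ij} = \probsscalar_j\bigl(1 - \sum_i \probsscalar_i\bigr) = 0$; equivalently $\underline{1}^{\top}A = 0$, so $\mathrm{Image}(A) \subseteq \underline{1}^{\perp} = V$, and in particular $A(V)\subseteq V$. For the second and main fact, I would first note $A\underline{1} = 0$, since $(A\underline{1})_i = \probsscalar_i\bigl(1 - \sum_j \probsscalar_j\bigr) = 0$, so $\mathrm{span}(\underline{1}) \subseteq \ker A$. To see there is nothing more, I use that all $\probsscalar_i > 0$ on a non-degenerate distribution, so $\mathrm{diag}(\probs)$ is invertible and $Av = 0 \iff \mathrm{diag}(\probs)\,v \in \ker B$. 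The matrix $B = I - \probs\,\underline{1}^{\top}$ is a rank-one perturbation of the identity: it acts as the identity on $\underline{1}^{\perp}$ and satisfies $B\probs = \probs - \probs(\underline{1}^{\top}\probs) = \probs - \probs = 0$ using $\underline{1}^{\top}\probs = 1$, so $\ker B = \mathrm{span}(\probs)$. Pulling this back, $\mathrm{diag}(\probs)\,v = c\,\probs$ means $\probsscalar_i v_i = c\,\probsscalar_i$ for all $i$, i.e. $v_i = c$, so $v \in \mathrm{span}(\underline{1})$. Hence $\ker A = \mathrm{span}(\underline{1})$ exactly.

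With these in hand the conclusion is a dimension count: $\dim V = \outdim - 1$, the restriction $A|_V\colon V \to V$ is well defined by the first fact, and its kernel is $\ker A \cap V = \mathrm{span}(\underline{1}) \cap V = \{0\}$ (because $\underline{1}\cdot\underline{1} = \outdim \neq 0$, so $\underline{1}\notin V$). An injective linear endomorphism of a finite-dimensional space is bijective, so $A|_V$ is invertible. I expect the main obstacle to be precisely the kernel computation: the determinant formula of \Cref{lem:A_inv} only certifies that $A$ is singular, and the real work is showing the kernel is \emph{exactly} one-dimensional and spanned by $\underline{1}$ — which the rank-one structure of $B$ delivers cleanly. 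The check that $A(V)\subseteq V$ is minor but necessary, since it is what makes ``invertible over $V$'' meaningful as a statement about an endomorphism of $V$.
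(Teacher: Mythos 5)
Your proof is correct, but it takes a genuinely different route from the paper's. The paper identifies $A$ with the Jacobian of the softmax map and argues that, since softmax is smooth and surjective onto the $(\outdim-1)$-dimensional space of probability distributions, $\mathrm{rank}\,A = \outdim-1$; combined with $A\underline{1}=0$ this forces $\ker A = \mathrm{span}(\underline{1})$ and hence invertibility over $V$. You instead compute the kernel by pure linear algebra, via the factorization $A = B\,\mathrm{diag}(\probs)$ with the rank-one structure $B = I - \probs\,\underline{1}^{\top}$: this gives $\ker B = \mathrm{span}(\probs)$, and pulling back through the diagonal factor (invertible since every $\probsscalar_i>0$ on a non-degenerate distribution) gives $\ker A = \mathrm{span}(\underline{1})$ exactly. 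Your route buys two things. First, it is self-contained and sidesteps a soft spot in the paper's argument: surjectivity of a smooth map does not by itself force its differential to have full rank at every point (consider $x\mapsto x^3$ on $\RR$), so the paper's rank claim really rests on softmax being a submersion --- which is, in effect, the explicit computation you carry out. Second, you check that $A(V)\subseteq V$ (each column of $A$ sums to zero), which is what makes ``invertible over $V$'' meaningful as a statement about an endomorphism of $V$ and lets you close with the injective-implies-bijective dimension count; the paper leaves this step implicit. What the paper's approach buys in exchange is brevity and consistency with the differential-geometric language (manifolds, regular values, tangent spaces) used throughout the rest of the appendix.
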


We mentioned above that the gradients sum to $0$, and
the gradient with respect to the logits -- that is, a $v$ above --  must be in $V$.
Thus, from \Cref{clm:invertibleoversubspace} we are able to take an inverse of $A$ over the relevant vectors, and derive that the partials with respect to the probabilities are scalar multiples, as before.
We now turn to the proof of \Cref{clm:invertibleoversubspace}.

\begin{proof}
  
Recall that $A$ is the Jacobian of the softmax function, which is a surjective function from $\RR^n$, the space of logits,  to the
space of probability distributions over $\outdim$ categories. The latter is an $\outdim-1$ dimensional subspace of $\RR^n$.
We already showed that $\underline{1}$ is in the kernel of $A$. Let $f$ denote the softmax function, then $A:= df$. Since $f$ is a smooth and surjective function,
the rank of $df$ is equal to the dimension of the codomain, i.e. the space of probability distributions, which has dimension $\outdim-1$.
It follows that $\dim \ker A = \outdim - \text{rank}~df = 1$, consequently $\ker A$ is generated by $\underline{1}$ and so $A$ is invertible over $V$, as required.
\end{proof}

  \textbf{2.}
  Analogous to what we did in the $\prp$ case, we argue for path-connectedness 
of
$\Loss_{\biprplosssubscript}$.
Let $H_z = \Loss_{\biprplosssubscript}^{-1}(\{z\}) = \{ \probs | \Loss_{\biprplosssubscript}(\probs, \mylabel) = z \}$ be the preimage of $z$. 
  Let $\mathcal{P}$ be the set of distributions with each probability non-zero and neither the acceptable
nor the unacceptable outputs sum to $1$.

  Note that $\mathcal{P}$ is an open path-connected subset of $\RR^n$ and hence a differentiable manifold. 
  At the same time, the range of $\Loss_{\biprplosssubscript}$ is $\RR$, which is also a differentiable manifold.
  Thus we can view $\Loss_{\biprplosssubscript}$ as a smooth map between manifolds $\mathcal{P}$ and $\RR$. 
  We will show the analogous claim as for $\prp$:
\begin{claim} \label{clm:preimagemanifoldbi} Each $H_z$
is a differentiable manifold.
\end{claim}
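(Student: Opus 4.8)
The plan is to replay the argument of \Cref{clm:preimagemanifold} almost verbatim, with $\Loss_{\biprplosssubscript}$ in place of $\Loss_{\prplosssubscript}$. Viewing $\Loss_{\biprplosssubscript}$ as a smooth map between the manifolds $\mathcal{P}$ and $\RR$, it suffices by \Cref{fact:regularpoint} to show that every $z\in\RR$ is a regular value, since then $H_z=\Loss_{\biprplosssubscript}^{-1}(\{z\})$ is automatically a differentiable submanifold of $\mathcal{P}$. As the codomain is one-dimensional, $d\Loss_{\biprplosssubscript}$ fails to be surjective at a point precisely when it is the zero map there; so the entire task reduces to verifying that the differential of $\Loss_{\biprplosssubscript}$ is nonzero at every point of $\mathcal{P}$.

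For the nonvanishing I would reuse the logit gradient already computed in the proof of \Cref{thm:biprp_loss}, namely $\pd{\Loss_{\biprplosssubscript}}{\logitsscalar_j}=-\tfrac1{\numallowed}$ on acceptable coordinates and $\pd{\Loss_{\biprplosssubscript}}{\logitsscalar_j}=\tfrac1{\outdim-\numallowed}$ on unacceptable ones; this is a constant, manifestly nonzero vector (and, as noted in part (1), it lies in $V$, the set of vectors summing to $0$). The tangent space to the distribution manifold $\mathcal{P}$ at any point is exactly $V$, and $\softmax$ is a submersion onto the interior of the simplex with $d\softmax$ having image $V$. By the chain rule $d(\Loss_{\biprplosssubscript}\circ\softmax)=d\Loss_{\biprplosssubscript}\circ d\softmax$, a nonzero logit gradient forces $d\Loss_{\biprplosssubscript}$ to be nonzero on $V$. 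Equivalently, one can argue directly with the probability partials $\pd{\Loss_{\biprplosssubscript}}{\probsscalar_i}=-\tfrac1{\numallowed\probsscalar_i}$ for $\mylabelscalar_i=1$ and $\tfrac1{(\outdim-\numallowed)\probsscalar_i}$ for $\mylabelscalar_i=0$: the differential $v\mapsto\sum_i\pd{\Loss_{\biprplosssubscript}}{\probsscalar_i}v_i$ vanishes on all of $V$ only if this gradient is parallel to $\underline1$, which is impossible because its acceptable entries are strictly negative while its unacceptable entries are strictly positive.

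Having shown $d\Loss_{\biprplosssubscript}$ is nowhere zero, every $z$ is a regular value, and \Cref{fact:regularpoint} gives that each $H_z$ is a differentiable submanifold of $\mathcal{P}$. The one point that needs care relative to the $\prp$ case is that here $\mathcal{P}$ sits inside the simplex, so its tangent space is $V$ rather than a full coordinate space; unlike in \Cref{clm:preimagemanifold}, where nonvanishing of the ordinary gradient on an open subset of $\RR^{\numallowed}$ sufficed, one must check nonvanishing of the differential along the constrained direction $V$. This is exactly what the sign asymmetry between acceptable and unacceptable outputs supplies — which in turn uses that both groups are nonempty, i.e. $1\le\numallowed\le\outdim-1$ — and it is the only genuine obstacle in transporting the $\prp$ argument.
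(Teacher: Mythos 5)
Your proposal is correct and follows the same route as the paper's own proof: view $\Loss_{\biprplosssubscript}$ as a smooth map from $\mathcal{P}$ to $\RR$, invoke the regular-value fact (\Cref{fact:regularpoint}), and reduce everything to showing that the differential is nowhere zero. Where you genuinely diverge is in how that nonvanishing is established, and your version is the more careful one. The paper's proof is a near-verbatim transcription of the $\prp$ case and simply asserts that the gradient ``is zero, which can only occur on the boundary of $\mathcal{P}$,'' implicitly treating $\mathcal{P}$ as an open subset of $\RR^{\outdim}$ even though it is defined as a set of \emph{distributions} and hence sits inside the simplex. You instead check the differential along the constrained tangent directions $V$ (vectors summing to zero) and supply the argument that actually makes this work: the probability partials $-\tfrac{1}{\numallowed\,\probsscalar_i}$ on acceptable coordinates and $\tfrac{1}{(\outdim-\numallowed)\,\probsscalar_i}$ on unacceptable ones have opposite signs, so the gradient can never be parallel to $\underline{1}$, hence the differential cannot vanish on $V$ (equivalently, your chain-rule argument through the constant, nonzero logit gradient computed in \Cref{thm:biprp_loss}). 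This sign-asymmetry step closes the small gap left by the paper's one-line assertion, and it correctly isolates the hypothesis $1 \le \numallowed \le \outdim-1$ on which the claim actually depends.
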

  
\begin{proof}
  Let $X,Y$ be two differentiable manifolds and $f:X\to Y$ a smooth map between them.
  We say that $y\in Y$ is regular if for every $x\in f^{-1}(y)$ the map $df_x: T_xX \to T_yY$ is surjective, where $T_xX$ is the tangent space of $X$ in $x$.

We again use that fact 
 that if $y\in Y$ is a regular value of $f$, then $f^{-1}(y)$ is a differentiable submanifold of $X$.
  We show that $d\Loss_{\biprplosssubscript}$ is surjective everywhere, in order to argue, using
the fact above, that  the pre-image of a single point is a differentiable
manifold.

  Since $\RR$ is one-dimensional, $d\Loss_{\biprplosssubscript}$ is not surjective precisely when $d\Loss_{\biprplosssubscript}$ is the zero map.
  Equivalently the gradient is zero, which can only occur on the 
boundary of $\mathcal{P}$. %
  Therefore, any $z$ is a regular value of $H$, 
and consequently $H_z$ is a differentiable submanifold of $\mathcal{P}$.
\end{proof}

  For any probability distribution $\probs=(\probsscalar_1 \ldots \probsscalar_n)$ over $\outdim$ categories, we let $D_\probs$
denote all distributions that agree with $\probs$ on both the ratios of acceptable values, as well as on  the ratio of
unacceptable values, with both of these nonzero. That is, $D_\probs$ is the subset of $\mathcal{P}$ that we get 
by fixing the ratios for both acceptable and unacceptable values. 

We again proceed analogously to the $\prp$ case:
\begin{claim} \label{clm:everyvaluebiprp}  For each fixed $\mylabel$ having $1$
on entries for $\vec a$ and $0$ on entries for $\vec u$,   $\Loss_{\biprplosssubscript}$ takes every value precisely once on $D_\probs$. 
\end{claim}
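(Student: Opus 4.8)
The plan is to exploit the fact that $D_{\probs}$, despite living in $\RR^n$, is only a one-parameter family: once the internal ratios among the acceptable outputs and among the unacceptable outputs are frozen, a member of $D_{\probs}$ is completely determined by a single scalar, namely the total mass $s = \sum_i \mylabelscalar_i \probsscalar_i$ placed on the acceptable set. Writing $r_i$ for the fixed ratio of acceptable output $i$ (so $\sum_{i:\mylabelscalar_i=1} r_i = 1$) and $q_j$ for the fixed ratio of unacceptable output $j$ (so $\sum_{j:\mylabelscalar_j=0} q_j = 1$), every point of $D_{\probs}$ has the form $\probsscalar_i = s\, r_i$ for acceptable $i$ and $\probsscalar_j = (1-s)\, q_j$ for unacceptable $j$, with $s \in (0,1)$. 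I would begin by recording this parameterization, which plays exactly the role that the line $l_{\probs}$ played in the $\prp$ version, \Cref{clm:everyvalue}.

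Next I would substitute this parameterization into the definition of $\Loss_{\biprplosssubscript}$. The key observation is that each logarithm splits as $\log(s\, r_i) = \log s + \log r_i$ and $\log((1-s)\, q_j) = \log(1-s) + \log q_j$, and the $\frac{1}{\numallowed}$ and $\frac{1}{\outdim-\numallowed}$ normalizations are precisely calibrated so that the $\log s$ and $\log(1-s)$ terms survive with coefficient $1$. Concretely, the allowed term collapses to $-\log s$ plus a constant, the disallowed term to $\log(1-s)$ plus a constant, so that
\[
\Loss_{\biprplosssubscript} = \log\left(\frac{1-s}{s}\right) + C,
\]
where $C$ depends only on the frozen ratios $r_i, q_j$ and is therefore constant over all of $D_{\probs}$.

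It then remains to show that $s \mapsto \log\left(\frac{1-s}{s}\right)$ is a bijection from $(0,1)$ onto $\RR$, which I would handle exactly as in \Cref{clm:everyvalue}: differentiating gives $-\frac{1}{s} - \frac{1}{1-s}$, strictly negative throughout $(0,1)$, so the map is strictly decreasing; since it tends to $+\infty$ as $s \to 0^+$ and to $-\infty$ as $s \to 1^-$, the intermediate value theorem together with strict monotonicity shows that every real value is attained exactly once. This yields the claim.

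The calculations here are entirely routine and mirror the $\prp$ argument line for line; the only point that I would state with care — and the sole place any bookkeeping is required — is verifying that $D_{\probs}$ really is a single faithful copy of the interval $(0,1)$, i.e.\ that fixing the two groups of ratios together with the scalar $s$ recovers a genuine non-degenerate member of $\mathcal{P}$, and that the defining constraints of $\mathcal{P}$ (all probabilities nonzero, neither group summing to $1$) correspond precisely to $s$ ranging over the open interval $(0,1)$. Beyond this, no genuine obstacle arises.
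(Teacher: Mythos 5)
Your proposal is correct and follows essentially the same route as the paper's proof: both parameterize $D_{\probs}$ by a single scalar (your $s$, the paper's $\omega$), substitute into $\Loss_{\biprplosssubscript}$ so that the logarithms split and the normalizations collapse the loss to $\log\left(\frac{1-s}{s}\right)$ plus a constant, and then conclude by strict monotonicity together with the limits $+\infty$ and $-\infty$ at the endpoints of $(0,1)$.
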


\begin{proof}
Let us fix non-trivial distributions $\vec a$ on acceptable outputs and $\vec u$ on  unacceptable outputs
with the sum of the entries of both coming to $1$. $D_\probs$ consists of the distributions $\omega \vec a$, $(1-\omega) \vec u$ for
all $0 < \omega<1$.
To prove the claim, observe that the loss is
\begin{align*}
- \frac{1}{\numallowed} \left(\sum_{i \in A} \log( \omega a_i) \right) +
\frac{1}{\outdim - \numallowed} \sum_{i \in U} \log( (1 - \omega) u_i)
  \end{align*}
Here $A$ are the indices of acceptable values and $U$ the indices of unacceptable values.
Note that this simplifies to an expression of the form
\[ 
-\log(\omega) - \frac{1}{\numallowed} \sum_{i \in A} \log(a_i) + \log(1- \omega) + \frac{1}{\outdim-\numallowed} \sum_{i \in U} u_i
\]
If we ignore terms without $\omega$, this is 
$ - \log(\omega) + \log(1-\omega)$.
  Thus we see, as in the $\prp$ case, when $\omega \to 0$ it converges to  $\infty$ and when $\omega \to 1$ it converges to $-\infty$.
  And differentiating with respect to $\omega$, we see that
 the above mapping is monotonically strictly decreasing in $\omega$, and consequently $\Loss_{\biprplosssubscript}$ takes every value of $\RR$ precisely once on the
set.
\end{proof}

Recall that we are interested in showing path connectedness
of the set $H_z$, the pre-image of singletons under $\Loss_{\biprplosssubscript}$.
By the claim above, we know that as we vary $\probs$, $\Loss_{\biprplosssubscript}$ will always hit $H_z$ exactly once on the set $D_\probs$, but the point at which it hits $H_z$ will vary with $\probs$.

Let $\pi$ be the quotient map equating two elements if they are in the same $D_\probs$.
Thus by definition  the preimage of a point under $\pi$ is precisely a set $D_\probs$ for some $\probs$.
Since we have observed above that $\Loss_{\biprplosssubscript}$ takes every value once over a fixed $D_\probs$, we conclude that $\pi$ is a bijection between $\pi(\mathcal{P})$ and $H_z$.  
We will show in the next paragraph that $\pi$ is an open map, but first introduce a useful lemma.

\begin{lemma}
If $f$ is a quotient map, then $f$ is open if and only if
$$U\subset X~\text{is open}~\Rightarrow f^{-1}(f(U))~\text{is open}$$
\end{lemma}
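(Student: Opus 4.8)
The plan is to unwind the definition of a quotient map and to recognize that $f^{-1}(f(U))$ is exactly the \emph{saturation} of $U$, namely the union of all fibers of $f$ that meet $U$. Recall that $f\colon X\to Y$ being a quotient map means precisely that a set $V\subseteq Y$ is open in $Y$ if and only if $f^{-1}(V)$ is open in $X$; in particular $f$ is continuous. Both directions of the claimed equivalence then reduce to applying this defining criterion to the special subset $V = f(U)$.

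For the forward direction, I would assume $f$ is open and take an arbitrary open $U\subseteq X$. Then $f(U)$ is open in $Y$ by openness of $f$, and since $f$ is continuous, $f^{-1}(f(U))$ is open in $X$. Note that this direction uses only continuity and openness of $f$, not the full strength of the quotient property. For the converse, I would assume that $f^{-1}(f(U))$ is open for every open $U$, and fix such a $U$. Applying the quotient-map criterion to $V = f(U)$, the image $f(U)$ is open in $Y$ if and only if $f^{-1}(f(U))$ is open in $X$; the latter holds by hypothesis, so $f(U)$ is open. As $U$ was arbitrary, $f$ is open.

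I do not expect a genuine obstacle here: the lemma is an immediate consequence of the definition once the saturation $f^{-1}(f(U))$ is correctly identified as the set to feed into the quotient criterion. The only point requiring slight care is that the quotient criterion is a biconditional, so that the openness of the saturated preimage $f^{-1}(f(U))$ transfers back to openness of the image $f(U)$ in the $Y$-topology; this is exactly what makes the converse direction work and is where the quotient hypothesis (as opposed to mere continuity) is genuinely used.
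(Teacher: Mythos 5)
Your proof is correct and follows essentially the same route as the paper's: the forward direction uses openness of $f$ together with continuity to conclude $f^{-1}(f(U))$ is open, and the converse applies the defining biconditional of a quotient map to $V = f(U)$. Your writeup is somewhat more explicit than the paper's (in particular in identifying $f^{-1}(f(U))$ as the saturation and in noting where continuity versus the full quotient property is used), but the argument is the same.
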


\begin{proof}
If $f$ is open, then $f(U)$ is open and so $f^{-1}(f(U))$.
For the converse, the fact that $f^{-1}(f(U))$ is open implies that $f(U)$ is open, because $f$ is a quotient map. Since this holds for every $U$ open, it follows that $f$ is open.
\end{proof}

We use this fact to show that $\pi$ is open. More precisely, we have that for every open $U\subset \mathcal{P}$,
$ \pi^{-1}(\pi(U)) = \bigcup_{\probs \in U} D_\probs$.
Unfortunately, the sets $D_\probs$ lines are not open subsets; so we 
cannot  deduce directly that $\bigcup_{\probs \in U} D_\probs$ is open. 
Let $\mathcal{S}$ denote the set of linear functions that send probabilities to probabilities,
 such that the ratio is preserved for the acceptable and also for the unacceptable outputs.
Since the functions in $\mathcal{S}$ are linear, they are also open maps. 
Moreover, we have $D_\probs = \cup_{S\in \mathcal{S}} S(\probs)$.
Therefore, we can write $\bigcup_{\probs\in U} D_\probs = \bigcup_{S\in\mathcal{S}} S(U)$.
Since $U$ is open, each $S(U)$ is also open, and thus so is the union over all $S$.
We conclude that $\pi^{-1}(\pi(U))$ is open.

We will now use the fact that $H_z$ is a manifold by \Cref{clm:preimagemanifoldbi}.
  It is known that for manifolds, connectedness and path-connectedness are equivalent.
  If $H_z$ were not connected, then there would be $U,V$ disjoint
non-empty open sets such that $H_z\subset U\cup V$. 
Note that the image of $\pi$ is connected: we start with a connected space, namely the whole probability space, and take quotient by a continuous function.
  Thus  $\pi(\mathcal{P})$ is connected, and $\pi{\mathcal{P}} = \pi(H_z) \subset \pi(U\cup V) = \pi(U)\cup \pi(V)$. Note that $\pi(U), \pi(V)$ are open because $\pi$ is an open map.
  We cannot have two disjoint open sets covering the connected set $\pi(\mathcal{P})$. Thus the sets $\pi(U)$ and $\pi(V)$ must overlap:
  there are $u\in U, v\in V$ points such that $\pi(u) = \pi(v)$. 
Thus there are two distinct
points in the pre-image of $\pi$ with the same value. Since
the pre-image is a $D_\probs$ line, this
contradicts \Cref{clm:everyvaluebiprp}.

  \textbf{3.} We show that $\Loss$ is constant on $H_z$ for any $z$,
  and that a $h:\RR\to\RR$ function exists such that $\Loss = h(\Loss_{\biprplosssubscript})$.

The idea will be that for any $a \neq b \in H_z$ we show
$\Loss(b)-\Loss(a)=0$.  We do this by computing $\Loss(b)-\Loss(a)$ as an integral of
a quantity, over a path in $H_z$ between $a$ and $b$, using the fact
that $H_z$ is path-connected. The quantity will involve a dot product
with the derivative of $\Loss$, and we will use part (1) to argue that this dot
product is always $0$.

We will again make use of \Cref{prop:pathint}, which states
  that  $F(b) - F(a) = \int_\gamma \langle F', d\gamma \rangle$.
 We can again apply this to $\Loss$ to get
  \begin{align*}
    \Loss(b) - \Loss(a) &= \int_\gamma \langle \Loss', d\gamma\rangle
\end{align*}

By \Cref{prop:orthogonal}, the inner product with $\Loss'_{\biprplosssubscript}$ in place of $\Loss'$ is $0$
within a constant surface $H_z$.
And again since $\Loss'$ is always a scalar multiple of $\Loss_{\biprplosssubscript}'$, we conclude
  In the last line,
we used the assumption that the gradients of $\Loss_{\biprplosssubscript}$ and $\Loss$ have a constant ratio.

The argument that $h$ is differentiable is almost identical to the argument for $\prp$.
\end{proof}

\pagebreak
%auto-ignore
\section{Label Dependent Noise Models for synthetic PLL Datasets}
\label{app:noisemodel}

In Subsection \ref{subsec:realsynexp} we described a model for adding distractors
synthetically to a real dataset. Here we provide more detail.

\cite{pllleveraging} introduces three PLL noise models for
classification with $\outdim = 10$ labels. The models are instance-
independent, i.e., the noise only depends on the true
label. \Cref{fig:cifar10_cases} presents results based on $5$ such
noise matrices. Of these the first three are taken directly from
\cite{pllleveraging} and the last two are harder variants created by
us.

The noise models are represented as $[\outdim \times \outdim]$
matrices $M$ where $M_{ij}$ represents the probability of label $j$
becoming a distractor given true label $i$.  In the following we
describe these $5$ noise matrices.

\begin{tabular}{m{0.08\textwidth} m{0.36\textwidth} m{0.48\textwidth}}
  {\bf Case} & {\bf Noise Matrix} & {\bf Description} \\
  \toprule
  {\bf  1}
  & $\left[\begin{smallmatrix}
      1 & 0.5 & 0 & 0 & 0 & 0 & 0 & 0 & 0 & 0 \\
      0 & 1 & 0.5 & 0 & 0 & 0 & 0 & 0 & 0 & 0 \\
      0 & 0 & 1 & 0.5 & 0 & 0 & 0 & 0 & 0 & 0 \\
      0 & 0 & 0 & 1 & 0.5 & 0 & 0 & 0 & 0 & 0 \\
      0 & 0 & 0 & 0 & 1 & 0.5 & 0 & 0 & 0 & 0 \\
      0 & 0 & 0 & 0 & 0 & 1 & 0.5 & 0 & 0 & 0 \\
      0 & 0 & 0 & 0 & 0 & 0 & 1 & 0.5 & 0 & 0 \\
      0 & 0 & 0 & 0 & 0 & 0 & 0 & 1 & 0.5 & 0 \\
      0 & 0 & 0 & 0 & 0 & 0 & 0 & 0 & 1 & 0.5 \\
      0.5 & 0 & 0 & 0 & 0 & 0 & 0 & 0 & 0 & 1 \\
    \end{smallmatrix}\right]$
  & There is a single potential distractor for each true label, which is present with probability $0.5$. The expected number of distractors is $0.5$.
  \\
    {\bf  2}
    & $\left[\begin{smallmatrix}
        1 & 0.3 & 0 & 0 & 0 & 0 & 0 & 0 & 0 & 0.3 \\
        0.3 & 1 & 0.3 & 0 & 0 & 0 & 0 & 0 & 0 & 0 \\
        0 & 0.3 & 1 & 0.3 & 0 & 0 & 0 & 0 & 0 & 0 \\
        0 & 0 & 0.3 & 1 & 0.3 & 0 & 0 & 0 & 0 & 0 \\
        0 & 0 & 0 & 0.3 & 1 & 0.3 & 0 & 0 & 0 & 0 \\
        0 & 0 & 0 & 0 & 0.3 & 1 & 0.3 & 0 & 0 & 0 \\
        0 & 0 & 0 & 0 & 0 & 0.3 & 1 & 0.3 & 0 & 0 \\
        0 & 0 & 0 & 0 & 0 & 0 & 0.3 & 1 & 0.3 & 0 \\
        0 & 0 & 0 & 0 & 0 & 0 & 0 & 0.3 & 1 & 0.3 \\
        0.3 & 0 & 0 & 0 & 0 & 0 & 0 & 0 & 0.3 & 1 \\
      \end{smallmatrix}\right]$
    & There are two potential distractors for each true label, each of which is present with probability $0.3$. The expected number of distractors is $0.6$.
    \\
      {\bf  3}
      & $\left[\begin{smallmatrix}
          1 & 0.5 & 0.3 & 0.1 & 0 & 0 & 0 & 0.1 & 0.3 & 0.5 \\
          0.5 & 1 & 0.5 & 0.3 & 0.1 & 0 & 0 & 0 & 0.1 & 0.3 \\
          0.3 & 0.5 & 1 & 0.5 & 0.3 & 0.1 & 0 & 0 & 0 & 0.1 \\
          0.1 & 0.3 & 0.5 & 1 & 0.5 & 0.3 & 0.1 & 0 & 0 & 0 \\
          0 & 0.1 & 0.3 & 0.5 & 1 & 0.5 & 0.3 & 0.1 & 0 & 0 \\
          0 & 0 & 0.1 & 0.3 & 0.5 & 1 & 0.5 & 0.3 & 0.1 & 0 \\
          0 & 0 & 0 & 0.1 & 0.3 & 0.5 & 1 & 0.5 & 0.3 & 0.1 \\
          0.1 & 0 & 0 & 0 & 0.1 & 0.3 & 0.5 & 1 & 0.5 & 0.3 \\
          0.3 & 0.1 & 0 & 0 & 0 & 0.1 & 0.3 & 0.5 & 1 & 0.5 \\
          0.5 & 0.3 & 0.1 & 0 & 0 & 0 & 0.1 & 0.3 & 0.5 & 1 \\
        \end{smallmatrix}\right]$
      & For each true label, there are 2 potential distractors
      with probability $0.5$, 2 with probability $0.3$ and 2 with
      probability $0.1$. The expected number of distractors is $1.8$.
      \\
        {\bf  4}
        & $\left[\begin{smallmatrix}
            1 & 0.2 & 0.8 & 0.8 & 0.8 & 0.4 & 0.4 & 0.2 & 0.2 & 0.2 \\
            0.2 & 1 & 0.2 & 0.8 & 0.8 & 0.8 & 0.4 & 0.4 & 0.2 & 0.2 \\
            0.2 & 0.2 & 1 & 0.2 & 0.8 & 0.8 & 0.8 & 0.4 & 0.4 & 0.2 \\
            0.2 & 0.2 & 0.2 & 1 & 0.2 & 0.8 & 0.8 & 0.8 & 0.4 & 0.4 \\
            0.4 & 0.2 & 0.2 & 0.2 & 1 & 0.2 & 0.8 & 0.8 & 0.8 & 0.4 \\
            0.4 & 0.4 & 0.2 & 0.2 & 0.2 & 1 & 0.2 & 0.8 & 0.8 & 0.8 \\
            0.8 & 0.4 & 0.4 & 0.2 & 0.2 & 0.2 & 1 & 0.2 & 0.8 & 0.8 \\
            0.8 & 0.8 & 0.4 & 0.4 & 0.2 & 0.2 & 0.2 & 1 & 0.2 & 0.8 \\
            0.8 & 0.8 & 0.8 & 0.4 & 0.4 & 0.2 & 0.2 & 0.2 & 1 & 0.2 \\
            0.2 & 0.8 & 0.8 & 0.8 & 0.4 & 0.4 & 0.2 & 0.2 & 0.2 & 1 \\
          \end{smallmatrix}\right]$
        & For each true label, there are 4 potential distractors
        with probability $0.2$, 3 with probability $0.8$ and 2 with
        probability $0.4$. The expected number of distractors is $4$.
        \\
          {\bf  5}
          & $\left[\begin{smallmatrix}
              1 & 0.9 & 0.8 & 0.8 & 0.8 & 0.7 & 0.7 & 0.6 & 0.9 & 0.9 \\
              0.9 & 1 & 0.9 & 0.8 & 0.8 & 0.8 & 0.7 & 0.7 & 0.6 & 0.9 \\
              0.9 & 0.9 & 1 & 0.9 & 0.8 & 0.8 & 0.8 & 0.7 & 0.7 & 0.6 \\
              0.6 & 0.9 & 0.9 & 1 & 0.9 & 0.8 & 0.8 & 0.8 & 0.7 & 0.7 \\
              0.7 & 0.6 & 0.9 & 0.9 & 1 & 0.9 & 0.8 & 0.8 & 0.8 & 0.7 \\
              0.7 & 0.7 & 0.6 & 0.9 & 0.9 & 1 & 0.9 & 0.8 & 0.8 & 0.8 \\
              0.8 & 0.7 & 0.7 & 0.6 & 0.9 & 0.9 & 1 & 0.9 & 0.8 & 0.8 \\
              0.8 & 0.8 & 0.7 & 0.7 & 0.6 & 0.9 & 0.9 & 1 & 0.9 & 0.8 \\
              0.8 & 0.8 & 0.8 & 0.7 & 0.7 & 0.6 & 0.9 & 0.9 & 1 & 0.9 \\
              0.9 & 0.8 & 0.8 & 0.8 & 0.7 & 0.7 & 0.6 & 0.9 & 0.9 & 1 \\
            \end{smallmatrix}\right]$
          & For each true label, there are 3 potential distractors
          with probability $0.9$, 3 with probability $0.8$, 2 with probability
          $0.7$ and 1 with probability $0.6$. The expected number of distractors
          is $7.1$.
          \\
\end{tabular}

\newpage
\bibliography{paper}

\end{document}